\title{Data-dependent Sample Complexity of Deep Neural Networks via Lipschitz Augmentation}
\author{%
	Colin Wei \thanks{Stanford University, email: \texttt{colinwei@stanford.edu}}~
	\and
	Tengyu Ma \thanks{Stanford University, email: \texttt{tengyuma@stanford.edu}}}
\def\shownotes{1}  \ifnum\shownotes=1
\newcommand{\authnote}[2]{$\ll$\textsf{\footnotesize #1 notes: #2}$\gg$}
\newcommand{\authnote}[2]{}
\newtheorem{proposition}{Proposition}[section]
\newtheorem{theorem}{Theorem}[section]
\newtheorem{lemma}[theorem]{Lemma}
\newtheorem{definition}[theorem]{Definition}
\newtheorem{example}[theorem]{Example}
\newtheorem{claim}[theorem]{Claim}
\newcommand{\supp}{\text{supp}}
\newcommand{\E}{\mathbb{E}}
\newcommand{\R}{\mathbb{R}}
\newcommand{\1}{\mathbbm{1}}
\newcommand{\poly}{\textup{poly}}
\newcommand{\Gnorm}[1]{{\left\vert\kern-0.25ex\left\vert\kern-0.25ex\left\vert #1 
		\right\vert\kern-0.25ex\right\vert\kern-0.25ex\right\vert}}
	\newcommand{\gnorm}[1]{{\vert\kern-0.25ex\vert\kern-0.25ex\vert #1 
\vert\kern-0.25ex\vert\kern-0.25ex\vert}}
\newcommand{\one}[1][]{\mathbbm{1}_{#1}}
\newcommand{\cover}{\mathcal{N}}
\newcommand{\opnorm}[1]{\|#1 \|_{\textup{op}}}
\newcommand{\Opnorm}[1]{\left\|#1\right\|_{\textup{op}}}
\newcommand{\ot}{\leftarrow}
\newcommand{\ulip}{\bar{\kappa}} \newcommand{\Exp}{\mathop{\mathbb E}\displaylimits}
\newcommand{\cut}[2]{#1^{\backslash #2}}
 \newcommand{\depthnn}{r} 
\newcommand{\figaugmentation}{3}
\newcommand{\augG}{\widetilde{\mathcal{G}}}
\newcommand{\augV}{\widetilde{\mathcal{V}}}
\newcommand{\augE}{\widetilde{\mathcal{E}}}
\newcommand{\augR}{\widetilde{\fR}}
\newcommand{\laug}{\tilde{\kappa}}
\newcommand{\depth}{p}
\newcommand{\pr}{\textup{pr}}
\newcommand{\rad}{\textup{Rad}_n} \newcommand{\kappaj}[1]{\kappa^{\textup{jacobian}, (#1)}}
\newcommand{\kapparj}[1]{\kappa^{\textup{relu-jacobian}, (#1)}}
\newcommand{\kappah}[1]{\kappa^{\textup{hidden}, (#1)}}
\newcommand{\kapparh}[1]{\kappa^{\textup{relu-hidden}, (#1)}}
\newcommand{\kapparnh}[1]{\kappa^{\textup{rnn-hidden}, (#1)}}
\newcommand{\kapparnj}[1]{\kappa^{\textup{rnn-jacobian}, (#1)}}
\newcommand{\cF}{\mathcal{F}}
\newcommand{\cG}{\mathcal{G}}
\newcommand{\cL}{\mathcal{L}}
\newcommand{\cU}{\mathcal{U}}
\newcommand{\cV}{\mathcal{V}}
\newcommand{\cZ}{\mathcal{Z}}
\newcommand{\fR}{\mathfrak{R}}
\newcommand{\cW}{\mathcal{W}}
\newcommand{\cI}{\mathcal{I}}
\newcommand{\cD}{\mathcal{D}}
\newcommand{\cE}{\mathcal{E}}
\newcommand{\cS}{\mathcal{S}}
\newcommand{\hath}{\hat{h}}
\newcommand{\hatcU}{\widehat{\mathcal{U}}}
\newcommand{\hatu}{\hat{u}}
\newcommand{\hatw}{\hat{w}}
\newcommand{\tildeG}{\widetilde{G}}
\newcommand{\tildeR}{\widetilde{R}}
\begin{document}

\maketitle

\newcommand{\cLip}{\mathcal{K}}
\newcommand{\depthlocal}{q}

\begin{abstract}
	Existing Rademacher complexity bounds for neural networks rely only on norm control of the weight matrices and depend exponentially on depth via a product of the matrix norms. Lower bounds show that this exponential dependence on depth is unavoidable when no additional properties of the training data are considered. We suspect that this conundrum comes from the fact that these bounds depend on the training data only through the margin. In practice, many \textit{data-dependent} techniques such as Batchnorm improve the generalization performance. For feedforward neural nets as well as RNNs, we obtain tighter Rademacher complexity bounds by considering additional data-dependent properties of the network: the norms of the hidden layers of the network, and the norms of the Jacobians of each layer with respect to all previous layers. Our bounds scale polynomially in depth when these empirical quantities are small, as is usually the case in practice. To obtain these bounds, we develop general tools for augmenting a sequence of functions to make their composition Lipschitz and then covering the augmented functions. Inspired by our theory, we directly regularize the network's Jacobians during training and empirically demonstrate that this improves test performance.
\end{abstract}
\section{Introduction}

Deep networks trained in practice typically use many more parameters than training examples, and therefore have the capacity to overfit to the training set~\citep{zhang2016understanding}. Fortunately, there are also many known (and unknown) sources of regularization during training: model capacity regularization such as simple weight decay, implicit or algorithmic regularization~\citep{gunasekar2017implicit,gunasekar2018implicit,soudry2018implicit,li2018algorithmic}, and finally regularization that depends on the training data such as Batchnorm~\citep{ioffe2015batch}, layer normalization~\citep{ba2016layer}, group normalization~\citep{wu2018group}, path normalization~\citep{neyshabur2015data}, dropout~\citep{srivastava2014dropout,wager2013dropout}, and regularizing the variance of activations~\citep{littwin2018regularizing}.

In many cases, it remains unclear why data-dependent regularization can improve the final test error ---  for example,  why Batchnorm empirically improves the generalization performance in practice~\citep{ioffe2015batch,zhang2018residual}. We do not have many tools for analyzing data-dependent regularization in the literature; with the exception of~\cite{dziugaite2018data},~\citep{arora2018stronger}~and~\citep{nagarajan2018deterministic} (with which we compare later in more detail), existing bounds typically consider properties of the weights of the learned model but little about their interactions with the training set. Formally, define a data-dependent property as any function of the learned model and the training data. 
In this work, we prove tighter generalization bounds by considering additional data-dependent properties of the network. Optimizing these bounds leads to data-dependent regularization techniques that empirically improve performance. 

One well-understood and important data-dependent property is the training margin:~\citet{bartlett2017spectrally} show that networks with larger normalized margins have better generalization guarantees. However, neural nets are  complex, so there remain many other data-dependent properties which could potentially lead to better generalization. We extend the bounds and techniques of~\citet{bartlett2017spectrally}~by considering additional properties: the hidden layer norms and interlayer Jacobian norms. 
Our final generalization bound (Theorem~\ref{thm:gen_union_bound}) is a polynomial in the hidden layer norms and Lipschitz constants on the training data. We give a simplified version below for expositional purposes. 
Let $F$ denote a neural network with \textit{smooth} activation $\phi$ parameterized by weight matrices $\{W^{(i)}\}_{i = 1}^r$ that perfectly classifies the training data with margin $\gamma > 0$. Let $t$ denote the maximum $\ell_2$ norm of any hidden layer or training datapoint, and $\sigma$ the maximum operator norm of any interlayer Jacobian, where both quantities are evaluated \textit{only on the training data}.  \begin{theorem}[Simplified version of Theorem~\ref{thm:gen_union_bound}]
	\label{thm:geninformal}
Suppose $\sigma, t \ge 1$. With probability $1- \delta$ over the training data, we can bound the test error of $F$ by \\
\resizebox{1\linewidth}{!}{
	\begin{minipage}{\linewidth}
\begin{align*}
L_{\textup{0-1}}(F) \le \widetilde{O}\left(\frac{  (\frac{\sigma }{\gamma} + r^3 \sigma^2) t \left(1 +\sum_i \|{W^{(i)}}^\top\|_{2,1}^{2/3}\right)^{3/2} + r^2\sigma\left(1+\sum_{i} \|{W^{(i)}}\|_{1, 1}^{2/3}\right)^{3/2}}{\sqrt{n}} + r\sqrt{ \frac{\log(\frac{1}{\delta})}{n}}\right) 
\end{align*}
\end{minipage}}

The notation $\tilde{O}$ hides logarithmic factors in $d,r,\sigma, t$ and the matrix norms. The $\|\cdot\|_{2,1}$ norm is formally defined in Section~\ref{sec:notations}.\end{theorem}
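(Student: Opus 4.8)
The plan is to prove the bound by the standard route of margin-based generalization $\to$ Rademacher complexity $\to$ covering numbers, with the one nonstandard ingredient being the Lipschitz-augmentation device that replaces the product-of-operator-norms Lipschitz constant by the empirical quantities $\sigma$ and $t$. First I would invoke the usual margin generalization bound to control $L_{\textup{0-1}}(F)$ by the empirical margin loss (which is $0$ by assumption) plus the empirical Rademacher complexity of the class of networks satisfying the relevant norm constraints, plus the $\sqrt{\log(1/\delta)/n}$ deviation term. The first subtlety is that $\sigma$, $t$, and the weight-matrix norms are themselves data-dependent and random, so a single fixed function class does not suffice: I would instead fix a logarithmically spaced grid of candidate values for $(\sigma, t, \{\|{W^{(i)}}^\top\|_{2,1}\}_i, \{\|W^{(i)}\|_{1,1}\}_i)$, prove the bound for the subclass realizing each grid point, and union-bound over the grid. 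This is what generates the $\widetilde{O}(\cdot)$ and, because there are $\mathrm{poly}(r)$ coordinates each contributing a $\log$ factor, the $r\sqrt{\log(1/\delta)/n}$ additive term.

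For a fixed subclass I would bound the empirical Rademacher complexity by Dudley's entropy integral, reducing everything to covering the network outputs on the $n$ training points in $\ell_2$. The core difficulty — and the reason naive layer-by-layer covering gives a bound exponential in depth — is that a perturbation $\Delta^{(i)}$ of an early weight matrix propagates to the output of $W^{(r)}\phi(\cdots\phi(W^{(1)}x))$ with a factor as large as $\prod_j \opnorm{W^{(j)}}$. This is where I would use augmentation: rather than covering the raw layer maps, I augment the sequence of per-layer functions with auxiliary functions that output the hidden activations and the interlayer Jacobians, and prove the key sensitivity lemma — a perturbation $\Delta^{(i)}$ to the $i$-th map affects the augmented composition only through a factor governed by the empirical interlayer-Jacobian norm $\sigma$ of the layers above and the empirical hidden-layer norm $t$ of the layers below, both \emph{polynomially} rather than as a product of operator norms. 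Smoothness of $\phi$ is what makes this possible: it lets the second-order (Jacobian-of-Jacobian) terms that arise when perturbing a layer be bounded uniformly over the cover by the empirical Jacobian norms, with the accumulation over $r$ layers producing the $r^3\sigma^2$ factor; the $\sigma/\gamma$ factor is simply the margin normalization composed with the first-order Jacobian propagation.

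Third, with the augmented composition in hand I would assemble the cover layer by layer. The error that flows through a Jacobian term is controlled by an $\ell_{2,1}$-type empirical-$\ell_\infty$ covering of $W^{(i)}$ in the spirit of the Maurey/sparsification argument of \citet{bartlett2017spectrally}, contributing the $(1+\sum_i \|{W^{(i)}}^\top\|_{2,1}^{2/3})^{3/2}$ aggregation after the Dudley integral; the error that flows through the hidden-activation terms interacts with the entrywise $\ell_{1,1}$ norm of $W^{(i)}$ and contributes the parallel $r^2\sigma(1+\sum_i \|W^{(i)}\|_{1,1}^{2/3})^{3/2}$ term. Summing the per-layer log-covering numbers weighted by the propagation factors from the sensitivity lemma, taking the entropy integral, and then union-bounding over the grid gives exactly the stated form. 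I expect the main obstacle to be the second step — establishing the augmented-composition sensitivity lemma with the correct $\mathrm{poly}(r)$ dependence — because it requires tracking carefully how smoothness converts the naive weight-norm Lipschitz bounds into bounds in terms of the empirical Jacobian and hidden-layer norms, uniformly over every element of every layer's cover; once that lemma is in place the remaining combinatorics of combining covers and applying Dudley is careful but routine.
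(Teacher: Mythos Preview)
Your high-level route---margin bound, Rademacher via Dudley's integral, layerwise covering, and a union bound over a logarithmic grid of the data-dependent parameters to produce the $r\sqrt{\log(1/\delta)/n}$ term---matches the paper's exactly, and you have correctly identified that the crux is a sensitivity/Lipschitz lemma that replaces the product of operator norms by a polynomial in $\sigma$ and $t$.

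The one genuine gap is in your description of the augmentation. You propose to ``augment the sequence of per-layer functions with auxiliary functions that output the hidden activations and the interlayer Jacobians.'' But merely \emph{tracking} these quantities as additional outputs does not make the composition Lipschitz with a polynomial constant: on a worst-case input the Jacobians can still blow up, and the augmented output varies just as wildly as the original. The paper's mechanism is instead to multiply the loss by soft indicators
\[
\prod_{i\le j}\one[\le\kappa_{j\ot i}]\bigl(\opnorm{DV_j\cdots DV_i}\bigr)\cdot\prod_i\one[\le s_i]\bigl(\|V_i\|\bigr),
\]
so that the augmented loss \emph{collapses to a constant} whenever any interlayer Jacobian or hidden-layer norm exceeds its threshold. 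It is this truncation---not the tracking---that makes the release-Lipschitz lemma (your ``sensitivity lemma'') provable with constants polynomial in the thresholds $\kappa_{j\ot i}, s_i$; these thresholds are then set to the empirical values on the training data so that the training loss is unchanged while the worst-case Lipschitz constant becomes controlled. A secondary imprecision: you have the two matrix norms' roles transposed. The $\|\cdot\|_{2,1}$ cover is for the layer map $h\mapsto Wh$ (the $V$-nodes), and its error propagates with constant $\kappah{i}\approx\sigma/\gamma+r^3\sigma^2$; the $\|\cdot\|_{1,1}$ cover is for the Jacobian node, i.e.\ the constant map $h\mapsto W$ itself (the $J$-nodes), whose error propagates with $\kappaj{i}\approx r^2\sigma$.
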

The degree of the dependencies on $\sigma$ may look unconventional --- this is mostly due to the dramatic simplification from our full Theorem~\ref{thm:gen_union_bound}, which obtains a more natural bound that considers all interlayer Jacobian norms instead of only the maximum. Our bound is polynomial in $t, \sigma$, and network depth, but independent of width. In practice, $t$ and $\sigma$ have been observed to be much smaller than the product of matrix norms~\citep{arora2018stronger,nagarajan2018deterministic}. We remark that our bound is not homogeneous because the smooth activations are not homogeneous and can cause a second order effect on the network outputs. 

In contrast, the bounds of~\citet{neyshabur2015norm,bartlett2017spectrally,neyshabur2017pac,golowich2017size}~all depend on a product of norms of weight matrices which scales exponentially in the network depth, and which can be thought of as a worst case Lipschitz constant of the network. In fact, lower bounds show that with only norm-based constraints on the hypothesis class, this product of norms is unavoidable for Rademacher complexity-based approaches (see for example Theorem 3.4 of~\citep{bartlett2017spectrally}~and Theorem 7 of~\citep{golowich2017size}). We circumvent these lower bounds by additionally considering the model's Jacobian norms -- empirical Lipschitz constants which are much smaller than the product of norms because they are only computed on the training data. 

The bound of~\citet{arora2018stronger}~depends on similar quantities related to noise stability but only holds for a compressed network and not the original. 
The bound of~\citet{nagarajan2018deterministic}~also depends polynomially on the Jacobian norms rather than exponentially in depth; however these bounds also require that the inputs to the activation layers are bounded away from 0, an assumption that does not hold in practice~\citep{nagarajan2018deterministic}. We do not require this assumption because we consider networks with smooth activations, whereas the bound of~\citet{nagarajan2018deterministic} applies to relu nets.

In Section~\ref{sec:recurrent}, we additionally present a generalization bound for recurrent neural nets that scales polynomially in the same quantities as our bound for standard neural nets. Prior generalization bounds for RNNs either require parameter counting~\citep{koiran1997vapnik}~or depend exponentially on depth~\citep{zhang2018stabilizing,chen2019on}. 
\begin{figure}
	\begin{minipage}[c]{0.4\textwidth}
		\includegraphics[width=\textwidth]{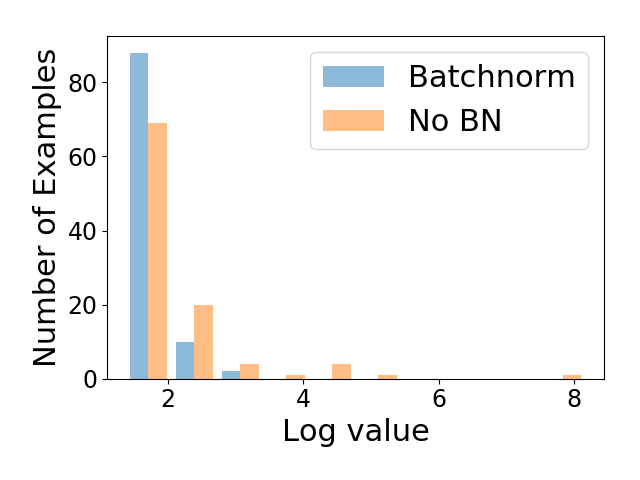}
	\end{minipage}\hfill
	\begin{minipage}[c]{0.5\textwidth}
			\caption{Let $h_1, h_2, h_3$ denote the 1st, 2nd, and 3rd blocks of a 16-layer WideResNet and $J_i$ the Jacobian of the output w.r.t layer $i$. In log-scale we plot a histogram of the 100 largest values on the training set of $\sum_{i = 1}^3 \|h_i\| \|J_i\|/\gamma$ for a WideResNet trained with and without Batchnorm on CIFAR10, where $\gamma$ is the example's margin.}	
		\label{fig:figbatchnorm}	
	\end{minipage}
	\vspace{-0.5cm}
\end{figure}

In Figure~\ref{fig:figbatchnorm}, we plot the distribution over the sum of products of Jacobian and hidden layer norms (which is the leading term of the bound in our full Theorem~\ref{thm:gen_union_bound}) for a WideResNet~\citep{zagoruyko2016wide} trained with and without Batchnorm. Figure~\ref{fig:figbatchnorm}~shows that this sum blows up for networks trained without Batchnorm, indicating that the terms in our bound are empirically relevant for explaining data-dependent regularization.

An immediate bottleneck in proving Theorem~\ref{thm:geninformal} is that standard tools require fixing the hypothesis class before looking at training data, whereas conditioning on data-dependent properties makes the hypothesis class a random object depending on the data. A natural attempt is to augment the loss with indicators on the intended data-dependent quantities $\{\gamma_i\}$, with desired bounds $\{\kappa_i\}$ as follows:
\begin{align*}
l_{\textup{aug}} = (l_{\textup{old}}-1) \prod_{\textup{properties }\gamma_i}\one(\gamma_i \le \kappa_i) +1 \end{align*}
This augmented loss upper bounds the original loss $l_{\textup{old}} \in [0, 1]$, with equality when all properties hold for the training data. The augmentation lets us reason about a hypothesis class that is independent of the data by directly conditioning on data-dependent properties in the loss. The main challenges with this approach are twofold: 1) designing the correct set of properties and 2) proving generalization of the final loss $l_{\textup{aug}}$, a complicated function of the network. 

Our main tool is covering numbers: Lemma~\ref{lem:composition-simple-k} shows that a composition of functions (i.e, a neural network) has low covering number if the output is worst-case Lipschitz at each level of the composition and internal layers are bounded in norm. Unfortunately, the standard neural net loss satisfies neither of these properties (without exponential dependencies on depth). However, by augmenting with properties $\gamma$, we can guarantee they hold. One technical challenge is that augmenting the loss makes it harder to reason about covering, as the indicators can introduce complicated dependencies between layers. 

Our main technical contributions are: 1) We demonstrate how to augment a composition of functions to make it Lipschitz at all layers, and thus easy to cover. Before this augmentation, the Lipschitz constant could scale exponentially in depth (Theorem~\ref{thm:inf_lip}).  2) We reduce covering a complicated sequence of operations to covering the individual operations (Theorem~\ref{thm:inf_covering_graph}).  3) By combining 1 and 2, it follows cleanly that our augmented loss on neural networks has low covering number and therefore has good generalization. Our bound scales polynomially, not exponentially, in the depth of the network when the network has good Lipschitz constants on the training data (Theorem~\ref{thm:gen_union_bound}). 

As a complement to the main theoretical results in this paper, we show empirically in Section~\ref{sec:experiments} that directly regularizing our complexity measure can result in improved test performance.

\section{Related Work}\label{sec:related}

\citet{zhang2016understanding}~and~\citet{neyshabur2017exploring}~show that generalizaton in deep learning often disobeys conventional statistical wisdom. One of the approaches adopted torwards explaining generalization is implicit regularization; numerous recent works have shown that the training method prefers minimum norm or maximum margin solutions~\citep{soudry2018implicit,li2018algorithmic,ji2018risk,gunasekar2017implicit,gunasekar2018characterizing,gunasekar2018implicit,wei2018margin}. With the exception of~\citep{wei2018margin}, these papers analyze simplified settings and do not apply to larger neural networks.  

This paper more closely follows a line of work related to Rademacher complexity bounds for neural networks \citep{neyshabur2015norm,neyshabur2018towards,bartlett2017spectrally,golowich2017size,li2018tighter}. For a comparison, see the introduction. There has also been work on deriving PAC-Bayesian bounds for generalization~\citep{neyshabur2017exploring,neyshabur2017pac,nagarajan2018deterministic}.~\citet{dziugaite2017computing} optimize a bound to compute non-vacuous bounds for generalization error. Another line of work analyzes neural nets via their behavior on noisy inputs.~\citet{neyshabur2017exploring}~prove PAC-Bayesian generalization bounds for random networks under assumptions on the network's empirical noise stability.~\citet{arora2018stronger}~develop a notion of noise stability that allows for compression of a network under an appropriate noise distribution. They additionally prove that the compressed network generalizes well. In comparison, our Lipschitzness construction also relates to noise stability, but our bounds hold for the original network and do not rely on the particular noise distribution. 

\citet{nagarajan2018deterministic}~use PAC-Bayes bounds to prove a similar result as ours for generalization of a network with bounded hidden layer and Jacobian norms. The main difference is that their bounds depend on the inverse relu preactivations, which are found to be large in practice~\citep{nagarajan2018deterministic}; our bounds apply to smooth activations and avoid this dependence at the cost of an additional factor in the Jacobian norm (shown to be empirically small). We note that the choice of smooth activations is empirically justified~\citep{clevert2015fast,klambauer2017self}. We also work with Rademacher complexity and covering numbers instead of the PAC-Bayes framework. It is relatively simple to adapt our techniques to relu networks to produce a similar result to that of \citet{nagarajan2018deterministic}, by conditioning on large pre-activation values in our Lipschitz augmentation step (see Section~\ref{subsec:lipschitzaug_8}). In Section~\ref{app:relu}, we provide a sketch of this argument and obtain a bound for relu networks that is polynomial in hidden layer and Jacobian norms and inverse preactivations. However, it is not obvious how to adapt the argument of~\citet{nagarajan2018deterministic}~to activation functions whose derivatives are not piecewise-constant.

\citet{dziugaite2018data,dziugaite2017entropy}~develop PAC-Bayes bounds for data-dependent priors obtained via some differentially private mechanism. Their bounds are for a randomized classifier sampled from the prior, whereas we analyze a deterministic, fixed model.

\citet{novak2018sensitivity}~empirically demonstrate that the sensitivity of a neural net to input noise correlates with generalization.~\citet{sokolic2017robust,krueger2015regularizing} propose stability-based regularizers for neural nets.~\citet{hardt2015train} show that models which train faster tend to generalize better.~\citet{keskar2016large,hoffer2017train}~study the effect of batch size on generalization.~\citet{brutzkus2017sgd}~analyze a neural network trained on hinge loss and linearly separable data and show that gradient descent recovers the exact separating hyperplane.

\section{Notation}\label{sec:notations}
Let $\one(E)$ be the indicator function of event $E$. Let $l_{\textup{0-1}}$ denote the standard 0-1 loss. For $\kappa\ge 0$, Let $\one[\le \kappa](\cdot)$ be the softened indicator function defined as 
\begin{align}
\one[\le \kappa](t) =  \left\{\begin{array}{ll}
1 & \textup{ if } t \le \kappa \\
2 - t/\kappa & \textup{ if } \kappa \le t \le 2\kappa \\
0 & \textup{ if } 2\kappa \le t  \\
\end{array}\right.\nonumber
\end{align}
Note that $\one[\le \kappa]$ is $\kappa^{-1}$-Lipschitz. Define the norm $\| \cdot \|_{p, q}$ by $\|A\|_{p, q} \triangleq \Big(\sum_{j} \big(\sum_i A_{i, j}^p \big)^{q/p}\Big)^{1/q}$.
Let $P_n$ be a uniform distribution over $n$ points $\{x_1,\dots, x_n\}\subset \cD_x$. Let $f$ be a function that maps $\cD_x$ to some output space $\cD_f$, and assume both spaces are equipped with some norms $\gnorm{\cdot}$ (these norms can be different but we use the same notations for them). Then the $L_2(P_n, \gnorm{\cdot})$ norm of the function $f$ is defined as 
$$
\|f\|_{L_2(P_n, \gnorm{\cdot})} \triangleq  \left(\frac{1}{n} \sum_i \gnorm{f(x_i)}^2\right)^{1/2}
$$
We use $D$ to denote total derivative operator, and thus $Df(x)$ represents the Jacobian of $f$ at $x$. Suppose $\cF$ is a family of functions from $\cD_x$ to $\cD_f$. 
Let $\mathcal{C}(\epsilon, \mathcal{F}, \rho)$ be the covering number of the function class $\cF$ w.r.t. metric $\rho$ with cover size $\epsilon$. 
In many cases, the covering number depends on the examples through the norms of the examples, and in this paper we only work with these cases. 
Thus, we let $\cover(\epsilon,\mathcal{F}, s)$ be the maximum covering number for any possible $n$ data points with norm not larger than $s$. 
Precisely, if we define $\mathcal{P}_{n,s}$ to be the set of all possible uniform distributions supported on $n$ data points with norms not larger than $s$, then 
$$
\cover(\epsilon, \mathcal{F}, s) \triangleq \sup_{P_n \in \mathcal{P}_{n,s}} \mathcal{C}(\epsilon, \mathcal{F}, L_2(P_n, \gnorm{\cdot}))
$$
Suppose $\mathcal{F}$ contains functions with $m$ inputs that map from a tensor product $m$ Euclidean space to Euclidean space, then we define 
$$
\cover(\epsilon, \mathcal{F}, (s_1,\dots, s_m)) \triangleq \sup_{\substack{P: \forall (x_1,\dots, x_m) \in \supp(P)\\ \|x_i\|\le s_i}} \mathcal{C}(\epsilon, \mathcal{F}, L_2(P))
$$

\section{Overview of Main Results and Proof Techniques}
\label{sec:overview}
In this section, we give a general overview of the main technical results and outline how to prove them with minimal notation. We will point to later sections where many statements are formalized. 

To simplify the core mathematical reasoning, we abstract feed-forward neural networks (including residual networks) as compositions of operations. Let $\cF_1,\dots, \cF_k$ be a sequence of families of functions (corresponding to families of single layer neural nets in the deep learning setting) and $\ell$ be a Lipschitz loss function taking values in $[0,1]$. We study the compositions of $\ell$ and functions in $\cF_i$'s: 
\begin{align}
\mathcal{L} \triangleq \ell\circ \cF_k \circ \cF_{k-1}\cdots \circ \cF_1 = \{\ell \circ f_k \circ f_{k-1}\circ \cdots \circ f_1 : \forall i, f_i \in \cF_i\} \label{eq:sequential_family}
\end{align}
Textbook results~\citep{bartlett2002rademacher} bound the generalization error by the Rademacher complexity (formally defined in Section~\ref{sec:neural_net_app}) of the family of losses $\mathcal{L}$, which in turn is bounded by the covering number of $\cL$ through Dudley's entropy integral theorem~\citep{dudley1967sizes}. 
Modulo minor nuances, the key remaining question is to give a tight covering number bound for the family $\cL$ for every target cover size $\epsilon$ in a certain range (often, considering $\epsilon \in [1/n^{O(1)}, 1]$ suffices). 

As alluded to in the introduction, generalization error bounds obtained through this machinery only depend on the (training) data through the margin in the loss function, and our aim is to utilize more data-dependent properties. 
Towards understanding which data-dependent properties are useful to regularize, it is helpful to revisit the data-independent covering technique of~\citep{bartlett2017spectrally}, the skeleton of which is summarized below. 

Recall that $\cover(\epsilon, \cF, s)$ denotes the covering number for arbitrary $n$ data points with norm less than $s$. 
The following lemma says that if the intermediate variable (or the hidden layer) $f_{i}\circ \cdots \circ f_1(x)$ is bounded, and the composition of the rest of the functions $l \circ f_{k}\circ \cdots \circ f_{i + 1}(x)$ is Lipschitz, then small covering number of local functions imply small covering number for the composition of functions. 
\begin{lemma}\label{lem:composition-simple-k}[abstraction of techniques in~\citep{bartlett2017spectrally}]
In the context above, assume: 
	\begin{itemize}
		\item[1.] for any $x\in \supp(P_n)$, $\gnorm{f_{i}\circ \cdots \circ f_1(x)} \le s_i$. 
		\item[2.] $\ell\circ f_k \circ \cdots \circ f_{i+1}$ is $\kappa_{i}$-Lipschitz for all $i$.  
	\end{itemize}
	Then, we have the following covering number bound for $\cL$ (for any choice of $\epsilon_1,\dots, \epsilon_k > 0$):	$$
	\log \cover(\sum_{i=1}^{k}\kappa_i \epsilon_i, \cL, s_0) \le \sum_{i=1}^k 	\log \cover(\epsilon_i, \cF_i, s_{i - 1})
	$$
\end{lemma}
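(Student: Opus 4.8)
The plan is a layer-by-layer covering (or ``peeling'') argument in the style of \citet{bartlett2017spectrally}: build an $\epsilon_i$-cover of each single family $\cF_i$, take the cover of $\cL$ to be all compositions of the chosen cover elements, and control the approximation error by swapping in the approximate layers one at a time, paying a factor of $\kappa_i$ at step $i$ via assumption 2.

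Concretely, fix $n$ points $x_1,\dots,x_n$ with $\gnorm{x_j}\le s_0$ and let $P_n$ be their uniform distribution; since $\cover(\cdot,\cL,s_0)$ is a supremum over exactly such configurations, it suffices to bound $\mathcal C(\sum_{i=1}^k\kappa_i\epsilon_i,\cL,L_2(P_n,\gnorm\cdot))$. I then construct the cover recursively. Let $\mathcal C_1$ be a minimal $\epsilon_1$-cover of $\cF_1$ in $L_2(P_n,\gnorm\cdot)$, of size at most $\cover(\epsilon_1,\cF_1,s_0)$. Having selected $\hat f_1\in\mathcal C_1,\dots,\hat f_{i-1}\in\mathcal C_{i-1}$, write $\hat g_{i-1}=\hat f_{i-1}\circ\cdots\circ\hat f_1$ (with $\hat g_0=\mathrm{id}$), let $Q_{i-1}$ be the uniform distribution over $\hat g_{i-1}(x_1),\dots,\hat g_{i-1}(x_n)$, and let the next-layer cover $\mathcal C_i$ (depending on $\hat f_1,\dots,\hat f_{i-1}$) be a minimal $\epsilon_i$-cover of $\cF_i$ in $L_2(Q_{i-1},\gnorm\cdot)$. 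Assuming all the points $\hat g_{i-1}(x_j)$ have norm at most $s_{i-1}$ (addressed below), this cover has size at most $\cover(\epsilon_i,\cF_i,s_{i-1})$. Taking $\widehat\cL$ to be the set of all compositions $\ell\circ\hat f_k\circ\cdots\circ\hat f_1$ obtained this way gives $|\widehat\cL|\le\prod_{i=1}^k\cover(\epsilon_i,\cF_i,s_{i-1})$, hence the claimed bound on $\log\cover$ once we check that $\widehat\cL$ is a cover at scale $\sum_i\kappa_i\epsilon_i$.

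For the covering property, fix any $\ell\circ f_k\circ\cdots\circ f_1\in\cL$ and greedily choose $\hat f_1,\dots,\hat f_k$ so that $\|f_i-\hat f_i\|_{L_2(Q_{i-1},\gnorm\cdot)}\le\epsilon_i$ at every step, which is possible by the defining property of $\mathcal C_i$. Introduce the hybrid functions $B_i=\ell\circ f_k\circ\cdots\circ f_{i+1}\circ\hat f_i\circ\cdots\circ\hat f_1$ for $i=0,\dots,k$, so that $B_0$ is the target and $B_k\in\widehat\cL$, and apply the triangle inequality: $\|B_0-B_k\|_{L_2(P_n,\gnorm\cdot)}\le\sum_{i=1}^k\|B_{i-1}-B_i\|_{L_2(P_n,\gnorm\cdot)}$. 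The functions $B_{i-1}$ and $B_i$ feed $f_i(\hat g_{i-1}(x_j))$ and $\hat f_i(\hat g_{i-1}(x_j))$, respectively, through the \emph{same} map $\ell\circ f_k\circ\cdots\circ f_{i+1}$, which is $\kappa_i$-Lipschitz by assumption 2; therefore $\|B_{i-1}-B_i\|_{L_2(P_n,\gnorm\cdot)}\le\kappa_i\|f_i-\hat f_i\|_{L_2(Q_{i-1},\gnorm\cdot)}\le\kappa_i\epsilon_i$, and summing gives $\|B_0-B_k\|_{L_2(P_n,\gnorm\cdot)}\le\sum_i\kappa_i\epsilon_i$, as needed.

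The one step I expect to need genuine care is the deferred claim that the \emph{approximate} intermediate representations $\hat g_{i-1}(x_j)$ have norm at most $s_{i-1}$ --- assumption 1 only guarantees this for the \emph{exact} representations $f_{i-1}\circ\cdots\circ f_1(x_j)$, and without norm control the per-layer cover of $\cF_i$ would have to be taken over a larger input ball. This is a standard but slightly delicate clipping issue: one arranges, without loss of generality for covering with respect to $L_2(P_n,\gnorm\cdot)$, that every intermediate representation (exact or approximate) stays inside the radius-$s_i$ ball, e.g.\ by composing each layer with the projection onto that ball, which is $1$-Lipschitz (so it does not increase any per-layer covering number) and which is innocuous on $\supp(P_n)$ since by assumption 1 the exact representations already lie in the balls and the projections act as the identity there --- though one must check that this modification does not spoil the Lipschitz constants $\kappa_i$, which pertain to the exact composition. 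Modulo this bookkeeping, the proof is just the recursive cover construction, the telescoping, the triangle inequality, and the Lipschitz hypothesis.
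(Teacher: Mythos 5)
Your recursive per-layer cover, hybrid/telescoping decomposition, and use of the $\kappa_i$-Lipschitz hypothesis to control each swap is exactly the Bartlett-style ``peeling'' argument that this lemma abstracts, and the telescoping half of the proof is correct. The gap you flagged — controlling $\gnorm{\hat g_{i-1}(x_j)}$ — is a genuine one, but your proposed projection fix does not work, and the caveat you append (``one must check that this modification does not spoil the Lipschitz constants $\kappa_i$'') is indeed fatal. Inserting a radial projection $\Pi_{s_j}$ between layers can blow up the Lipschitz constant of $\ell\circ f_k\circ\cdots\circ f_{i+1}$: the Lipschitz constant in assumption 2 controls the map only as a function on $\cD_i$, and a projection can move points off the ``well-behaved'' region into a region where the composition is much steeper. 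For a concrete failure, take $f_{i+1}(x)=(x,1)\in\R^2$ and $\ell(a,b)=a+M(b-1)$ for large $M$; then $\ell\circ f_{i+1}$ is $1$-Lipschitz, but $\ell\circ\Pi\circ f_{i+1}$ with $\Pi$ the projection onto the unit ball has Lipschitz constant of order $M$, because the projection bends the curve $x\mapsto(x,1)$ into the steep direction of $\ell$.

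The clean fix is simpler and uses what you already have. Assumption 1 is stated for the whole class $\cL$ (there is no fixed $f_1,\dots,f_i$), so it is universally quantified over $(f_1,\dots,f_i)\in\cF_1\times\cdots\times\cF_i$. Take each per-layer cover $\mathcal C_j$ to be a \emph{proper} (internal) cover with elements in $\cF_j$ — this always exists at the cost of a factor $2$ in resolution, which is absorbed into the $\epsilon_j$. Then $\hat g_{i-1}(x_j)=\hat f_{i-1}\circ\cdots\circ\hat f_1(x_j)$ is again an evaluation of a composition of members of $\cF_1,\dots,\cF_{i-1}$ at a point of $\supp(P_n)$, so assumption 1 gives $\gnorm{\hat g_{i-1}(x_j)}\le s_{i-1}$ directly, and the cover of $\cF_i$ on $L_2(Q_{i-1},\gnorm\cdot)$ has size at most $\cover(\epsilon_i,\cF_i,s_{i-1})$. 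The rest of your argument then closes without modification.

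For context, the paper does not actually prove Lemma~\ref{lem:composition-simple-k} — it is stated as motivation — and the formal machinery it does prove, Theorem~\ref{thm:covering_graph} via Lemma~\ref{lem:composition-shared}, resolves this same difficulty by a different route: instead of assuming boundedness of the intermediate representations, it assumes the output \emph{collapses to a constant} when any released intermediate variable exceeds its norm bound. Then the cover of $\cU$ can be patched to equal that constant outside the ball, so the cover error on out-of-ball points is identically zero, and no norm control on the approximate representations is needed. This collapse condition is exactly what the soft indicators $\one[\le s_i](\|h_i\|)$ in the augmented loss are designed to provide.
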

The lemma says that the log covering number and the cover size scale linearly if the Lipschitzness parameters and norms remain constant. However, these two quantities, in the worst case, can easily scale exponentially in the number of layers, and they are the main sources of the dependency of product of spectral/Frobenius norms of layers in~\citep{golowich2017size,bartlett2017spectrally,neyshabur2017pac,neyshabur2015norm} More precisely, the worst-case Lipschitzness over all possible data points can be exponentially bigger than the average/typical Lipschitzness for examples randomly drawn from the training or test distribution. We aim to bridge this gap by deriving a generalization error bound that only depends on the Lipschitzness and boundedness on the training examples. 

Our general approach, partially inspired by margin theory, is to augment the loss function by soft indicators of Lipschitzness and boundedness. Let $h_i$ be shorthand notation for $f_i\circ \cdots \circ f_1$, the $i$-th intermediate value, and let $z(x) \triangleq \ell(h_k(x))$ be the original loss. Our first attempt considered:
\begin{align}
\tilde{z}'(x)  \triangleq 1 +  (z(x)-1) \cdot \prod_{i = 1}^k \one[\le s_i] (\|h_i(x)\|) \cdot \prod_{i=1}^k \one[\le \kappa_{i}] (\Opnorm{{\partial z}/{\partial h_i}}) \label{eq:z_prime_expr}
\end{align}
Since $z$ takes values in $[0, 1]$, the augmented loss $\tilde{z}'$ is an upper bound on the original loss $z$ with equality when all the indicators are satisfied with value $1$. The hope was that the indicators would flatten those regions where $h_i$ is not bounded and where $z$ is not Lipschitz in $h_i$. 
However, there are two immediate issues. 
 First, the soft indicators functions are themselves functions of $h_i$. It's unclear whether the augmented function can be Lipschitz with a small constant w.r.t $h_i$, and thus we cannot apply Lemma~\ref{lem:composition-simple-k}.\footnote{A priori, it's also unclear what ``Lipschitz in $h_i$'' means since the $\bar{z}'$ does not only depend on $x$ through $h_i$. We will formalize this in later section after defining proper language about dependencies between variables.\label{footnote:1}} 
Second, the augmented loss function becomes complicated and doesn't fall into the sequential computation form of Lemma~\ref{lem:composition-simple-k}, and therefore even if Lipschitzness is not an issue, we need new covering techniques beyond Lemma~\ref{lem:composition-simple-k}. 

We address the first issue by \textit{recursively} augmenting the loss function by multiplying more soft indicators that bound the Jacobian of the current function. The final loss $\tilde{z}$ reads:\footnote{Unlike in equation~\eqref{eq:z_prime_expr}, we don't augment the Jacobian of the loss w.r.t the layers. This allows us to deal with non-differentiable loss functions such as ramp loss.}
\begin{align}
\tilde{z}(x)  \triangleq 1 +  (z(x)-1) \cdot \prod_{i = 1}^k \one[\le s_i] (\|h_i(x)\| ) \cdot \prod_{1\le i\le j\le k} \one[\le \kappa_{j\ot i}] (\opnorm{D f_j\circ\cdots\circ f_{i}[h_{i-1}] }) \label{eq:tilde_z_loss}
\end{align}
where $\kappa_{j \ot i}$'s are user-defined parameters. For our application to neural nets, we instantiate $s_i$ as the maximum norm of layer $i$ and $\kappa_{j \ot i}$ as the maximum norm of the Jacobian between layer $j$ and $i$ across the training dataset. A polynomial in $\kappa, s$ can be shown to bound the worst-case Lipschitzness of the function w.r.t. the intermediate variables in the formula above.\footnote{As mentioned in footnote~\ref{footnote:1}, we will formalize the precise meaning of Lipschitzness later.} By our choice of $\kappa$, $s$, a) the training loss is unaffected by the augmentation and b) the worst-case Lipschitzness of the loss is controlled by a polynomial of the Lipschitzness on the training examples. We provide an informal overview of our augmentation procedure in Section~\ref{subsec:lipschitzaug_8} and formally state definitions and guarantees in Section~\ref{sec:lipschitzaug}. 
The downside of the Lipschitz augmentation is that it further complicates the loss function. Towards covering the loss function (assuming Lipschitz properties) efficiently, we extend Lemma~\ref{lem:composition-simple-k}, which works for sequential compositions of functions, to general families of formulas, or computational graphs. We informally overview this extension in Section~\ref{sec:computation_graph_8} using a minimal set of notations, and in Section~\ref{sec:computation_graph}, we give a formal presentation of these results.

Combining the Lipschitz augmentation and graphs covering results, we obtain a covering number bound of augmented loss. The theorem below is formally stated in Theorem~\ref{thm:sequential_covering}~of Section~\ref{sec:lipschitzaug}. 
\begin{theorem} \label{thm:sequential_cover_overview}
	Let $\tilde{\cL}$ be the family of augmented losses defined in~\eqref{eq:tilde_z_loss}. For cover resolutions $\epsilon_i$ and values $\tilde{\kappa}_i$ that are polynomial in the parameters $s_i, \kappa_{j \ot i}$, we obtain the following covering number bound for $\tilde{\cL}$:
	\begin{align*}
	\log \cover(\sum_i \epsilon_i \tilde{\kappa}_i, \tilde{\cL}, s_0) \le \sum_i \log \cover(\epsilon_i, \cF_i, s_{i - 1}) + \sum_i \log \cover(\epsilon_i, D\cF_i, s_{i - 1})
	\end{align*}
	where $D\cF_i$ denotes the function class obtained from applying the total derivative operator to all functions in $\cF_i$.
\end{theorem}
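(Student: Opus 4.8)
The plan is to realize the family of augmented losses $\tilde{\cL}$ from \eqref{eq:tilde_z_loss} as a family of \emph{computational graphs} whose only varying ingredients are the single-layer maps $f_i \in \cF_i$ and their total derivatives $Df_i \in D\cF_i$, and then to chain the two general tools already developed: the Lipschitz-augmentation guarantee (Theorem~\ref{thm:inf_lip}) followed by the graph covering bound (Theorem~\ref{thm:inf_covering_graph}, the computational-graph extension of Lemma~\ref{lem:composition-simple-k}). First I would unfold the formula: the hidden layers are $h_i = f_i(h_{i-1})$; each interlayer Jacobian $D f_j\circ\cdots\circ f_{i}[h_{i-1}]$ is assembled from the atomic Jacobians $Df_i[h_{i-1}],\dots,Df_j[h_{j-1}]$ by the chain rule (iterated matrix multiplication); and the operator norms, the soft indicators $\one[\le s_i](\cdot)$ and $\one[\le \kappa_{j\ot i}](\cdot)$, their product, the loss $\ell$, and the final affine combination $1 + (z-1)\cdot(\cdots)$ are all \emph{fixed}, i.e.\ function- and data-independent, operations. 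The key observation is that each fixed operation is Lipschitz once restricted to bounded-norm inputs: matrix multiplication is Lipschitz when the operator norms of the factors are bounded, $\opnorm{\cdot}$ is $1$-Lipschitz, $\one[\le\kappa]$ is $\kappa^{-1}$-Lipschitz, a product of terms bounded in $[0,1]$ (or by the $s_i$) is Lipschitz, and $\ell$ is Lipschitz by hypothesis. Hence the only ``free'' nodes of the graph are the $k$ nodes carrying the $\cF_i$'s and the $k$ nodes carrying the $D\cF_i$'s; the $O(k^2)$ interlayer-Jacobian nodes are fixed matrix-product nodes built out of the $k$ atomic $Df_i$ nodes.

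Next I would invoke Theorem~\ref{thm:inf_lip} to control the Lipschitzness of $\tilde z$ with respect to every internal node of this graph. Although the raw composition $z = \ell\circ h_k$ has Lipschitz constant w.r.t.\ the intermediate variables $h_i$ that can scale exponentially in $k$, the \emph{augmented} loss $\tilde z$ is Lipschitz w.r.t.\ each internal node with a constant $\tilde\kappa_i$ that is polynomial in the user-chosen parameters $s_i, \kappa_{j\ot i}$. The mechanism is that the soft indicators clip precisely those quantities --- the hidden-layer norms and the interlayer Jacobian norms --- whose unbounded growth is what forces an exponential Lipschitz constant; since $z - 1 \in [-1,0]$ and every indicator lies in $[0,1]$ and vanishes beyond twice its threshold, a perturbation of any $h_i$ --- which propagates to later $h_j$, to later Jacobians, and to all indicator values --- changes $\tilde z$ by only a polynomial factor. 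The recursive design of \eqref{eq:tilde_z_loss}, which augments with indicators on $\opnorm{Df_j\circ\cdots\circ f_i[h_{i-1}]}$ for \emph{all} pairs $i \le j$ rather than only consecutive layers, is exactly what is needed for this to hold for the Jacobian of $\tilde z$ w.r.t.\ each layer; making this precise requires the formal notion of ``Lipschitz with respect to a node'' of a computational graph (cf.\ footnote~\ref{footnote:1}), since $\tilde z$ does not read $x$ only through $h_i$.

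With Lipschitzness in hand, I would apply Theorem~\ref{thm:inf_covering_graph}: given (i) the node-wise Lipschitzness of the output from the previous step and (ii) norm bounds on all intermediate nodes --- the hidden layers are bounded by the $s_i$ and the interlayer Jacobians by the $\kappa_{j\ot i}$ on the support of $\tilde z$, again because the indicators that clip them are what define that support --- the covering number of the graph family decomposes as the sum, over the free nodes only, of the covering numbers of the atomic function classes at resolution $\epsilon_i$, while the output resolution is the Lipschitz-weighted aggregate of the $\epsilon_i$. The free nodes labelled $\cF_i$ contribute $\log\cover(\epsilon_i,\cF_i,s_{i-1})$, those labelled $D\cF_i$ contribute $\log\cover(\epsilon_i, D\cF_i, s_{i-1})$, and the fixed Lipschitz nodes contribute nothing to the covering number but determine the weights $\tilde\kappa_i$ (products of Lipschitz constants along paths from node $i$ to the output, polynomial in $s,\kappa$ by the previous step), yielding output resolution $\sum_i \epsilon_i\tilde\kappa_i$ and hence exactly the claimed bound.

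The main obstacle is the second step --- making the Lipschitz-augmentation argument rigorous. This requires (a) a precise definition of Lipschitzness of $\tilde z$ ``with respect to'' an internal variable when $\tilde z$ depends on $x$ through many channels, and (b) a careful path-by-path accounting of how the clipped hidden-layer and Jacobian norms interact so that the accumulated Lipschitz constant is polynomial, not exponential, in depth. A secondary, essentially bookkeeping, obstacle is that each atomic $f_i$ (and $Df_i$) is reused at several places in the graph --- $Df_i[h_{i-1}]$ appears in every interlayer Jacobian $Df_j\circ\cdots\circ f_i$ with $j\ge i$ --- so the cover of $\cF_i$ (resp.\ $D\cF_i$) must be taken in an $L_2(P_n)$ metric over a distribution containing all these evaluation points; since those points are themselves functions of $f_1,\dots,f_{i-1}$, this is handled by covering the layers in order, so that an $\epsilon_i$-cover of $h_{i-1}$ is induced once $\cF_1,\dots,\cF_{i-1}$ are covered --- the same inductive structure already present in Lemma~\ref{lem:composition-simple-k} and its graph generalization.
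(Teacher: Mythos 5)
Your proposal follows the paper's own route almost exactly: the formal counterpart of this statement is Theorem~\ref{thm:sequential_covering}, and the paper proves it precisely by (i) realizing the augmented loss as a family of sequential computational graphs with one node per $f_i$ and one per $Df_i$, (ii) invoking the Lipschitz-augmentation guarantee (Theorem~\ref{thm:lip}) to obtain polynomial release-Lipschitz constants $\tilde\kappa_i$, and (iii) feeding that into the graph-covering reduction (Theorem~\ref{thm:covering_graph}) so that only the nodes carrying $\cF_i$ and $D\cF_i$ contribute to the log-covering sum. One small structural difference: the paper does not introduce $O(k^2)$ intermediate interlayer-Jacobian nodes as you suggest --- it keeps exactly $k$ Jacobian nodes $J_i=Df_i[h_{i-1}]$ and folds all the products $D_j\cdots D_i$ and all the soft indicators into the output composition rule $\tilde R_O$, then separately argues that covering $\tilde R_O$ reduces to covering $\fR_O$; consequently the ``reuse'' bookkeeping you worry about in your last paragraph is absorbed automatically by the Lipschitzness of $\tilde R_O$ in each $D_i$ and never needs the layer-by-layer inductive re-covering you sketch.
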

\sloppy Now, following the standard technique of bounding Rademacher complexity via covering numbers, we can obtain generalization error bounds for augmented loss.  
	For the demonstration of our technique, suppose that the following simplification holds:
$$
	\log \cover(\epsilon_i, D\cF_i, s_{i - 1}) = \log \cover(\epsilon_i, \cF_i, s_{i - 1}) = s_{i- 1}^2/\epsilon_i^2
$$ 
	Then after minimizing the covering number bound in $\epsilon_i$ via standard techniques, we obtain the below generalization error bound on the original loss for parameters $\tilde{\kappa}_i$ alluded to in Theorem~\ref{thm:sequential_cover_overview} and formally defined in Theorem~\ref{thm:lip}. When the training examples satisfy the augmented indicators, $\Exp_{\textup{train}}[\tilde{z}] = \Exp_{\textup{train}}[z]$, and because $\tilde{z}$ bounds $z$ from above, we have 
	\begin{align}
	\label{eq:generalization_sketch}
	\Exp_{\textup{test}}\left[z\right] - \Exp_{\textup{train}}\left[z\right] \le \Exp_{\textup{test}}\left[\tilde{z}\right] - \Exp_{\textup{train}}\left[\tilde{z}\right] \le  \widetilde{O}\Bigg(\frac{\left(\sum_{i} \tilde{\kappa}_i^{2/3} s_{i - 1}^{2/3}\right)^{3/2}}{\sqrt{n}}+ \sqrt{\frac{\log(1/\delta)}{n}}\Bigg)
	\end{align}

\subsection{Overview of Computational Graph Covering} \label{sec:computational_graph_8}
\label{sec:computation_graph_8}  

To obtain the augmented $\tilde{z}$ defined in~\eqref{eq:tilde_z_loss}, we needed to condition on data-dependent properties which introduced dependencies between the various layers. Because of this, Lemma~\ref{lem:composition-simple-k}~is no longer sufficient to cover $\tilde{z}$. In this section, we informally overview how to extend Lemma~\ref{lem:composition-simple-k}~to cover more general functions via the notion of computational graphs. Section~\ref{sec:computational_graph} provides a more formal discussion. 

A computational graph $G(\cV,\cE,\{R_V\})$ is an acyclic directed graph with three components: the set of nodes $\cV$ corresponds to variables, the set of edges $\cE$ describes dependencies between these variables, and $\{R_V\}$ contains a list of composition rules indexed by the variables $V$'s, representing the process of computing $V$ from its direct predecessors. 
For simplicity, we assume the graph contains a unique sink, denoted by $O_G$, and we call it the ``output node''. 
We also overload the notation $O_G$ to denote the function that the computational graph $G$ finally computes.
Let $\cI_G= \{I_1,\dots, I_{p}\}$ be the subset of nodes with no predecessors, which we call the ``input nodes'' of the graph. 

The notion of a family of computational graphs generalizes the sequential family of function compositions in~\eqref{eq:sequential_family}.  Let $\cG = \{G(\cV, \cE, \{R_V\})\}$ be a family of computational graphs with shared nodes, edges, output node, and input nodes (denoted by $\cI$). Let $\fR_V$ be the collection of all possible composition rules used for node $V$ by the graphs in the family $\cG$. 
This family $\cG$ defines a set of functions $O_\cG \triangleq \{O_G: G\in \cG\}$. 

The theorem below extends Lemma~\ref{lem:composition-simple-k}. In the computational graph interpretation, Lemma~\ref{lem:composition-simple-k} applies to a sequential family of computational graphs with $k$ internal nodes $V_1, \ldots, V_k$, where each $V_i$ computes the function $f_i$, and the output computes the composition $O_G = \ell \circ f_k \cdots \circ f_1 = z$. However, the augmented loss $\tilde{z}$ no longer has this sequential structure, requiring the below theorem for covering generic families of computational graphs. We show that covering a general family of computational graphs can be reduced to covering all the local composition rules.

 \begin{theorem}[Informal and weaker version of Theorem~\ref{thm:covering_graph}]
	\label{thm:inf_covering_graph}
	Suppose that there is an ordering $(V_1,\dots, V_m)$ of the nodes, so that after cutting out nodes $V_1,\dots, V_{i-1}$, the node $V_i$ becomes a leaf node and the output $O_G$ is $\kappa_{V_i}$-Lipschitz w.r.t to $V_i$ for all $G\in \cG$. In addition, assume that for all $G\in \cG$, the node $V$'s value has norm at most $s_V$. Let $\pr(V)$ be all the predecessors of $V$ and $s_{\pr(V)}$ be the list of norm upper bounds of the predecessors of $V$.

	Then, small covering numbers for all of the local composition rules of $V$ with resolution $\epsilon_V$ would imply small covering number for the family of computational graphs with resolution $\sum_V \epsilon_V\kappa_V$:
		\begin{align}
		\label{eq:covering_graph-1}
		\log \cover(\sum_{V\in \cV \setminus \cI \cup \{O\}} \kappa_V \epsilon_V + \epsilon_O, O_\cG, s_{\cI}) \le   \sum_{V\in \cV \setminus \cI} \log \cover(\epsilon_V, \mathfrak{R}_V, s_{\pr(V)})
		\end{align}
		
\end{theorem}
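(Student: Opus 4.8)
The plan is to generalize the inductive covering argument behind Lemma~\ref{lem:composition-simple-k} from a linear chain to a general DAG, using the given topological-after-deletion ordering $(V_1, \dots, V_m)$ as the induction order. The key conceptual point is that this ordering lets us treat the graph as being ``peeled'' one node at a time from the sink side: after removing $V_1, \dots, V_{i-1}$, the node $V_i$ is a leaf, so its composition rule $R_{V_i}$ acts on values that, in the remaining graph, are genuine inputs — this is the analogue of the ``tail'' $\ell \circ f_k \circ \cdots \circ f_{i+1}$ in the sequential case. I would set up the induction on the number of non-input nodes $|\cV \setminus \cI|$; the base case is a graph with only input nodes and the output (or where $O_G$ itself is the single composition rule), handled directly by one application of the covering definition.

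For the inductive step, take $V_1$ (the first node in the ordering), which after removing nothing is already a leaf, meaning all of $V_1$'s predecessors are in $\cI$. Build a cover of the family of composition rules $\fR_{V_1}$ at resolution $\epsilon_{V_1}$, using the bound $\cover(\epsilon_{V_1}, \fR_{V_1}, s_{\pr(V_1)})$; note the predecessor norms $s_{\pr(V_1)}$ are legitimate since $\pr(V_1) \subseteq \cI$ and we are quantifying over input distributions with those norm bounds. Now I would form the ``reduced'' family of computational graphs $\cG'$ obtained by deleting $V_1$ and promoting it to a new input node whose value ranges over $s_{V_1}$-bounded vectors. Each $G \in \cG$ is recovered from some $G' \in \cG'$ by feeding in $R_{V_1}$ applied to the $\cI$-inputs. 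By the triangle inequality in the $L_2(P_n)$ metric and the hypothesis that $O_G$ is $\kappa_{V_1}$-Lipschitz w.r.t. $V_1$, replacing $R_{V_1}$ by a nearby element of its cover perturbs the output by at most $\kappa_{V_1}\epsilon_{V_1}$; then applying the induction hypothesis to $\cG'$ (which has one fewer non-input node, still satisfies the ordering and norm hypotheses with $V_2, \dots, V_m$) covers the remaining slack $\sum_{V \in \cV'\setminus \cI'} \kappa_V \epsilon_V + \epsilon_O$. Combining the two cover sizes additively and the two log-cardinalities additively yields exactly~\eqref{eq:covering_graph-1}.

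The main obstacle I anticipate is making precise what ``$O_G$ is $\kappa_{V_i}$-Lipschitz w.r.t. $V_i$ in the graph with $V_1, \dots, V_{i-1}$ cut out'' means and, relatedly, checking that after promoting $V_1$ to an input the Lipschitz and norm hypotheses are inherited by $\cG'$ with the \emph{same} constants — the subtlety is that in $\cG'$ the value of the new input node $V_1$ is now free rather than determined by the $\cI$-inputs, so one must argue the worst case over free $V_1$ is no worse than before (it is, essentially by definition of the cut-out Lipschitz condition and because any realizable value of $V_1$ has norm $\le s_{V_1}$). A second technical nuisance is bookkeeping the $L_2(P_n)$ cover composition when a node has multiple predecessors: one needs the (elementary) fact that a product cover of the inputs to a rule $R_V$, combined with a cover of $R_V$ itself, yields a cover of the composed function class, with resolutions adding up after scaling by the Lipschitz constant of $R_V$ in each argument — this is the multi-input analogue of the two-function composition lemma and should be stated as a helper fact. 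Everything else — the triangle inequalities, the additivity of $\log \cover$ over independent choices at distinct nodes, and the accounting of the final resolution $\sum_V \kappa_V \epsilon_V + \epsilon_O$ — is routine once the induction is framed correctly.
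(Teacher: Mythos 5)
Your approach matches the paper's proof of the formal version (Theorem~\ref{thm:covering_graph}) essentially step for step: induction on the number of non-input nodes, peeling off $V_1$ by promoting it to a free input node (the paper's ``release'' operation $\cut{G}{V_1}$), covering $\fR_{V_1}$ and the released graph family separately, and combining via a two-function composition lemma with shared inputs --- exactly the paper's Lemma~\ref{lem:composition-shared}. So the skeleton is right. The one place where you should be more careful is precisely the subtlety you flagged and then waved away: after promoting $V_1$ to a free input, the value fed in will be $\hat{w}(x)$ for a cover element $\hat{w}$, not a realizable $R_{V_1}(x)$, and cover elements need not be members of $\fR_{V_1}$, so there is no a priori guarantee that $\|\hat{w}(x)\| \le s_{V_1}$. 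Saying ``any realizable value of $V_1$ has norm $\le s_{V_1}$'' does not close this, since $\hat{w}(x)$ may not be realizable. The paper resolves this by strengthening the hypothesis in the formal Theorem~\ref{thm:covering_graph} from ``the node's value has norm at most $s_V$'' to ``the graph output collapses to a constant whenever any released node's value has norm exceeding $s_V$,'' and by constructing the cover of $\cU$ in Lemma~\ref{lem:composition-shared} so that each $\hat{u}$ also outputs that constant outside the norm ball. This is exactly what makes the informal statement here a ``weaker version'': the boundedness hypothesis as literally written is not quite strong enough to drive the induction, which is why the paper introduces the collapse condition. You would want to either import that condition or impose a small slack $s_{V_1} + \epsilon_{V_1}$ at the inductive step; otherwise the argument is the same as the paper's.
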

In Section~\ref{sec:computation_graph}~we formalize the notion of ``cutting'' nodes from the graph. The condition that node $V$'s value has norm at most $s_V$ is a simplification made for expositional purposes; our full Theorem~\ref{thm:covering_graph}~also applies if $O_{G}$ collapses to a constant whenever node $V$'s value has norm greater than $s_V$. This allows for the softened indicators $\one[\le s_i](\|h_i(x)\|)$ used in~\eqref{eq:tilde_z_loss}.

\subsection{Lipschitz Augmentation of Computational Graphs} \label{subsec:lipschitzaug_8} 
The covering number bound of Theorem~\ref{thm:inf_covering_graph} relies on Lipschitzness w.r.t internal nodes of the graph under a worst-case choice of inputs. For deep networks, this can scale exponentially in depth via the product of weight norms and easily be larger than the average Lipschitz-ness over typical inputs. In this section, we explain a general operation to augment sequential graphs (such as neural nets) into graphs with better worst-case Lipschitz constants, so tools such as Theorem~\ref{thm:inf_covering_graph} can be applied. Formal definitions and theorem statements are in Section~\ref{sec:lipschitzaug}.

The augmentation relies on introducing terms such as the soft indicators in equation~\eqref{eq:z_prime_expr} and~\eqref{eq:tilde_z_loss} which condition on data-dependent properties. As outlined in Section~\ref{sec:overview}, they will translate to the data-dependent properties in the generalization bounds. We also require the augmented function to upper bound the original.  

We will present a generic approach to augment function compositions such as $z \triangleq \ell \circ f_k \circ \ldots  \circ f_1$, whose Lipschitz constants are potentially exponential in depth, with only properties involving the norms of the inter-layer Jacobians. We will produce $\tilde{z}$, whose worst-case Lipschitzness w.r.t. internal nodes can be polynomial in depth.

\textbf{Informal explanation of Lipschitz augmentation:} 
		In the same setting of Section~\ref{sec:overview}, recall that in~\eqref{eq:z_prime_expr}, our first unsuccessful attempt to smooth out the function was by multiplying indicators on the norms of the derivatives of the output: $\prod_{i=1}^k \one[\le \kappa_{i}] (\Opnorm{{\partial z}/{\partial h_i}})$.
	The difficulty lies in controlling the Lipschitzness of the new terms $\opnorm{\partial z/\partial h_i}$ that we introduce: by the chain rule, we have the expansion 
$
		\frac{\partial z}{\partial h_i} = \frac{\partial{z}}{\partial h_k} \frac{\partial h_k}{\partial h_{k - 1}} \cdots \frac{\partial h_{i + 1}}{\partial h_i}
$, where each $h_{j'}$ is itself a function of $h_j$ for $j' > j$. This means $\frac{\partial z}{\partial h_i}$ is a complicated function in the intermediate variables $h_{j}$ for $1 \le j \le k$. Bounding the Lipschitzness of $\frac{\partial z}{\partial h_i}$ requires accounting for the Lipschitzness of every term in its expansion, which is challenging and creates complicated dependencies between variables. 

Our key insight is that by considering a more complicated augmentation which conditions on the derivatives between all intermediate variables, we can still control Lipschitzness of the system, leading to the more involved augmentation presented in~\eqref{eq:tilde_z_loss}.
Our main technical contribution is Theorem~\ref{thm:inf_lip}, which we informally state below. 
\begin{theorem}[Informal version of Theorem~\ref{thm:lip}]\label{thm:inf_lip}The functions $\tilde{z}$ (defined in~\eqref{eq:tilde_z_loss}) can be computed by a family of computational graphs $\augG$ illustrated in Figure~\ref{fig:lipschitz_aug_8}. This family has internal nodes $V_i$ and $J_i$ computing $h_i$ and $Df_i[h_{i - 1}]$, respectively, and computes a modified output rule that augments the original with soft indicators. These soft indicators condition that the norms of the Jacobians and $h_i$ are bounded by parameters $\kappa_{j \ot i}, s_i$.
	
Importantly, the output $O_{\tilde{G}}$ is $\laug_{V_i}$, $\laug_{J_i}$-Lipschitz w.r.t. $V_i$, $J_i$, respectively, after cutting nodes $V_1, J_1, \ldots, V_{i - 1}, J_{i -1}$,  for parameters $\laug_{V_i}$, $\laug_{J_i}$ that are polynomials in $\kappa_{j \ot i}$, $s_i$. 
\end{theorem}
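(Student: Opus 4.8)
The plan is to proceed in three stages: (i) exhibit the computational-graph family $\augG$ and check it really computes $\tilde z$; (ii) pick the elimination ordering of the nodes that makes Theorem~\ref{thm:inf_covering_graph} applicable; and (iii) bound the Lipschitz constant of the output rule with respect to each eliminated node, which is the crux. For (i), I would introduce one input node carrying $x$, internal nodes $V_1,\dots,V_k$ whose composition rule is $f_i\in\cF_i$ applied to $V_{i-1}$ (so $V_i$ computes $h_i$), and internal nodes $J_1,\dots,J_k$ whose composition rule is $Df_i\in D\cF_i$ applied to $V_{i-1}$ (so $J_i$ computes $Df_i[h_{i-1}]$). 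The chain rule gives the identity $Df_j\circ\cdots\circ f_i[h_{i-1}]=J_j J_{j-1}\cdots J_i$, so a single output node $O$ taking all $V_i$ and $J_i$ as predecessors can form $z=\ell(V_k)$, every $\|V_i\|$, and every $\opnorm{J_j\cdots J_i}$, hence evaluate the formula in~\eqref{eq:tilde_z_loss}. Crucially this output rule is one fixed function independent of the choice of $f_i$'s, so $\fR_O$ is a singleton while $\fR_{V_i}=\cF_i$ and $\fR_{J_i}=D\cF_i$; this is exactly what will later produce the two covering-number sums of Theorem~\ref{thm:sequential_cover_overview}.

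For (ii), I would take the ordering $V_1,J_1,V_2,J_2,\dots,V_k,J_k$ with the input node held fixed throughout (we cover with respect to fixed data). The unique predecessor of both $V_i$ and $J_i$ is $V_{i-1}$ (or the input, when $i=1$), which has already been cut—or is the fixed input—by the time we reach $V_i$ and $J_i$, so each becomes a leaf when eliminated. The easy Lipschitz check is $O_{\tilde G}$ as a function of the leaf $J_i$ with everything else frozen: $J_i$ enters $O$ only through the products $J_j\cdots J_{i'}$ with $i'\le i\le j$ inside $\Psi\triangleq\prod_{i'\le j}\one[\le\kappa_{j\ot i'}](\opnorm{J_j\cdots J_{i'}})$. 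Writing $\tilde z=1+(z-1)\Phi\Psi$ with $\Phi\triangleq\prod_l\one[\le s_l](\|V_l\|)$ and differentiating $\Psi$ via the product rule, in each summand $\one[\le\kappa_{j\ot i'}]$ contributes at most $\kappa_{j\ot i'}^{-1}$, the subgradient of $\opnorm{J_j\cdots J_{i'}}$ in $J_i$ has norm at most $\opnorm{J_j\cdots J_{i+1}}\cdot\opnorm{J_{i-1}\cdots J_{i'}}$, and the companion factors of $\Psi$ in that same summand include $\one[\le\kappa_{j\ot(i+1)}]$ and $\one[\le\kappa_{(i-1)\ot i'}]$, which—being nonzero when the summand is nonzero—force those two operator norms below $2\kappa_{j\ot(i+1)}$ and $2\kappa_{(i-1)\ot i'}$. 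With $|z-1|\le 1$ and $\Phi\le 1$, each summand is at most $4\kappa_{j\ot(i+1)}\kappa_{(i-1)\ot i'}/\kappa_{j\ot i'}$ (empty products read as the identity, i.e. the corresponding $\kappa$ replaced by $1$), and summing over the $O(k^2)$ pairs $(i',j)$ gives $\laug_{J_i}$, a polynomial in the $\kappa_{j\ot i}$. This ``capping'' mechanism—whenever a soft-indicator derivative is active, the surviving indicator factors pin down every operator norm and layer norm appearing—is the engine behind all the bounds.

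The hard part will be $\laug_{V_i}$, the Lipschitz constant of $O_{\tilde G}$ in $V_i$, because $V_i$ propagates forward through every downstream hidden state $V_{i+1},\dots,V_k$ and every downstream Jacobian $J_{i+1},\dots,J_k$, hence through $z$, through $\Phi$, and through $\Psi$. The contributions through $z$ and $\Phi$ behave as above: $\partial_{V_i}z=D\ell[V_k](J_k\cdots J_{i+1})$ is capped using the factor $\one[\le\kappa_{k\ot(i+1)}]$ of $\Psi$, and each $\partial_{V_i}\|V_l\|$ is capped using $\one[\le\kappa_{l\ot(i+1)}]$. The delicate term is $\partial_{V_i}\Psi$: differentiating $\opnorm{J_j\cdots J_{i'}}$ in $V_i$ requires the product rule over $J_{i+1},\dots,J_j$, and each $\partial_{V_i}J_{i''}=D^2 f_{i''}[h_{i''-1}]\cdot(J_{i''-1}\cdots J_{i+1})$ introduces a \emph{second-derivative} factor of the base map. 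The Jacobian products flanking it ($J_j\cdots J_{i''+1}$, $J_{i''-1}\cdots J_{i+1}$, $J_{i''-1}\cdots J_{i'}$) are again all pinned down by companion indicator factors of $\Psi$, so the only quantity not controlled by the $\kappa$'s is $\opnorm{D^2 f_{i''}[h_{i''-1}]}$; here I would invoke the smoothness of the base functions (for neural nets, the bounded second derivative of the smooth activation), so this too contributes a constant. Collecting the $O(k^2)$ summands then yields $\laug_{V_i}$ as a polynomial in the $\kappa_{j\ot i}$, the $s_i$, and the base smoothness constants—the asserted bound.

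What remains is routine bookkeeping, which I would dispatch last: the output collapses to the constant $1$ whenever any conditioned norm exceeds twice its threshold, so the global Lipschitz constant is governed by the threshold values rather than by the (a priori exponential) worst-case layer norms; the estimates are uniform over the worst-case choice of the frozen cut values, since those enter only through soft-indicator factors that are bounded by $1$; and every appeal to Lipschitzness of a soft indicator uses the $\kappa^{-1}$-Lipschitzness recorded in Section~\ref{sec:notations}. Putting stages (i)--(iii) together with this bookkeeping gives the computational-graph realization of $\tilde z$ together with the claimed polynomial Lipschitz constants $\laug_{V_i},\laug_{J_i}$.
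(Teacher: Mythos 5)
Your proposal matches the paper's proof in all essential respects: the same graph construction with singleton $\fR_O$ and local families $\fR_{V_i}=\cF_i$, $\fR_{J_i}=D\cF_i$, the same release ordering $V_1,J_1,\dots,V_k,J_k$, the same ``capping'' mechanism whereby companion soft-indicator factors pin every operator norm appearing in a nonzero summand below twice its threshold, and the same observation that the change in $J_{i''}$ as $V_i$ varies invokes the smoothness (second-derivative Lipschitz bound $\ulip_{i''}$) of the base maps, which the informal statement elides but Theorem~\ref{thm:lip} requires. The one place where your exposition is looser than the paper's is the use of ``differentiating'' and ``subgradients'' for the softened indicators and operator norms, which are Lipschitz but not differentiable; the paper sidesteps this with a finite-difference telescoping identity (Claims~\ref{claim:telescope_product}, \ref{claim:diff_expansion}, \ref{claim:jacobian_diff}) and first proves Lipschitzness on sufficiently small displacements (using the auxiliary global constant $\bar\tau$ from Claim~\ref{claim:all_lipschitz}) before patching, so a rigorous writeup of your plan would need to replace the infinitesimal calculus with this finite-increment argument.
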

In addition, the augmented function $\tilde{z}$ will upper bound the original with equality when all the indicators are satisfied. 
The crux of the proof is leveraging the chain rule to decompose $\frac{\partial z}{\partial h_i}$ into a product and then applying a telescoping argument to bound the difference in the product by differences in individual terms. In Section~\ref{sec:lipschitzaug}~we present a formal version of this result and also apply Theorem~\ref{thm:inf_covering_graph} to produce a covering number bound for $\augG$.
		\begin{wrapfigure}{!h!}{0.35\textwidth}
\caption{Lipschitz augmentation (informally defined).}\label{fig:lipschitz_aug_8}
		\includegraphics[width=0.34\textwidth]{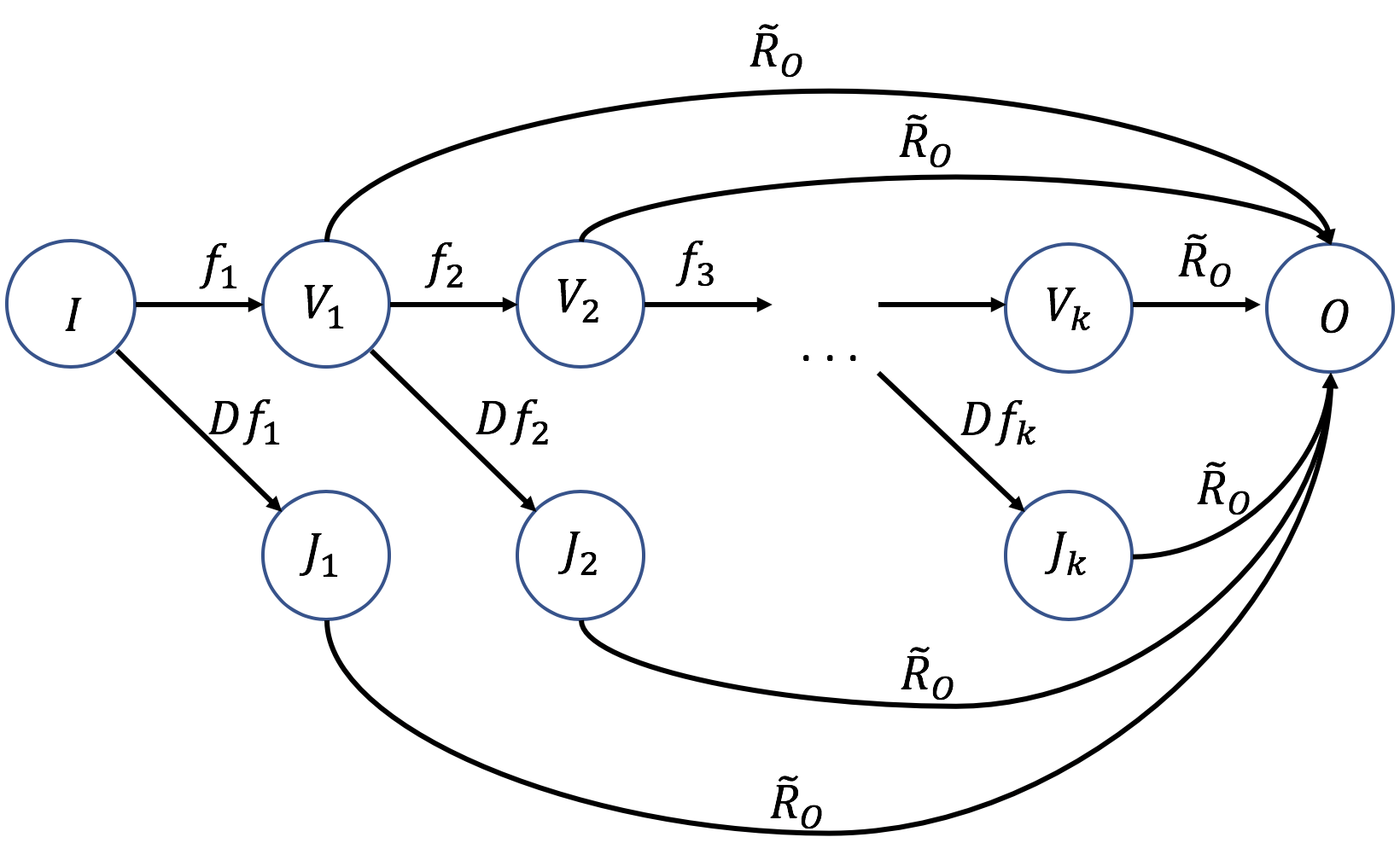}\\
	\end{wrapfigure}

\section{Covering of Computational Graphs} \label{sec:computational_graph}
\label{sec:computation_graph}  
This section is a formal version of Section~\ref{sec:computation_graph_8}~with full definition and theorem statements. In this section, we adapt the notion of a computational graph to our setting. In Section~\ref{subsec:def_computation}, we formalize the notion of a computational graph and demonstrate how neural networks fit under this framework. In Section~\ref{subsec:covering_computation}, we define the notion of release-Lipschitzness that abstracts the sequential notion of Lipschitzness in Lemma~\ref{lem:composition-simple-k}. We show that when this release-Lipschitzness condition and a boundedness condition on the internal nodes hold, it is possible to cover a family of computational graphs by simply covering the function class at each vertex. 

\subsection{Formalization of computational graphs}\label{subsec:def_computation}
When we augment the neural network loss with data-dependent properties, we introduce dependencies between the various layers, making it complicated to cover the augmented loss. We use the notion of computational graphs to abstractly model these dependencies. 

Computational graphs are originally introduced by~\citet{bauer1974computational} to represent computational processes and study error propagation. 
Recall the notation $G(\cV,\cE,\{R_V\})$ introduced for a computational graph in Section~\ref{sec:computation_graph_8}, with input nodes $\cI_G= \{I_1,\dots, I_{p}\}$ and output node denoted by $O_G$. (It's straightforward to generalize to scenarios with multiple output nodes.)

For every variable $V\in \cV$, let $\cD_V$ be the space that $V$ resides in. 
If $V$ has $t$ direct predecessors $C_1,\dots, C_t$, then the associated composition rule $R_V$ is a function that maps  $\cD_{C_1}\otimes \cdots \otimes \cD_{C_t}$ to $\cD_{V}$. 
If $V$ is an input node, then the composition rule $R_V$ is not relevant. For any node $V$, the computational graph defines/induces a function that computes the variable $V$ from inputs, or in mathematical words, that maps the inputs space $\cD_{I_1}\otimes \cdots \otimes \cD_{I_p}$ to $\cD_V$. This associated function, denoted by $V$ again with slight abuse of notations, is defined recursively as follows: set $V(x_1,\dots, x_p)$ to
\begin{align}
\left\{\begin{array}{ll}
x_i & \textup{if $V$ is the $i$-th input node $I_i$}
\\
R_V(C_1(x_1,\dots, x_p), \dots, C_t(x_1,\dots, x_p)) & \textup{if $V$ has $t$ direct predecessors $C_1,\dots, C_t$}
\end{array}\right.\nonumber
\end{align}
More succinctly, we can write $V = R_V \circ (C_1\otimes \cdots \otimes C_t)$.  We also overload the notation $O_G$ to denote the function that the computational graph $G$ finally computes (which maps $\cD_{I_1}\otimes \cdots \otimes \cD_{I_p}$ to $\cD_{O}$). For any set $\cS = \{V_1, \ldots, V_t\} \subseteq \cV$, use $\cD_{\cS}$ to denote the space $\cD_{V_1} \otimes \cdots \otimes \cD_{V_t}$. We use $\pr(G, V)$ to denote the set of direct predecessors of $V$ in graph $G$, or simply $\pr(V)$ when the graph $G$ is clear from context. 

\begin{example} [Feed-forward neural networks] \label{ex:neuralnet}
	For an activation function $\phi$ and parameters $\{W^{(i)}\}$ we compute a neural net $F : \R^{d_I} \to \R^{d_O}$ as follows: $F(x) = W^{(\depthnn)} \phi( \cdots \phi(W^{(1)}x)\cdots)$.
Figure~\ref{fig:nncomputational}~depicts how this neural network fits into a computational graph with one input node, $2\depthnn - 1$ internal nodes, and a single output. Here we treat matrix operations and activations as distinct layers, and map each layer to a node in the computational graph. 
\end{example}
\begin{figure}
	\centering
	\includegraphics[width=0.7\textwidth]{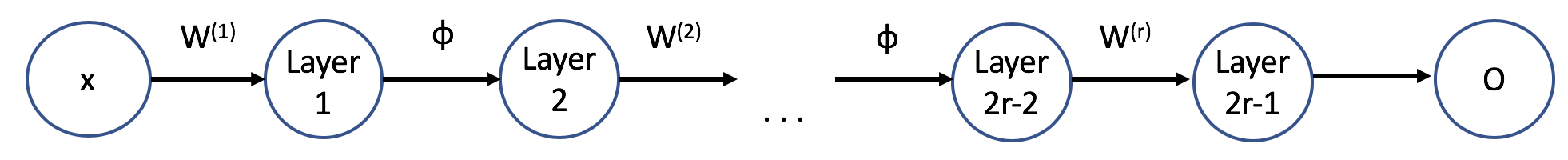}\\
	\caption{The computational graph corresponding to a neural network with $\depthnn$ weight matrices. Odd-indexed layers multiply matrices and even-indexed layers apply the activation $\phi$.} \label{fig:nncomputational}
\end{figure}
\subsection{Reducing graph covering to local function covering}\label{subsec:covering_computation}
In this section we introduce the notion of a family of computational graphs, generalizing the sequential family of function compositions in~\eqref{eq:sequential_family}. We define release-Lipschitzness, a condition which allows reduce covering the entire the graph family to covering the composition rules at each node. We formally state this reduction in Theorem~\ref{thm:covering_graph}.

\paragraph{Family of computational graphs: } Let $\cG = \{G(\cV, \cE, \{R_V\}):  \{R_V\} \in \mathfrak{R}\}$ be a family of computational graph with shared nodes and edges, where $\mathfrak{R}$ is a collection of lists of composition rules. This family of computational graphs defines a set of functions $O_\cG \triangleq \{O_G: G\in \cG\}$. 
We'd like to cover this set of functions in $O_\cG$ with respect to some metric $L(P_n, \gnorm{\cdot})$. 

For a list of composition rules $\{R_V\} \in \fR$ and subset $\cS \subseteq \cV$, we define the projection of composition rules onto $\cS$ by $\{R_V\}_{\cS} = \{R_V : V \in \cS\}$. Now let $\fR_S = \{\{R_V\}_{\cS} : \{R_V\} \in \fR\}$ denote the marginal collection of the composition rules on node subset $\cS$. 

For any computational graph $G$ and a non-input node $V \in \cV \setminus \cI$, we can define the following operation that ``releases'' $V$ from its dependencies on its predecessors by cutting all the inward edges: Let $\cut{G}{V}$ be sub-graph of $G$ where all the edges pointing towards $V$ are removed from the graph. Thus, by definition, $V$ becomes a new input node of the graph $\cut{G}{V}$: $\cI_{\cut{G}{V}} = \{V\}\cup \cI_G$. Moreover, we can ``recover'' the dependency by plugging the right value for $V$ in the new graph $\cut{G}{V}$: Let $V(x)$ be the function associated to the node $V$ in graph $G$, then we have
\begin{align}
\forall x \in \cD_{\cI}, ~~ O_{\cut{G}{V}}(V(x), x) = O_G(x) \label{eqn:relink}
\end{align}

In our proofs, we will release variables in orders. Let $\cS = (V_1,\dots, V_m)$ be an ordering of the intermediate variables $\cV\backslash(\cI\cup \{O\})$. We call $\cS$ a forest ordering if for any $i$, in the original graph $G$,  $V_i$ at most depends on the input nodes and $V_1,\dots, V_{i-1}$. For any sequence of variables $(V_1,\dots, V_t)$, we can define the graph obtained by releasing the variables in order: 
$
	\cut{G}{(V_1,\dots, V_t)} \triangleq \cut{(\cdots(\cut{G}{V_1}) \cdots)}{V_t}
$.
We next define the release-Lipschitz condition, which states that the graph function remains Lipschitz when we sequentially release vertices in a forest ordering of the graph. 
\begin{definition}[Release-Lipschitzness]\label{def:lip}
A graph $G$ is release-Lipschitz with parameters $\{\kappa_V\}$ w.r.t a forest ordering of the internal nodes, denoted by $(V_1,\dots, V_m)$ if the following happens: upon releasing $V_1,\dots, V_m$ in order from any $G\in \cG$, for any $0\le i \le m$,  we have that the function defined by the released graph $\cut{G}{(V_1,\dots, V_i)}$ is $\kappa_{V_i}$-Lipschitz in the argument $V_i$, for any values of the rest of the input nodes (=$\{V_1,\dots, V_{i-1}\}\cup \cI_G$.) We also say graph $G$ is release- Lipschitz if such a forest ordering exists. 
\end{definition}

Now we show that the release-Lipschitz condition allows us to cover any family of computational graphs whose output collapses when internal nodes are too large. The below is a formal and complete version of Theorem~\ref{thm:inf_covering_graph}. For the augmented loss defined in~\eqref{eq:tilde_z_loss}, the function output collapses to $1$ when internal computations are large. The proof is deferred to Section~\ref{app:computational_graph}. 
\begin{theorem}
	\label{thm:covering_graph}
Suppose $\cG$ is a computational graph with the associated family of lists of composition rules $\mathfrak{R}$, as formally defined above. 
Let $P_n$ be a uniform distribution over $n$ points in $\cD_\cI$. 
Let $\kappa_V$, $s_V$, and $\epsilon_V$ be three families of fixed parameters indexed by $\cV\backslash\cI$ (whose meanings are defined below). Assume the following: 
	\begin{itemize}
		\item[1.] Every $G\in \cG$ is release-Lipschitz with parameters $\{\kappa_V\}$ w.r.t a forest ordering of the internal nodes $(V_1,\dots, V_m)$ (the parameter $\kappa_V$'s and ordering doesn't depend on the choice of $G$.)
		\item[2.] For the same order as before, if $(v, x)\in (\cD_{V_1}\otimes \cdots \otimes \cD_{V_i})\otimes  \cD_\cI$ is an input of the released graph satisfying $\gnorm{v_j} \ge s_{V_j}$ for some $j \le i$, then $O_{\cut{G}{(V_1,\dots, V_i)}}(v,x) = c$ for some constant $c$. 
	\end{itemize}
	Then, small covering numbers for all of the local composition rules of $V$ with resolution $\epsilon_V$ would imply small covering number for the family of computational graphs with resolution $\sum_V \epsilon_V\kappa_V$:
		\begin{align}
		\label{eq:covering_graph-1}
		\log \cover(\sum_{V\in \cV \setminus \cI \cup \{O\}} \kappa_V \epsilon_V + \epsilon_O, O_\cG, s_{\cI}) \le   \sum_{V\in \cV \setminus \cI} \log \cover(\epsilon_V, \mathfrak{R}_{\{V\}}, s_{\pr(V)})
		\end{align}
		
\end{theorem}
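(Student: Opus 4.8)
The plan is to induct on the forest ordering $(V_1, \dots, V_m)$, releasing the nodes one at a time and using the release-Lipschitz condition to control how much the covering resolution grows at each step, while the boundedness condition (item 2) ensures the output stays trivial on the ``bad'' set where an intermediate value is too large. The base case should be the fully released graph $\cut{G}{(V_1, \dots, V_m)}$, whose only remaining dependencies are on the input nodes $\cI$ and the released variables $V_1, \dots, V_m$ now treated as inputs; since each non-input node's composition rule is applied to already-covered arguments, covering $O_{\cut{G}{(V_1,\dots,V_m)}}$ reduces to independently covering each $\fR_{\{V\}}$ at resolution $\epsilon_V$ on inputs of norm at most $s_{\pr(V)}$, giving the $\sum_V \log \cover(\epsilon_V, \fR_{\{V\}}, s_{\pr(V)})$ term. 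This is essentially the generalization of Lemma~\ref{lem:composition-simple-k} from a chain to a forest/DAG, and the bookkeeping is the same: a cover for the composition is obtained by taking products of covers of the pieces, and the log covering numbers add.

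The inductive step is the heart of the argument. Suppose we have covered $O_{\cut{G}{(V_1,\dots,V_i)}}$ as a function of $(V_i, V_1, \dots, V_{i-1}, x \in \cD_\cI)$ with some resolution; I want to ``re-link'' $V_i$, i.e., pass from $\cut{G}{(V_1,\dots,V_i)}$ back to $\cut{G}{(V_1,\dots,V_{i-1})}$, using the identity~\eqref{eqn:relink} that $O_{\cut{G}{(V_1,\dots,V_{i-1})}}(x) = O_{\cut{G}{(V_1,\dots,V_i)}}(V_i(x), x)$ where $V_i(x)$ is the function computed for node $V_i$ in the graph $\cut{G}{(V_1,\dots,V_{i-1})}$. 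The key point: an $\epsilon$-cover of the map $x \mapsto O_{\cut{G}{(V_1,\dots,V_i)}}(v, x)$ (uniformly over $v$) combined with an $\epsilon_{V_i}$-cover of $x \mapsto V_i(x)$ yields, by release-$\kappa_{V_i}$-Lipschitzness of $O_{\cut{G}{(V_1,\dots,V_{i-1})}}$ in its $V_i$-argument, a $(\epsilon + \kappa_{V_i}\epsilon_{V_i})$-cover of $O_{\cut{G}{(V_1,\dots,V_{i-1})}}$. To make this rigorous in $L_2(P_n, \gnorm{\cdot})$ norm one triangulates: replace $V_i(x)$ by its nearest cover element $\hat V_i(x)$ (costs $\kappa_{V_i}\epsilon_{V_i}$ in the output, since the output is $\kappa_{V_i}$-Lipschitz in that slot — and here we need that $\hat V_i(x)$ and $V_i(x)$ both lie in the Lipschitz regime, or else by item 2 the output is the constant $c$ and the discrepancy is zero), then replace the outer function by its cover element (costs $\epsilon$, and this is valid because the points $(\hat V_i(x), x)$ still have bounded norm $s_{V_i}$ in the relevant coordinate so they are inside the domain on which the outer cover was built). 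The constant-collapse condition (item 2) is exactly what lets the cover of $x \mapsto V_i(x)$ be taken at resolution $\epsilon_{V_i}$ with a norm budget $s_{V_i}$: outside that budget, the output is constant and contributes nothing, so we never need fine resolution there.

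Accumulating the $\kappa_{V_i}\epsilon_{V_i}$ increments over $i = 1, \dots, m$ and adding the final $\epsilon_O$ for covering the output node's own composition rule gives total resolution $\sum_{V \in \cV\setminus\cI\cup\{O\}} \kappa_V \epsilon_V + \epsilon_O$, and the log covering numbers accumulate additively to $\sum_{V\in\cV\setminus\cI}\log\cover(\epsilon_V,\fR_{\{V\}},s_{\pr(V)})$, which is the claimed bound. The main obstacle I anticipate is making the re-linking step airtight when several released variables feed into the same downstream node and when the ``nearest cover element'' $\hat V_i(x)$ could stray outside the norm ball $s_{V_i}$: one must argue carefully that on the event $\gnorm{V_i(x)} \le s_{V_i}$ we can choose $\hat V_i(x)$ also within (or harmlessly close to) that ball, and on the complementary event the output has already collapsed to $c$ so the error is exactly zero — so that the $L_2(P_n)$ error genuinely telescopes rather than compounding. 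A secondary technical point is verifying that the forest-ordering hypothesis guarantees each $\cut{G}{(V_1,\dots,V_{i-1})}$ really does compute $V_i$ as a well-defined function of $\cI$ alone (no dependence on $V_j$, $j \ge i$), which is precisely the definition of a forest ordering and should follow directly.
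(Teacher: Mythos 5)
Your proposal follows essentially the same route as the paper's proof: induct on the forest ordering, releasing (or, in your framing, re-linking) one node at a time, and use a composition lemma that combines a cover of the released/outer graph with a cover of the just-removed node's composition rule, paying $\kappa_V\epsilon_V$ in resolution via the release-Lipschitz condition while using the norm-collapse property (condition~2) to make the $L_2(P_n)$ error genuinely telescope. The paper runs the induction ``top-down'' — release $V_1$, then invoke the inductive hypothesis on the whole family $\cut{\cG}{V_1}$, which has one more input node — but this is the same induction you describe, and the one-step composition argument is exactly Lemma~\ref{lem:composition-shared}.

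Two small points to tighten. First, your base case is misstated: the fully released graph $\cut{G}{(V_1,\dots,V_m)}$ computes only a single application of $R_O$ to what are now all input nodes, so covering it costs $\log\cover(\epsilon_O,\fR_O,\cdot)$ and nothing more; the $\sum_V\log\cover(\epsilon_V,\fR_{\{V\}},s_{\pr(V)})$ accumulates during the re-linking steps, one term per step, exactly as your final accounting correctly says — the sum cannot already be present at the base case or you would double-count. Second, the worry you raise about the cover element $\hat V_i(x)$ straying outside the $s_{V_i}$ ball is real and the resolution is not quite ``choose $\hat V_i(x)$ inside the ball''; rather, the paper constructs the cover $\hat\cU$ of the outer function so that each $\hat u$ is \emph{defined} to output the collapse constant $c$ whenever a coordinate exceeds its norm budget, which makes $\hat u(\hath,x)=u(\hath,x)=c$ on the bad set and kills the discrepancy there; on the good set, the cover is built with respect to the conditional distribution $Q'$ restricted to bounded-norm points, and a $\sqrt{n/|\supp(Q')|}$ rescaling makes the $L_2(P_n)$ error come out right. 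With those two adjustments your outline fills out into the paper's argument.
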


\section{Lipschitz Augmentation of Computational Graphs}
 \label{sec:lipschitzaug} In this section, we provide a more thorough and formal presentation of the augmentation framework of Section~\ref{subsec:lipschitzaug_8}. 
 
 The covering number bound for the computational graph family $\cG$ in Theorem~\ref{thm:covering_graph} relies on the release-Lipschitzness condition (condition 1 of Theorem~\ref{thm:covering_graph}) and rarely holds for deep computational graphs such as deep neural networks. The conundrum is that the worst-case Lipschitzness as required in the release-Lipschitz condition\footnote{We say the Lipschitzness required is worst case because the release-Lipschitz condition requires the Lipschitzness of nodes for any possible choice of inputs} is very likely to scale in the product of the worst-case Lipschitzness of each operations in the graph, which can easily be exponentially larger than the average Lipschitzness over typical examples.  

In this section, we first define a model of sequential computational graphs, which captures the class of neural networks. Before Lipschitz augmentation, the worst-case Lipschitz constant of graphs in this family could scale exponentially in the depth of the graph. In Definition~\ref{def:aug}, we generalize the operation of~\eqref{eq:tilde_z_loss}~to augment any family $\cG$ of sequential graphs and produce a family $\augG$ satisfying the release-Lipschitz condition. In Theorem~\ref{thm:sequential_covering}, we combine this augmentation with the framework of~\ref{thm:covering_graph} to produce general covering number bounds for the augmented graphs. For the rest of this section we will work with sequential families of computational graphs. 

A sequential computational graph has nodes set $\cV = \{I, V_1, \ldots, V_q, O\}$, where $I$ is the single input node, and all the edges are $\cE = \{(I,V_1), (V_1,V_2),\cdots, (V_{q-1},V_q)\}\cup \{(V_1,O), \dots, (V_q,O)\}$. We often use the notation $V_0$ to refer to the input $I$. Below we formally define the augmentation operation.

	\begin{definition}[Lipschitz augmentation of sequential graphs] \label{def:aug}Given a differentiable sequential computational graph $G$ with $q$ internal nodes $V_1,\dots, V_q$, define its Lipschitz augmentation $\tildeG$ as follows. We first add $q$ nodes to the graph denoted by $J_1,\dots, J_q$. The composition rules for original internal nodes remain the same, and the composition rule for $J_i$ is defined as 
		\begin{align*}
		\tildeR_{J_i} = DR_{V_i}
		\end{align*}
		Here $DR_{V_i}$ is the total derivative of the function $R_{V_i}$. In other words, the variable $J_i$ is a Jacobian for $R_{V_i}$, a linear operator that maps $\cD_{V_{i-1}}$ to $\cD_{V_i}$. (Note that if $V_i$'s are considered as vector variables, then $J_i$'s are matrix variables.) We equip the space of $J_i$ with operator norm, denoted by $\opnorm{\cdot}$, induced by the original norms on spaces $V_{i-1}$ and $V_i$. The Lipschitz-ness w.r.t variable $J_i$ will be measured with operator norm.
		
		We pre-determine a family of parameters $\kappa_{j\ot i}$ for all pairs $(i,j)$ with $i \le j$. The final loss is augmented by a product of soft indicators that truncates the function when any of the Jacobians is much larger than $\kappa_{i\ot j}$ :
		\begin{align*}
		\tildeR_{O}(x, v_1, \ldots, v_\depthlocal, D_1, \ldots, D_\depthlocal)  \triangleq (R_{O}(x, v_1, \ldots, v_\depthlocal) - 1)\prod_{i \le j} \one[\le \kappa_{j \ot i}] (\opnorm{D_j \cdots D_i})+ 1
		\end{align*}
		where $x \in \cD_{\cI}$, $v_i \in \cD_{V_i}$, and $D_i \in \cD_{J_i}$. Note that $D_j \cdots D_i$ is the total derivative of $V_j$ w.r.t $V_i$, and thus the $\kappa_{j \ot i}$ has the interpretation as an intended bound of the Jacobian between pairs of layers (variables). Figure~\ref{fig:computation_graph}~depicts the augmentation. 
	\end{definition}

	Note that under these definitions, we finally get that the output function of $\tildeG$ computes
	\begin{align}
	O_{\tildeG}(x) = (O_G(x) - 1) \prod_{i \le j} \one[\le \kappa_{j \ot i}] (\opnorm{D V_j(x) \cdots D V_i(x)}) + 1 \label{eq:Otilde}
	\end{align}
	which matches~\eqref{eq:tilde_z_loss}~for the example in Section~\ref{sec:overview}.
	We note that the graph $\tilde{G}$ contains the original $G$ as a subgraph. Furthermore, by Claim~\ref{claim:indicator_stack}, $O_{\tilde{G}}$ upper bounds $O_{G}$, which is desirable when $G$ computes loss functions. The below theorem, which formalizes Theorem~\ref{thm:inf_lip}, proves release-Lipschitzness for $\augG$.
\begin{theorem}
	\label{thm:lip}[Lipschitz guarantees of augmented graphs]
	Let $\cG$ be a family of sequential computational graphs. Suppose for any $G\in \cG$, the composition rule of the output node, $R_{O_G}$, is $c_i$-Lipschitz in variable $V_i$ for all $i$, and it only outputs value in $[0,1]$. 
	Suppose that $DR_{V_i}$ is $\ulip_i$-Lipschitz for each $i$.\footnote{Note that $DR_{V_i}$ maps a vector in space $\cD_{V_{i-1}}$ to an linear operator that maps $\cD_{V_{i-1}}$ to $\cD_{V_i}$.} 
	Let $\kappa_{j \ot i}$ (for $i\le j$) be a set of parameters that we intend to use to control Jacobians in the Lipschitz augmentation. With them, we apply Lipschitz augmentation as defined in Definition~\ref{def:aug} to every graph in $\cG$ and obtain a new family of graphs, denoted by $\widetilde{\cG}$. 
	
	Then, the augmented family $\widetilde{\cG}$ is release-Lipschitz  (Definition~\ref{def:lip}) with parameters $\tilde{\kappa}_V$'s below:
{
\begin{align*}\laug_{V_i} &\triangleq \sum_{i \le j \le \depthlocal} 3c_{j} \kappa_{j \ot i + 1} +18 \sum_{1 \le j \le j' \le \depthlocal} \sum_{i' = \max\{i + 1, j\}}^{j'} \frac{\ulip_{i'}\kappa_{j' \ot i' + 1} \kappa_{i' - 1 \ot i + 1} \kappa_{i' - 1 \ot j}}{\kappa_{j' \ot j}},~~\\ \laug_{J_i} &\triangleq \sum_{j \le i \le j'} \frac{4\kappa_{j' \ot i + 1} \kappa_{i - 1 \ot j}}{\kappa_{j' \ot j}}\end{align*}}
where for simplicity in the above expressions, we extend the definition of $\kappa$'s to $\kappa_{j -1\ot j} = 1$. 

\end{theorem}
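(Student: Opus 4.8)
The plan is to verify the release-Lipschitzness of $\augG$ directly along the forest ordering $(V_1, J_1, V_2, J_2, \dots, V_\depthlocal, J_\depthlocal)$, which is a valid forest ordering because in $\tildeG$ both $V_i$ and $J_i$ have $V_{i-1}$ as their unique predecessor. Fix $G \in \cG$ and an index $i$. Releasing $V_1,J_1,\dots,V_{i-1},J_{i-1}$ (and then $V_i$, or then both $V_i$ and $J_i$) turns those nodes and $I$ into free inputs, while $V_{i+1},\dots,J_\depthlocal$ are still evaluated by their composition rules; hence the released graph computes $O_{\augG} = (z-1)\,\psi + 1$, where $z = R_O(x, v_1,\dots, v_{i-1}, v_i, v_{i+1}(v_i),\dots,v_\depthlocal(v_i))$, the downstream $v_j(\cdot)$ are compositions of the original $R_{V_j}$'s, and $\psi = \prod_{a \le b}\one[\le \kappa_{b\ot a}](\opnorm{D_b \cdots D_a})$ with $D_c$ equal to the fixed free input for $c<i$; to the fixed value $DR_{V_i}(v_{i-1})$ for $c=i$ (in the $V_i$ case) or to the free variable being varied (in the $J_i$ case); and to $DR_{V_c}(v_{c-1}(v_i))$ for $c>i$. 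The identity $O_{\augG}\ge O_G$ is immediate from Claim~\ref{claim:indicator_stack} since the added factors lie in $[0,1]$ and $O_G \le 1$.

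Next I would reduce Lipschitzness to an a.e.\ gradient bound supported on $\{\psi>0\}$. Since $O_{\augG}$ is a composition of $C^1$ maps (the $R_{V_i}$, whose derivatives are Lipschitz), globally Lipschitz maps ($R_O$, $\opnorm{\cdot}$, $\one[\le\kappa]$), and matrix products, it is locally Lipschitz, hence differentiable a.e., and on the convex Euclidean slot being varied its Lipschitz constant equals the (essential) supremum of the gradient norm. Writing $F = g\psi + 1$ with $g = z-1 \in [-1,0]$ and $\psi\in[0,1]$, we have $\nabla F = \psi\nabla g + g\nabla\psi$ a.e.; on $\{\psi>0\}$ this is at most $\|\nabla g\|+\|\nabla\psi\|$, while on the closed set $\{\psi=0\}$ we have $F\equiv 1$, so $\nabla F = 0$ a.e.\ there. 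Also $\|\nabla\psi\| \le \sum_{a\le b}\kappa_{b\ot a}^{-1}\,\|\nabla\opnorm{D_b\cdots D_a}\|$ because $\one[\le\kappa]$ is $\kappa^{-1}$-Lipschitz with values in $[0,1]$. The key (and only) use of the conditioning is that on $\{\psi>0\}$ every Jacobian product satisfies $\opnorm{D_b\cdots D_a} \le 2\kappa_{b\ot a}$; here I use the conventions $\kappa_{j-1\ot j}=1$ and empty product $=I$ for boundary indices.

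Then I would carry out the two core estimates on $\{\psi>0\}$. For the $V_i$ direction: $\|\nabla_{v_i} g\| \le \sum_{j\ge i} c_j\,\opnorm{\partial v_j/\partial v_i} = \sum_{j\ge i} c_j\,\opnorm{D_j\cdots D_{i+1}} \le 2\sum_{j\ge i} c_j\kappa_{j\ot i+1}$, yielding the first term of $\laug_{V_i}$; and for each pair $(a,b)$ with $b\ge i+1$, the $1$-Lipschitzness of $\opnorm{\cdot}$ and the product rule give $\|\nabla_{v_i}\opnorm{D_b\cdots D_a}\| \le \sum_{c=\max\{a,i+1\}}^b \opnorm{D_b\cdots D_{c+1}}\,\|\nabla_{v_i}D_c\|\,\opnorm{D_{c-1}\cdots D_a}$, where $\|\nabla_{v_i}D_c\| = \|\nabla_{v_i}DR_{V_c}(v_{c-1}(v_i))\| \le \ulip_c\,\opnorm{\partial v_{c-1}/\partial v_i} \le 2\ulip_c\kappa_{c-1\ot i+1}$; combining with $\opnorm{D_b\cdots D_{c+1}}\le 2\kappa_{b\ot c+1}$ and $\opnorm{D_{c-1}\cdots D_a}\le 2\kappa_{c-1\ot a}$, dividing by $\kappa_{b\ot a}$, and summing over $(a,b,c)$, the result matches the triple-sum term of $\laug_{V_i}$ up to the stated constant. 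For the $J_i$ direction: $g$ is independent of $D_i$, and only products with $a\le i\le b$ depend on $D_i$, linearly as $D_i\mapsto A D_i B$ with $A=D_b\cdots D_{i+1}$, $B=D_{i-1}\cdots D_a$, so $\|\nabla_{D_i}\opnorm{D_b\cdots D_a}\| \le \opnorm{A}\opnorm{B} \le 4\kappa_{b\ot i+1}\kappa_{i-1\ot a}$; dividing by $\kappa_{b\ot a}$ and summing over $a\le i\le b$ gives exactly $\laug_{J_i}$.

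The main obstacle will be the bookkeeping in the $V_i$ estimate: $\nabla_{v_i}\opnorm{D_b\cdots D_a}$ has a \emph{two-level} structure — differentiating a product of matrices, each of which is $DR_{V_c}$ composed with a chain whose own Jacobian is another matrix product — so telescoping must be applied at both levels, and one must verify that every resulting factor is one of the Jacobian products $D_\bullet\cdots D_\bullet$ controlled on $\{\psi>0\}$, and that the surviving indices reproduce the sum $\sum_{j\le j'}\sum_{i'=\max\{i+1,j\}}^{j'}$ appearing in $\laug_{V_i}$. The measure-theoretic reduction (vanishing of $\nabla F$ on $\{\psi=0\}$; Lipschitz constant equals the essential supremum of the gradient norm) is routine but needs care because $\opnorm{\cdot}$ is not everywhere differentiable; this is handled by never differentiating $\opnorm{\cdot}$ and instead using its $1$-Lipschitzness to pass to the matrix-valued maps, whose derivatives are the products we control.
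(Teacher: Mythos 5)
Your proposal is correct, and for the $V_i$ direction it takes a genuinely different route from the paper. The paper does not differentiate anything: it proves a \emph{finite-difference} telescoping bound (Lemma~\ref{lem:lip}, with Claims~\ref{claim:diff_expansion},~\ref{claim:Delta_err},~\ref{claim:delta_err}) for $|\tilde z(x)-\tilde z(x+\nu)|$, expanding the difference of the products of indicators via Claim~\ref{claim:telescope_product} and then bounding each term with Claim~\ref{claim:finite_change}. That route is completely elementary but pays small price terms like $(2\tau + \bar\tau\|\nu\|/2)$ and $(2\tau + \bar\tau\|\nu\|)$, which is why the paper's constants are $3$ and $18$. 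Your route instead bounds an a.e.\ gradient on the open set $\{\psi>0\}$ (where all conditioned Jacobian products satisfy $\opnorm{D_b\cdots D_a}\le 2\kappa_{b\ot a}$), uses $\nabla F=0$ a.e.\ on $\{\psi=0\}$ (since $F\le 1$ there with $F\equiv 1$, so differentiability points are maxima), and then invokes convexity of the slot to pass from the essential-sup gradient bound back to a Lipschitz constant. This yields the tighter constants $(2,8)$, which of course implies the stated $(3,18)$. Your $J_i$ case is essentially identical to the paper's Lemma~\ref{lem:QJ_i_lipschitz}: case-split on whether the indicators not involving $D_i$ are at most $2\kappa$, and if so exploit linearity of $D_i\mapsto A D_i B$.

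Two places that deserve to be made explicit if this is written up. First, the reduction ``Lipschitz constant $\le$ ess-sup of the gradient on a convex domain'' is legitimate but needs the usual Fubini/density step, since $\{\|\nabla F\|>M\}$ having measure zero in the slot does not \emph{a priori} imply it meets a given segment in measure zero; this is standard but worth a sentence. Second, the quantity you write as $\|\nabla_{v_i}\opnorm{D_b\cdots D_a}\|$ should be read as the Lipschitz constant of the scalar map $v_i\mapsto\opnorm{D_b(v_i)\cdots D_a(v_i)}$, bounded via the $1$-Lipschitzness of $\opnorm{\cdot}$ by the operator-norm of the derivative of the matrix-valued map $v_i\mapsto D_b\cdots D_a$ (the product rule and chain rule you invoke then apply to that matrix-valued map, which is $C^1$ since each $DR_{V_c}$ is Lipschitz and each $v_{c-1}(\cdot)$ is $C^1$); stating it this way avoids differentiating the non-smooth $\opnorm{\cdot}$, exactly as you flag. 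With these clarifications the argument is complete and subsumes the paper's Lemma~\ref{lem:lip}.
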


		\begin{wrapfigure}{!h!}{0.40\textwidth}
\caption{Lipschitz augmentation (formally defined).}\label{fig:computation_graph}
		\includegraphics[width=0.38\textwidth]{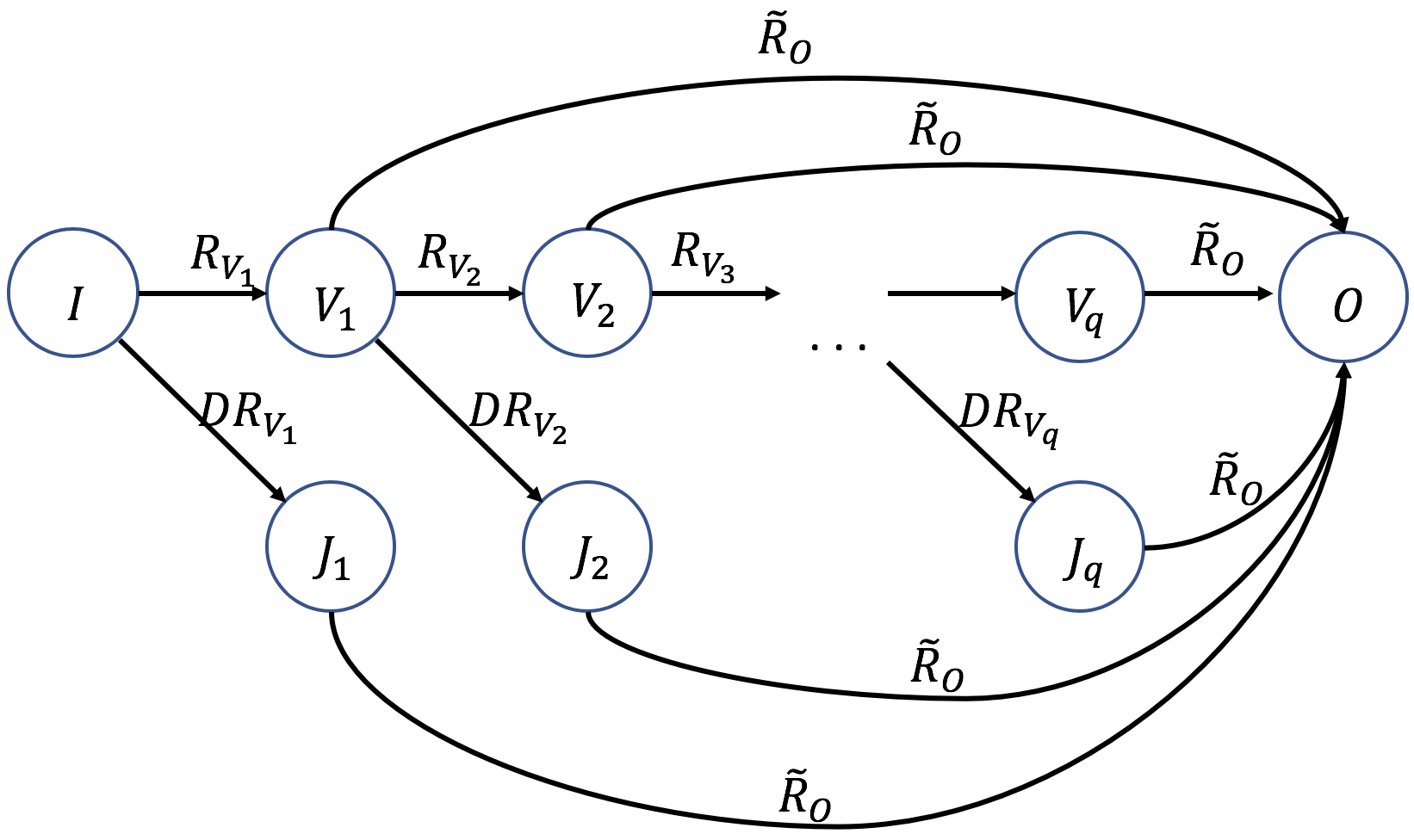}\\
	\end{wrapfigure}
	Finally, we combine Theorems~\ref{thm:covering_graph} and Theorems~\ref{thm:lip}~to derive covering number bounds for any Lipschitz augmentation of sequential computational graphs. The final covering bound in~\eqref{eq:sequential_covering-bound}~can be easily computed given covering number bounds for each individual function class. In Section~\ref{sec:neural_net_main}, we use this theorem to derive Rademacher complexity bounds for neural networks. The proof is deferred to Section~\ref{app:augmentation}. In Section~\ref{sec:recurrent}, we also use these tools to derive Rademacher complexity bounds for RNNs.
\begin{theorem}
	\label{thm:sequential_covering}
		Consider any family $\cG$ of sequential computational graphs satisfying the conditions of Theorem~\ref{thm:lip}. By combining the augmentation of Definition~\ref{def:aug}~with additional indicators on the internal node norms, we can construct a new family $\augG$ of computational graphs which output 
	\begin{align*}
		O_{\tildeG}(x) = (O_G(x) - 1) \prod_{i = 1}^\depthlocal \one[\le s_{V_i}] (\|V_i(x)\|) \prod_{1 \le i \le j \le \depthlocal} \one[\le \kappa_{j \ot i}] (\opnorm{DV_j(x) \cdots DV_i(x)}) + 1
	\end{align*}
	The family $\augG$ satisfies the following guarantees: 
	\begin{enumerate}
		\item Each computational graph in $\augG$ upper bounds its counterpart in $\cG$, i.e. $O_{\tildeG}(x) \ge O_G(x)$.
		\item Define 
$
		\laug_{V_i}' \triangleq \laug_{V_i} + \sum_{i \le j \le \depthlocal} s_{V_j}^{-1}\cdot \kappa_{j \ot i + 1}
$
				and $\laug_{J_i}' = \laug_{J_i}$ where $\laug_{V_i}, \laug_{J_i}$ are defined as in Theorem~\ref{thm:lip}. 
				Then for any node-wise errors $\{\epsilon_V\}$,\\
		\resizebox{0.9\textwidth}{!}{
			\begin{minipage}{0.9\textwidth}
		\begin{align}
&			\log \cover(\sum_{i \ge 1} \laug_{V_i}'\epsilon_{V_i} + \laug_{J_i}\epsilon_{J_i} + \epsilon_O, O_{\augG}, s_{\cI}) \label{eq:sequential_covering-bound}\\
			& \le \sum_{i \ge 1} \log \cover(\epsilon_{V_i}, \fR_{V_i}, 2s_{V_{i - 1}}) + \log \cover(\epsilon_{J_i}, D\fR_{V_i}, 2s_{V_{i - 1}}) + \log \cover(\epsilon_O, \fR_O, \{2s_{V_j}\}_{j = 1}^q \cup \{I\})\nonumber
		\end{align}
		\end{minipage}
		}\\
	
		where $D\fR_{V_i}$ denotes the family of total derivatives of functions in $\fR_{V_i}$ and $V_0$ the input vertex.	\end{enumerate}
\end{theorem}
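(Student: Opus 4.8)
The plan is to realize $\augG$ as a family of computational graphs obeying both hypotheses of Theorem~\ref{thm:covering_graph} and then read off the conclusion, using Theorem~\ref{thm:lip} as a black box for the release-Lipschitz hypothesis. The first step is to \emph{fold the internal-norm indicators into the output rule}: given a family $\cG$ satisfying the hypotheses of Theorem~\ref{thm:lip}, replace each output composition rule $R_O$ by $R_O'$, where $R_O'(x,v_1,\dots,v_\depthlocal) \triangleq (R_O(x,v_1,\dots,v_\depthlocal)-1)\prod_{i=1}^\depthlocal \one[\le s_{V_i}](\|v_i\|) + 1$. Since every soft indicator lies in $[0,1]$ and $R_O\in[0,1]$, the rule $R_O'$ still takes values in $[0,1]$, and a one-line product-rule estimate (using $|R_O-1|\le1$ and that the remaining indicator factors lie in $[0,1]$) shows $R_O'$ is $(c_i+s_{V_i}^{-1})$-Lipschitz in $v_i$; the derivative rules $DR_{V_i}$ are untouched. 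Hence the family $\cG'$ of graphs with these modified output rules again satisfies the hypotheses of Theorem~\ref{thm:lip}, now with $c_i$ replaced by $c_i+s_{V_i}^{-1}$. I would then let $\augG$ be the Lipschitz augmentation (Definition~\ref{def:aug}) of $\cG'$; unrolling as in~\eqref{eq:Otilde} gives exactly the displayed $O_{\tildeG}$, and since $O_G\le1$ and all the indicators lie in $[0,1]$, Claim~\ref{claim:indicator_stack} yields $O_{\tildeG}\ge O_G$, which is guarantee~1.

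For guarantee~2, Theorem~\ref{thm:lip} applied to $\cG'$ certifies that $\augG$ is release-Lipschitz along the forest ordering $(V_1,J_1,V_2,J_2,\dots,V_\depthlocal,J_\depthlocal)$ --- legal because $V_i$ and $J_i$ each depend only on $V_{i-1}$ (with $V_0=I$) --- with parameters obtained from the formulas of Theorem~\ref{thm:lip} by the substitution $c_i\mapsto c_i+s_{V_i}^{-1}$. Since $c_j$ enters those formulas only through the term $\sum_{i\le j}3c_j\kappa_{j\ot i+1}$ of $\laug_{V_i}$, this substitution adds $O\!\left(\sum_{i\le j\le\depthlocal}s_{V_j}^{-1}\kappa_{j\ot i+1}\right)$ in the $V_i$-directions --- matching $\laug_{V_i}'$ up to an absolute constant on the added term, which a slightly more careful telescoping recovers exactly --- and leaves the $J_i$-direction constants equal to $\laug_{J_i}$. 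Next I would verify condition~2 of Theorem~\ref{thm:covering_graph}: as soon as $\|V_i\|\ge2s_{V_i}$ the factor $\one[\le s_{V_i}](\|V_i\|)$ vanishes and $O_{\tildeG}\equiv1$, and as soon as the value $D_i$ of node $J_i$ has $\opnorm{D_i}\ge2\kappa_{i\ot i}$ the factor $\one[\le\kappa_{i\ot i}](\opnorm{D_i})$ vanishes and $O_{\tildeG}\equiv1$; both collapses persist in every released subgraph, so condition~2 holds with collapse thresholds $2s_{V_i}$ on $V_i$ and $2\kappa_{i\ot i}$ on $J_i$.

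Finally I would invoke Theorem~\ref{thm:covering_graph} with $\kappa_{V_i}=\laug_{V_i}'$, $\kappa_{J_i}=\laug_{J_i}$, and the thresholds above. The unique predecessor of both $V_i$ and $J_i$ is $V_{i-1}$ (with $V_0=I$), whose values on the non-collapsed region have norm $<2s_{V_{i-1}}$; this produces the terms $\log\cover(\epsilon_{V_i},\fR_{V_i},2s_{V_{i-1}})$ and $\log\cover(\epsilon_{J_i},\fR_{J_i},2s_{V_{i-1}})$, and since $\fR_{J_i}=\{DR_{V_i}:R_{V_i}\in\fR_{V_i}\}=D\fR_{V_i}$ the latter is $\log\cover(\epsilon_{J_i},D\fR_{V_i},2s_{V_{i-1}})$. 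At the output node the indicator factors are fixed over the family and the map $R_O\mapsto\tildeR_O$ is $1$-Lipschitz pointwise, so an $\epsilon_O$-cover of the original output rules over inputs of norms $\{2s_{V_j}\}_{j=1}^\depthlocal\cup\{I\}$ gives an $\epsilon_O$-cover of $\fR_O$ of the same size. Plugging these local covering bounds into the conclusion of Theorem~\ref{thm:covering_graph} yields exactly~\eqref{eq:sequential_covering-bound}.

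The genuinely hard content is already packaged in Theorem~\ref{thm:lip} --- certifying worst-case release-Lipschitzness of a Jacobian-augmented deep graph, where a naive Lipschitz constant is exponential in depth. Given that theorem, the remaining obstacle here is the careful bookkeeping: the $c_i\mapsto c_i+s_{V_i}^{-1}$ reduction for the norm indicators, the collapse thresholds that force the $2s_{V_{i-1}}$ arguments in the local covering numbers, and the observation that covering the augmented output rule reduces to covering the original one.
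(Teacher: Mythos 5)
Your proposal is correct and follows essentially the same route as the paper's own proof: fold the hidden-layer norm indicators into the output rule to form $\cG'$, observe that this only perturbs the output-rule Lipschitz constants $c_i \mapsto c_i + s_{V_i}^{-1}$, apply Theorem~\ref{thm:lip} to the Lipschitz-augmentation of $\cG'$, verify the collapse condition for Theorem~\ref{thm:covering_graph}, and reduce covering of $\tildeR_O$ to covering of $R_O$ since the indicator product $Q$ is common to all graphs in the family. Your explicit remark that the naive substitution produces $3\sum_{i\le j}s_{V_j}^{-1}\kappa_{j\ot i+1}$ rather than the stated $\sum_{i\le j}s_{V_j}^{-1}\kappa_{j\ot i+1}$ is a genuinely sharper reading than the paper's, which silently asserts the tighter constant; either way the bound holds up to an absolute constant so the conclusion is unaffected.
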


\section{Application to Neural Networks} \label{sec:neural_net_main}
In this section we provide our generalization bound for neural nets, which was obtained using machinery from Section~\ref{sec:computation_graph_8}. Define a neural network $F$ parameterized by $r$ weight matrices $\{W^{(i)}\}$ by $F(x) = W^{(r)} \phi(\cdots \phi(W^{(1)}(x)) \cdots)$. We use the convention that activations and matrix multiplications are treated as distinct layers indexed with a subscript, with odd layers applying a matrix multiplication and even layers applying $\phi$ (see Example~\ref{ex:neuralnet}~for a visualization). Additional notation details and the proof are in Section~\ref{sec:neural_net_app}.

The below result follows from modeling the neural net loss as a sequential computational graph and using our augmentation procedure to make it Lipschitz in its nodes with parameters $\kappah{i},\kappaj{i}$. Then we cover the augmented loss to bound its Rademacher complexity.
\begin{theorem}
	\label{thm:gen_union_bound}
	Assume that the activation $\phi$ is 1-Lipschitz with a $\bar{\sigma}_\phi$-Lipschitz derivative. Fix reference matrices $\{A^{(i)}\}$, $\{B^{(i)}\}$. With probability $1 - \delta$ over the random draws of the data $P_n$, all neural networks $F$ with parameters $\{W^{(i)}\}$ and positive margin $\gamma$ satisfy:
	{\small
	 \begin{align*}
	 	\Exp_{(x, y) \sim P} [l_{\textup{0-1}}(F(x), y)] \le \tilde{O}\left(\frac{\left(\sum_{i} (\kappah{i}a^{(i)} t^{(i - 1)})^{2/3} + (\kappaj{i}b^{(i)})^{2/3}\right)^{3/2}}{\sqrt{n}} + r\sqrt{\frac{ \log(1/\delta)}{n}}\right)
	 \end{align*}}
	 where 
$\kappaj{i} \triangleq \sum_{1 \le j \le 2i - 1 \le j' \le 2\depthnn - 1} \frac{\sigma_{j' \ot 2i} \sigma_{2i - 2 \ot j}}{\sigma_{j' \ot j}}$, and 
$
	\kappah{i} \triangleq \xi +\frac{\sigma_{2r - 1 \ot 2i}}{\gamma} + \sum_{i \le i' < r} \frac{\sigma_{2i' \ot 2i}}{t^{(i')}} + \sum_{1 \le j \le j' \le 2\depthnn - 1} \sum_{\substack{j'' = \max\{2i, j\},\\ j'' \textup{ even }}}^{j'} \frac{\bar{\sigma}_\phi \sigma_{j'\ot j'' + 1} \sigma_{j'' - 1 \ot 2i} \sigma_{j'' - 1 \ot j}}{\sigma_{j' \ot j}}$. 
	
	In these expressions, we define $\sigma_{j - 1 \ot j} = 1$, $\xi = \poly(r)^{-1}$, and:
 	\[a^{(i)} \triangleq \|{W^{(i)}}^\top - {A^{(i)}}^\top\|_{2, 1} + \xi, b^{(i)} \triangleq \|{W^{(i)}} - {B^{(i)}}\|_{1, 1} + \xi\]
	\[t^{(0)} \triangleq \max_{x  \in P_n} \|x\| + \xi, \ t^{(i)} \triangleq \max_{x \in P_n} \|F_{2i \ot 1}(x)\| + \xi\]
	 \[\sigma_{j' \ot j} \triangleq \max_{x \in P_n} \opnorm{Q_{j' \ot j}(x)} + \xi, \textup{ and } \gamma \triangleq \min_{(x, y) \in P_n} [F(x)]_y - \max_{y' \ne y} [F(x)]_{y'} > 0\] 
	where $Q_{j' \ot j}$ computes the Jacobian of layer $j'$ w.r.t. layer $j$. Note that the training error here is $0$ because of the existence of positive margin $\gamma$.
\end{theorem}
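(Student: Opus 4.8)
The plan is to instantiate the abstract machinery of Sections~\ref{sec:computation_graph}~and~\ref{sec:lipschitzaug} on the neural-net computational graph of Example~\ref{ex:neuralnet}, and then push the resulting covering bound through Dudley's entropy integral and a standard Rademacher-to-generalization step. Concretely, I would model $x \mapsto \ell(F(x), y)$ (with $\ell$ the $\gamma^{-1}$-Lipschitz margin ramp loss, valued in $[0,1]$) as a sequential computational graph with $q = 2\depthnn - 1$ internal nodes: the odd nodes $V_{2i-1}$ apply $v\mapsto W^{(i)}v$ and the even nodes $V_{2i}$ apply $\phi$ coordinatewise, while the output rule $R_O$ is the ramp loss of the margin, which depends only on the last internal node $V_{2\depthnn-1}=F(x)$. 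Thus in the notation of Theorem~\ref{thm:lip} we have $c_{2\depthnn-1}=\gamma^{-1}$ and $c_\ell = 0$ otherwise, and $DR_{V_\ell}$ is $\bar\sigma_\phi$-Lipschitz for even $\ell$ and constant (hence $0$-Lipschitz) for odd $\ell$. I would apply the augmentation of Definition~\ref{def:aug} together with the internal-node-norm indicators of Theorem~\ref{thm:sequential_covering}, choosing the Jacobian bounds $\kappa_{j'\ot j}$ and node-norm bounds $s_{V_\ell}$ to be exactly the empirical maxima $\sigma_{j'\ot j}=\max_{x\in P_n}\opnorm{Q_{j'\ot j}(x)}+\xi$ and $t^{(i)}=\max_{x\in P_n}\|F_{2i\ot 1}(x)\|+\xi$ appearing in the statement (the $\xi=\poly(\depthnn)^{-1}$ slack keeping everything bounded below). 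With this choice every soft indicator equals $1$ on the training set, so $O_{\augG}(x)=O_G(x)$ for $x\in P_n$, and since $F$ has positive margin the empirical risk of $O_{\augG}$ is $0$.

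Next I would read off the release-Lipschitz constants from Theorem~\ref{thm:lip}: substituting $c_{2\depthnn-1}=\gamma^{-1}$, $\ulip_{i'}=\bar\sigma_\phi$ for even $i'$ and $0$ for odd $i'$, together with the $s_{V_\ell}^{-1}$ corrections of Theorem~\ref{thm:sequential_covering}(2), the parameter $\laug'_{V_{2i-1}}$ collapses to exactly $\kappah{i}$ and $\laug_{J_{2i-1}}$ to exactly $\kappaj{i}$; the even (activation) nodes contribute release-Lipschitz terms that get absorbed since their composition rules and their derivatives are data-independent singletons. I would then bound the local covering numbers in~\eqref{eq:sequential_covering-bound}: for the activation nodes $\fR_{V_{2i}}$ and $D\fR_{V_{2i}}$ are singletons (log covering $0$); for the matrix nodes, a Maurey-type sparsification argument in the style of~\citet{bartlett2017spectrally} gives $\log\cover(\epsilon_{V_{2i-1}},\fR_{V_{2i-1}},2t^{(i-1)})=\tilde O\big((t^{(i-1)}a^{(i)})^2/\epsilon_{V_{2i-1}}^2\big)$ with $a^{(i)}=\|{W^{(i)}}^\top-{A^{(i)}}^\top\|_{2,1}+\xi$, and since $DR_{V_{2i-1}}$ is the constant map equal to $W^{(i)}$ its cover in the (data-independent) operator metric satisfies $\log\cover(\epsilon_{J_{2i-1}},D\fR_{V_{2i-1}},\cdot)=\tilde O\big((b^{(i)})^2/\epsilon_{J_{2i-1}}^2\big)$ with $b^{(i)}=\|W^{(i)}-B^{(i)}\|_{1,1}+\xi$ — crucially with no $t^{(i-1)}$ factor, which is why none appears in the $b^{(i)}$ term of the final bound. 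The output rule $\fR_O$ is a singleton.

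Plugging these into~\eqref{eq:sequential_covering-bound} yields $\log\cover\big(\sum_i \kappah{i}\epsilon_{V_{2i-1}}+\kappaj{i}\epsilon_{J_{2i-1}}+\epsilon_O,\,O_{\augG},\,s_\cI\big)\le \tilde O\big(\sum_i (t^{(i-1)}a^{(i)})^2/\epsilon_{V_{2i-1}}^2+(b^{(i)})^2/\epsilon_{J_{2i-1}}^2\big)$. For a target resolution $\epsilon$ I would optimize the allocation of the $\epsilon_{V_{2i-1}},\epsilon_{J_{2i-1}}$ via the standard Lagrange computation (minimizing $\sum_k\alpha_k^2/\epsilon_k^2$ subject to $\sum_k\beta_k\epsilon_k\le\epsilon$ gives $\big(\sum_k(\alpha_k\beta_k)^{2/3}\big)^3/\epsilon^2$), obtaining $\log\cover(\epsilon,O_{\augG},s_\cI)=\tilde O\big(\big(\sum_i(\kappah{i}a^{(i)}t^{(i-1)})^{2/3}+(\kappaj{i}b^{(i)})^{2/3}\big)^3/\epsilon^2\big)$. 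Feeding this $1/\epsilon^2$-type bound into Dudley's entropy integral (the $\xi=\poly(\depthnn)^{-1}$ slack caps the integral's lower limit) bounds $\rad(O_{\augG})$ by $\tilde O\big(\big(\sum_i(\kappah{i}a^{(i)}t^{(i-1)})^{2/3}+(\kappaj{i}b^{(i)})^{2/3}\big)^{3/2}/\sqrt n\big)$. Finally, since $O_{\augG}$ is $[0,1]$-valued and pointwise upper bounds $l_{\textup{0-1}}$, the standard Rademacher generalization theorem plus McDiarmid gives $\Exp_{P}[l_{\textup{0-1}}(F(x),y)]\le\Exp_{P_n}[O_{\augG}]+2\rad(O_{\augG})+O(\sqrt{\log(1/\delta)/n})$; since $\Exp_{P_n}[O_{\augG}]=0$, and the $O(\depthnn^2)$ data-dependent quantities $(t^{(i)},\sigma_{j'\ot j})$ used to define the augmentation must be handled by a union bound over a logarithmically-fine grid (which contributes the extra $\depthnn\sqrt{\log(1/\delta)/n}$ term), this yields the claimed bound.

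I expect the main obstacle to be two pieces of bookkeeping rather than any new idea. First, verifying that the general release-Lipschitz formulas $\laug_{V_i},\laug_{J_i}$ of Theorem~\ref{thm:lip} really do specialize to the stated $\kappah{i},\kappaj{i}$ once the interleaved matrix/activation structure is substituted — in particular tracking how the smooth-activation second-order term $\bar\sigma_\phi$ produces the double sum $\sum_{j\le j'}\sum_{j''\textup{ even}}$, how the $s_{V_\ell}^{-1}$ corrections yield the $\sum_{i\le i'<r}\sigma_{2i'\ot 2i}/t^{(i')}$ term, and confirming the non-homogeneous $\xi$-slack propagates consistently through all of $a^{(i)},b^{(i)},t^{(i)},\sigma_{j'\ot j},\gamma$. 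Second, pinning down the local covering-number bounds for the matrix layers in the relative $\|\cdot\|_{2,1}$ and $\|\cdot\|_{1,1}$ norms, including the observation that covering $D\fR_{V_{2i-1}}$ (a family of constant maps) is genuinely data-independent; this is where I would be most careful, as it determines whether the final bound has the correct norm dependence and whether the product-of-norms blow-up is avoided.
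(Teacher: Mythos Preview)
Your proposal is correct and follows essentially the same route as the paper: instantiate Theorem~\ref{thm:sequential_covering} on the sequential neural-net graph, read off $\laug'_{V_{2i-1}}=\kappah{i}$ and $\laug_{J_{2i-1}}=\kappaj{i}$, cover the odd matrix layers via the Maurey/$\|\cdot\|_{2,1}$ lemma (paper's Lemma~\ref{lem:two_one_norm}) and the constant-Jacobian layers via an $\ell_1$-ball cover, optimize the $\epsilon$-allocation to get a $(\beta^\star)^2/\epsilon^2$ log-covering bound, apply Dudley, and finally union-bound over a dyadic grid of all scalar parameters. One small omission: the grid must range not only over $(t^{(i)},\sigma_{j'\ot j})$ but also over $a^{(i)},b^{(i)},\gamma$, since the matrix-norm constraints defining $\cF$ and the ramp-loss scale must be fixed before you can invoke the Rademacher bound; the paper handles all of these simultaneously, and this is what produces the $\xi$ slack uniformly across every quantity in the statement.
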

We note that our bound has no explicit dependence on width and instead depends on the $\|\cdot \|_{2, 1}, \|\cdot \|_{1, 1}$ norms of the weights offset by reference matrices $\{A^{(i)}\}, \{B^{(i)}\}$. These norms can avoid scaling with the width of the network if the difference between the weights and reference matrices is sparse. The reference matrices $\{A^{(i)}\}, \{B^{(i)}\}$ are useful if there is some prior belief before training about what weight matrices are learned, and they also appear in the bounds of~\citet{bartlett2017spectrally}. In Section~\ref{sec:recurrent}, we also show that our techniques can easily be extended to provide generalization bounds for RNNs scaling polynomially in depth via the same quantities $t^{(i)}, \sigma_{j' \ot j}$.

\section{Experiments}\label{sec:experiments}
Though the main purpose of the paper is to study the data-dependent generalization bounds from a theoretical perspective, we provide preliminary experiments demonstrating that the proposed complexity measure and generalization bounds are empirically relevant. We show that regularizing the complexity measure leads to better test accuracy. Inspired by Theorem~\ref{thm:gen_union_bound}, we directly regularize the Jacobian of the classification margin w.r.t outputs of normalization layers and after residual blocks. Our reasoning is that normalization layers control the hidden layer norms, so additionally regularizing the Jacobians results in regularization of the product, which appears in our bound. We find that this is effective for improving test accuracy in a variety of settings. We note that \citet{sokolic2017robust} show positive experimental results for a similar regularization technique in data-limited settings. 

Suppose that $m(F(x), y)= [F(x)]_y - \max_{j\ne y}[t]_j$ denotes the margin of the network for example $(x, y)$. Letting $h^{(i)}$ denote some hidden layer of the network, we define the notation 
$
J^{(i)} \triangleq \frac{\partial}{\partial h^{(i)}}m(F(x),y) 
$
and use training objective
\begin{align*}
\hat{L}_{\textup{reg}}[F] \triangleq \E_{(x,y)\sim P_n} \left[l(x,y) + \lambda \left(\sum_{i}\1(\|J^{(i)}(x)\|_F^2 \ge \sigma)\|J^{(i)}(x)\|_F^2\right) \right]
\end{align*}
where $l$ denotes the standard cross entropy loss, and $\lambda, \sigma$ are hyperparameters. Note the Jacobian is taken with respect to a scalar output and therefore is a vector, so it is easy to compute. 

For a WideResNet16 \citep{zagoruyko2016wide} architecture, we train using the above objective. The threshold on the Frobenius norm in the regularization is inspired by the truncations in our augmented loss (in all our experiments, we choose $\sigma= 0.1$). We tune the coefficient $\lambda$ as a hyperparameter. In our experiments, we took the regularized indices $i$ to be last layers in each residual block as well as layers in residual blocks following a BatchNorm in the standard WideResNet16 architecture. In the LayerNorm setting, we simply replaced BatchNorm layers with LayerNorm. The remaining hyperparameter settings are standard for WideResNet; for additional details see Section~\ref{sec:exp_details}.

Figure~\ref{fig:jreg}~shows the results for models trained and tested on CIFAR10 in low learning rate and no data augmentation settings, which are settings where generalization typically suffers. We also experiment with replacing BatchNorm layers with LayerNorm and additionally regularizing the Jacobian. We observe improvements in test error for all these settings. In Section~\ref{sec:empirical_comparison}, we empirically demonstrate that our complexity measure indeed avoids the exponential scaling in depth for a WideResNet model trained on CIFAR10. 

\begin{table}
\centering
	\caption{Test error for a model trained on CIFAR10 in various settings.}
\label{fig:jreg}
			\begin{tabular}{c  c  c  c} 				Setting & Normalization & Jacobian Reg & Test Error\\
				\cline{1-4}
				Baseline & BatchNorm & $\times$ & 4.43\%\\
				\cline{1-4}
				\multirow{2}{*}{Low learning rate (0.01)}& \multirow{2}{*}{BatchNorm} & $\times$ & 5.98\%\\
				 & & $\checkmark$ &  \textbf{5.46}\%\\
				\cline{1-4}
				\multirow{2}{*}{No data augmentation} & \multirow{2}{*}{BatchNorm} & $\times$ & 10.44\%\\
				 & & $\checkmark$ & \textbf{8.25\%}\\
				\cline{1-4}
				\multirow{3}{*}{No BatchNorm} & None & $\times$ & 6.65\%\\
				\cline{2-3}
				&\multirow{2}{*}{LayerNorm~\citep{ba2016layer}} & $\times$ & 6.20\%\\
				&  & $\checkmark$ & \textbf{5.57\%}
			\end{tabular}
\end{table}

\section{Conclusion}
In this paper, we tackle the question of how data-dependent properties affect generalization. We prove tighter generalization bounds that depend polynomially on the hidden layer norms and norms of the interlayer Jacobians. To prove these bounds, we work with the abstraction of computational graphs and develop general tools to augment any sequential family of computational graphs into a Lipschitz family and then cover this Lipschitz family. This augmentation and covering procedure applies to any sequence of function compositions. An interesting direction for future work is to generalize our techniques to arbitrary computational graph structures. In follow-up work~\citep{wei2019improved}, we develop a simpler technique to derive Jacobian-based generalization bounds for both robust and clean accuracy, and we present an algorithm inspired by this theory which empirically improves performance over strong baselines.

\section*{Acknowledgments}
CW was supported by a NSF Graduate Research Fellowship. Toyota Research Institute (TRI) provided funds to assist the authors with their research but this article solely reflects the opinions and conclusions of its authors and not TRI or any other Toyota entity.

\appendix
\bibliography{refs}
\bibliographystyle{plainnat}
\newpage

\section{Missing Proofs for Section~\ref{sec:neural_net_main}}\label{sec:neural_net_app}
We first elaborate more on the notations introduced in Section~\ref{sec:neural_net_main}. First, by our indexing, matrix $W^{(i)}$ will be applied in layer $2i - 1$ of the network, and even layers $2i$ apply $\phi$. We let $F_{j' \ot j}$ denote the function computed between layers $j$ and $j'$ and $Q_{j' \ot j} = DF_{j' \ot j} \circ F_{j' -1 \ot 1}$ denote the layer $j$-to-$j'$ Jacobian. By our definition of $F_{j' \ot j}$, $F_{2j \ot 2j} = \phi$, $F_{2j - 1 \ot 2j - 1} = h \mapsto W^{(j)}h$, and $F_{j' \ot j}$ is recursively computed by $F_{j' \ot j'} \circ F_{j' - 1 \ot j}$ for $j' > j$. We will use the convention that $F_{j - 1 \ot j}$ computes the identity mapping for $i \le j$. 

$P$ will denote a test distribution over examples $x$ and labels $y$, and $P_n$ will denote the distribution on training examples.

For a class of real-valued functions $\cL$ and dataset $P_n$, define the empirical Rademacher complexity of this function class by 
\begin{align}
\label{eq:rademacher}
\rad(\cL) = \frac{1}{n}\Exp_{\alpha_i}\left[\sup_{l \in \cL} \sum_{i} \alpha_i l(x_i)\right]
\end{align} 
where $\alpha_i$ are independent uniform $\pm 1$ random variables. Let $m(t, y) \triangleq [t]_y - \max_{j\ne y}[t]_j$ denote the margin operator for label $y$, and $l_{\gamma}(t, y)\triangleq \one(m(t, y) \le 0) - \one(0< m(t, y) \le \gamma)\cdot m(t, y)/\gamma$ denote the standard ramp loss, which is $1/\gamma$-Lipschitz. We will work in the neural network setting defined in Section~\ref{sec:neural_net_main}. We will first state our generalization bound for neural networks. 

\begin{theorem}\label{thm:gen_no_union_bound}
	Assume that the activation $\phi$ is $1$-Lipschitz with $\bar{\sigma}_{\phi}$-Lipschitz derivative. Fix parameters $\sigma_{j' \ot j}$, $t^{(i)}$, $a^{(i)}$, $b^{(i)}$, $\gamma$ and reference matrices $\{A^{(i)}\}$, $\{B^{(i)}\}$. With probability $1 - \delta$ over the random draws of the distribution $P_n$, all neural networks $F$ with parameters $\{W^{(i)}\}$ satisfying the following data-dependent conditions: 
	\begin{enumerate}
		\item Hidden layers norms are controlled: $\max_{x \in P_n}\|F_{2i \ot 1}(x)\| \le t^{(i)} \ \forall 1 \le i \le \depthnn$. 
		\item Jacobians are balanced: $\max_{x \in P_n} \opnorm{Q_{j' \ot j}(x)} \le \sigma_{j' \ot j} \ \forall j < j'$. 
		\item The margin is large: $\min_{(x, y) \in P_n} [F(x)]_y - \max_{y' \ne y} [F(x)]_{y'} \ge \gamma > 0$.
	\end{enumerate}
	and the additional data-independent condition $$\|{W^{(i)}}^\top - {A^{(i)}}^\top\|_{2, 1} \le a^{(i)}, \|{W^{(i)}}- {B^{(i)}}\|_{1, 1} \le b^{(i)}, \opnorm{W^{(i)}} \le \sigma_{2i - 1 \ot 2i - 1}$$
	will have the following generalization to test data: 
	{	\begin{align*}
	\Exp_{(x, y) \sim \cD}[l_{\textup{0-1}}(F(x), y)] \le \tilde{O}\left(\frac{\left(\sum_{i} (\kappah{i} a^{(i)} t^{(i - 1)})^{2/3} + (\kappaj{i}b^{(i)})^{2/3}\right)^{3/2}}{\sqrt{n}}\right) + \sqrt{\frac{\log(1/\delta)}{n}}
	\end{align*}}
	where 
	\begin{align}
	\kappaj{i} \triangleq \sum_{1 \le j \le 2i - 1 \le j' \le 2\depthnn - 1} \frac{4\sigma_{j' \ot 2i} \sigma_{2i - 2 \ot j}}{\sigma_{j' \ot j}}\label{eq:kappa_j}
	\end{align}
	\begin{align}
	\begin{split}
	\kappah{i} \triangleq &\frac{\sigma_{2r - 1 \ot 2i}}{\gamma} + \sum_{i \le i' < r} \frac{3\sigma_{2i' \ot 2i}}{t^{(i')}} \\&+ \sum_{1 \le j \le j' \le 2\depthnn - 1} \sum_{\substack{j'' = \max\{2i, j\},\\ j'' \textup{ even }}}^{j'} \frac{\bar{\sigma}_\phi \sigma_{j'\ot j'' + 1} \sigma_{j'' - 1 \ot 2i} \sigma_{j'' - 1 \ot j}}{\sigma_{j' \ot j}}\label{eq:kappa_h}	\end{split}
	\end{align}
	Here we use the convention that $\sigma_{j - 1 \ot j} = 1$ and let $t^{(0)} = \max_{x \in P_n} \|x\|$. 
	\end{theorem}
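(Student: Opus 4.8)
The plan is to realize the ramp loss composed with $\nn$ as the output of a sequential computational graph, feed it through the Lipschitz-augmentation and graph-covering machinery of Theorems~\ref{thm:sequential_covering},~\ref{thm:lip}~and~\ref{thm:covering_graph}, bound the covering number at each node with standard matrix covering lemmas, and finish with Dudley's entropy integral and the textbook Rademacher-complexity generalization bound. Concretely, I would model $l_\gamma(\nn(\cdot),\cdot)$ as a sequential graph $G$ whose input is $x$ (with the label $y$ fed directly into the output rule), whose internal nodes $V_1,\dots,V_{2\depthnn-1}$ alternate between the linear map $h\mapsto W^{(i)}h$ (the odd nodes $V_{2i-1}$) and the activation $\phi$ (the even nodes $V_{2i}$), and whose output node $O$ computes $l_\gamma(\cdot,y)$; the family $\cG$ ranges over all weight tuples $\{W^{(i)}\}$ obeying the data-independent norm constraints. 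The hypotheses of Theorem~\ref{thm:lip}~are then immediate: $R_O=l_\gamma(\cdot,y)$ depends only on $V_{2\depthnn-1}$ and is $\gamma^{-1}$-Lipschitz there, $DR_{V_{2i-1}}$ is the constant operator $W^{(i)}$ and hence $0$-Lipschitz, and $DR_{V_{2i}}=\diag(\phi'(\cdot))$ is $\bar\sigma_\phi$-Lipschitz, so one takes $\ulip_{2i-1}=0$, $\ulip_{2i}=\bar\sigma_\phi$, and $c_{2\depthnn-1}=\gamma^{-1}$.

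Next I would instantiate the generic parameters of the augmentation as $s_{V_{2i}}=t^{(i)}$, $s_{V_{2i-1}}=\sigma_{2i-1\ot 2i-1}\,t^{(i-1)}$, and $\kappa_{j'\ot j}=\sigma_{j'\ot j}$, and apply Theorem~\ref{thm:sequential_covering}~to obtain an augmented family $\augG$ with $O_{\tildeG}\ge O_G$ pointwise and the covering bound~\eqref{eq:sequential_covering-bound}, which reduces covering $O_{\augG}$ to covering the per-node classes $\fR_{V_i}$, $D\fR_{V_i}$, and $\fR_O$. The one delicate point here is a routine-but-careful bookkeeping computation: tracking the index shift between ``layers $1,\dots,2\depthnn-1$'' and ``blocks $1,\dots,\depthnn$'' and absorbing the $\xi=\poly(\depthnn)^{-1}$ perturbations in $a^{(i)},b^{(i)},t^{(i)},\sigma_{j'\ot j}$, one checks that the release-Lipschitz parameters $\laug_{V_{2i-1}}'$ and $\laug_{J_{2i-1}}$ produced by Theorems~\ref{thm:lip}~and~\ref{thm:sequential_covering}~coincide, up to constants, with $\kappah{i}$ and $\kappaj{i}$ of~\eqref{eq:kappa_h}~and~\eqref{eq:kappa_j}.

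For the per-node covering numbers, the even internal nodes and the output contribute nothing: $\fR_{V_{2i}}=\{\phi\}$, $D\fR_{V_{2i}}=\{\diag(\phi'(\cdot))\}$, and $\fR_O=\{l_\gamma(\cdot,y)\}$ are singletons, so their log covering numbers are $0$ and I may send $\epsilon_O\to 0$. For the odd nodes, $\fR_{V_{2i-1}}=\{h\mapsto W^{(i)}h:\|{W^{(i)}}^\top-{A^{(i)}}^\top\|_{2,1}\le a^{(i)}\}$ is covered in the $L_2(P_n)$ metric by the standard matrix covering lemma of~\citet{bartlett2017spectrally}, giving $\log\cover(\epsilon,\fR_{V_{2i-1}},s)\lesssim (a^{(i)})^2 s^2\epsilon^{-2}\log(\cdot)$, while $D\fR_{V_{2i-1}}=\{h\mapsto W^{(i)}:\|W^{(i)}-B^{(i)}\|_{1,1}\le b^{(i)}\}$ is a class of constant functions that I would cover in operator norm by a Maurey-type argument---expressing $W^{(i)}-B^{(i)}$ as a rescaled convex combination of signed rank-one coordinate matrices---obtaining $\log\cover(\epsilon,D\fR_{V_{2i-1}},\cdot)\lesssim (b^{(i)})^2\epsilon^{-2}\log(\cdot)$ with only logarithmic dependence on the width. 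Substituting $s_{V_{2i-2}}=t^{(i-1)}$ and collecting the nonzero terms, the graph bound becomes $\log\cover\bigl(\sum_i\kappah{i}\epsilon_{V_{2i-1}}+\kappaj{i}\epsilon_{J_{2i-1}},O_{\augG},s_\cI\bigr)\lesssim\sum_i\tilde O\bigl((a^{(i)}t^{(i-1)})^2\epsilon_{V_{2i-1}}^{-2}+(b^{(i)})^2\epsilon_{J_{2i-1}}^{-2}\bigr)$.

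To finish, I would apply Dudley's entropy integral to bound $\rad(O_{\augG})$ and jointly optimize the resolutions $\{\epsilon_{V_{2i-1}}\},\{\epsilon_{J_{2i-1}}\}$; the standard computation with covering numbers $\sim C_k/\epsilon_k^2$ and total resolution $\sum_k\kappa_k\epsilon_k$ gives $\rad(O_{\augG})\le\tilde O\bigl((\sum_k(\kappa_k\sqrt{C_k})^{2/3})^{3/2}/\sqrt n\bigr)$, i.e.\ $\tilde O\bigl((\sum_i(\kappah{i}a^{(i)}t^{(i-1)})^{2/3}+(\kappaj{i}b^{(i)})^{2/3})^{3/2}/\sqrt n\bigr)$. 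The textbook Rademacher bound then gives, with probability $1-\delta$, $\Exp_P[O_{\tildeG}]\le\Exp_{P_n}[O_{\tildeG}]+2\rad(O_{\augG})+O(\sqrt{\log(1/\delta)/n})$; on $P_n$ the data-dependent conditions (1)--(3) force every soft indicator to equal $1$, so $O_{\tildeG}=l_\gamma(\nn(\cdot),\cdot)$, which vanishes since the margin is positive, whence $\Exp_{P_n}[O_{\tildeG}]=0$; and since $l_{\textup{0-1}}\le l_\gamma\le O_{\tildeG}$ pointwise, $\Exp_P[l_{\textup{0-1}}(\nn(x),y)]\le\Exp_P[O_{\tildeG}]$ gives the claim. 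I expect the main obstacle to be the bookkeeping of the second step---verifying that the abstract release-Lipschitz constants collapse exactly to the stated $\kappah{i},\kappaj{i}$---together with the operator-norm covering of the derivative class $D\fR_{V_{2i-1}}$, where the Maurey estimate must be kept logarithmic (not polynomial) in the width; the rest (Dudley, optimizing $\epsilon$, relating $l_{\textup{0-1}}$ to $O_{\tildeG}$) is standard.
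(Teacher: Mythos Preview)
Your proposal is correct and follows essentially the same route as the paper: model $l_\gamma\circ\nn$ as a sequential graph, apply Theorem~\ref{thm:sequential_covering} (which packages Theorems~\ref{thm:lip} and~\ref{thm:covering_graph}), cover the odd nodes with the $\|\cdot\|_{2,1}$ matrix lemma and their derivatives via an $\ell_1$-ball/Maurey argument, run Dudley with the standard $2/3$-power optimization, and finish with the Rademacher generalization bound plus $l_{0\textup{-}1}\le l_\gamma\le O_{\tildeG}$ and the observation that the data-dependent conditions force all indicators to $1$ on $P_n$.

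Two small cleanups: the paper takes $s_{V_{2i-1}}=\infty$ rather than $\sigma_{2i-1\ot 2i-1}t^{(i-1)}$, so the odd-layer norm indicators are identically $1$ and contribute no extra $s_{V_{2i-1}}^{-1}$ terms to $\laug_{V_{2i-1}}'$---this is what makes the release-Lipschitz constant collapse exactly to the stated $\kappah{i}$ (note it works because the even-node classes are singletons, so their covering number is $0$ regardless of the norm bound on their inputs). Also, the $\xi=\poly(r)^{-1}$ perturbations do not appear in this theorem at all; the parameters $a^{(i)},b^{(i)},t^{(i)},\sigma_{j'\ot j},\gamma$ are fixed here, and $\xi$ enters only later in Theorem~\ref{thm:gen_union_bound} when union-bounding over a grid.
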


This generalization bound follows straightforwardly via the below Rademacher complexity bound for the augmented loss class: 
\begin{theorem}
	\label{thm:nnrad}
	Suppose that $\phi$ is $1$-Lipschitz with $\bar{\sigma}_{\phi}$-Lipschitz derivative. Define the following class of neural networks with norm bounds on its weight matrices with respect to reference matrices $\{A^{(i)}\}, \{B^{(i)}\}$: 
	\begin{align*}
	\cF \triangleq \left\{x \mapsto F(x) : \|{W^{(i)}}^\top - {A^{(i)}}^\top\|_{2, 1} \le a^{(i)}, \|{W^{(i)}} - {B^{(i)}}\|_{1, 1} \le b^{(i)}, \opnorm{W^{(i)}} \le \sigma^{(i)} \right\}
	\end{align*}
	Fix parameters $t^{(i)}$ and $\sigma_{j' \ot j}$ for $j' \ge j$ with
	$\sigma_{2i \ot 2i} = 1$ and $\sigma_{2i - 1 \ot 2i - 1} = \sigma^{(i)}$. When we apply this theorem, we will choose $\sigma_{j' \ot j}$ and $t^{(i)}$ which upper bound the layer $j$ to $j'$ Jacobian norm and $i$-th hidden layer norm, respectively. Define the class of augmented losses 
	\begin{align*}
	\cL_{\textup{aug}} \triangleq \left\{(l_{\gamma} - 1)\circ F \prod_{i = 1}^{\depthnn - 1} \one[\le t^{(i)}](\|F_{2i \ot 1}\|) \prod_{1 \le j < j' \le 2\depthnn - 1} \one[\le \sigma_{j' \ot j}] (\opnorm{Q_{j' \ot j}}) + 1: F \in \cF\right\}
	\end{align*}
	and define for $1 \le i \le \depthnn$, $\kappaj{i}, \kappah{i}$ meant to bound the influence of the matrix $W^{(i)}$ on the Jacobians and hidden variables, respectively as in~\eqref{eq:kappa_j},~\eqref{eq:kappa_h}.
		Then we can bound the empirical Rademacher complexity of the augmented loss class by
	\begin{align*}
	\rad(\cL_{\textup{aug}}) = \tilde{O}\left(\frac{\left(\sum_i (\kappah{i} a^{(i)} t^{(i- 1)})^{2/3} + (\kappaj{i}b^{(i)})^{2/3}\right)^{3/2}}{\sqrt{n}}\right) 
	\end{align*}
	where we recall that the notation $\tilde{O}$ hides log factors in the arguments and the dimension of the weight matrices. 
\end{theorem}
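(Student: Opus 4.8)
The plan is to instantiate the abstract machinery of Theorems~\ref{thm:sequential_covering} and~\ref{thm:covering_graph} with the sequential computational graph for the neural net loss, then pass from the covering number bound to the Rademacher complexity bound via Dudley's entropy integral. First I would set up the sequential graph: the input node is $I$, the $2\depthnn - 1$ internal nodes $V_1, \ldots, V_{2\depthnn - 1}$ compute the layers $F_{j \ot 1}$ (odd $V$ applies a matrix $W^{(i)}$, even $V$ applies $\phi$), and the output node $O$ computes the ramp loss $l_\gamma$. The composition rule at an odd node is the linear map $h \mapsto W^{(i)}h$, whose total derivative $D\fR_{V_{2i-1}}$ is constant (equal to $W^{(i)}$); at an even node it is $\phi$, whose total derivative is $\diag(\phi'(\cdot))$. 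Since $\phi$ is $1$-Lipschitz with $\bar\sigma_\phi$-Lipschitz derivative, the hypotheses of Theorem~\ref{thm:lip} hold with $c_i$ reflecting the ramp loss Lipschitzness ($1/\gamma$ at the last layer) and $\ulip_i = \bar\sigma_\phi$ at even layers, $\ulip_i = 0$ at odd layers. We then choose the augmentation parameters $\kappa_{j'\ot j}$ and $s_{V_i}$ to be the data-dependent Jacobian and hidden-layer norm bounds $\sigma_{j'\ot j}, t^{(i)}$. Plugging these into the formulas for $\laug_{V_i}, \laug_{J_i}$ from Theorem~\ref{thm:lip} (and the norm-truncation correction $\laug_{V_i}'$ from Theorem~\ref{thm:sequential_covering}) and bookkeeping the index sums over even/odd layers yields exactly the quantities $\kappah{i}$ and $\kappaj{i}$ defined in~\eqref{eq:kappa_j},~\eqref{eq:kappa_h}; this is a somewhat tedious but mechanical translation.

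Next I would bound the local covering numbers appearing on the right-hand side of~\eqref{eq:sequential_covering-bound}. For an odd node, $\fR_{V_{2i-1}}$ is the class of linear maps $h \mapsto W^{(i)}h$ with $\|{W^{(i)}}^\top - {A^{(i)}}^\top\|_{2,1} \le a^{(i)}$; its covering number in the $L_2(P_n)$ metric on inputs of norm at most $2t^{(i-1)}$ is controlled by the standard Maurey/Zhang-style argument, giving $\log \cover(\epsilon, \fR_{V_{2i-1}}, 2t^{(i-1)}) \lesssim \frac{(a^{(i)} t^{(i-1)})^2}{\epsilon^2}\log(\text{dim})$. Crucially the derivative class $D\fR_{V_{2i-1}}$ is a class of \emph{constant} functions equal to $W^{(i)}$, so covering it amounts to covering the matrices $W^{(i)}$ in operator norm induced by the relevant input/output norms; this is again a $\|\cdot\|_{1,1}$-type bound giving $\log\cover(\epsilon, D\fR_{V_{2i-1}}, \cdot) \lesssim \frac{(b^{(i)})^2}{\epsilon^2}\log(\text{dim})$. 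For even nodes $\phi$ is a fixed $1$-Lipschitz function and contributes a singleton class with zero log covering number (both for $\fR$ and $D\fR$, since $\phi'$ is also fixed). The output rule $\fR_O$ is likewise essentially a fixed function of its many inputs, contributing only a lower-order $\epsilon_O$ term. So the only nontrivial contributions come from the odd layers.

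Finally I would optimize. Plugging the local bounds into~\eqref{eq:sequential_covering-bound} gives $\log\cover(\sum_i \laug_{V_i}'\epsilon_{V_i} + \laug_{J_i}\epsilon_{J_i} + \epsilon_O, \cL_{\textup{aug}}, t^{(0)}) \lesssim \sum_i \big(\frac{(a^{(i)}t^{(i-1)})^2}{\epsilon_{V_i}^2} + \frac{(b^{(i)})^2}{\epsilon_{J_i}^2}\big)\log(\text{dim})$. Then I would apply Dudley's entropy integral to bound $\rad(\cL_{\textup{aug}})$; the familiar calculation is to choose each $\epsilon_{V_i}, \epsilon_{J_i}$ to balance the two sides of the integral, which after the standard $\ell_{2/3}$ Hölder-type rebalancing over the $i$'s produces the $\big(\sum_i (\kappah{i}a^{(i)}t^{(i-1)})^{2/3} + (\kappaj{i}b^{(i)})^{2/3}\big)^{3/2}/\sqrt n$ form. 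I expect the main obstacle to be \emph{not} the analytic steps (these are routine once the framework is in place) but the careful verification that the abstract hypotheses are met and that the telescoping index sums in $\laug_{V_i}, \laug_{J_i}$ collapse to precisely $\kappah{i}, \kappaj{i}$ --- in particular tracking the operator norms induced on the Jacobian variables $J_i$ by the varying hidden-layer norms $t^{(i)}$, and confirming the norm-truncation indicators $\one[\le t^{(i)}](\|F_{2i\ot 1}\|)$ indeed make the output collapse to the constant $1$ as required by condition 2 of Theorem~\ref{thm:covering_graph}. Deriving Theorem~\ref{thm:gen_no_union_bound} and then Theorem~\ref{thm:gen_union_bound} from Theorem~\ref{thm:nnrad} is then immediate: the augmented loss upper bounds $l_\gamma \ge l_{\textup{0-1}}$, equals it on the training set under the stated data-dependent conditions, and a union bound over a grid of the parameters $\sigma_{j'\ot j}, t^{(i)}, a^{(i)}, b^{(i)}, \gamma$ removes the need to fix them in advance.
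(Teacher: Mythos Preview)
Your proposal is correct and follows essentially the same route as the paper: instantiate the sequential computational graph for $l_\gamma \circ F$, apply Theorem~\ref{thm:sequential_covering} with the parameter choices $s_{V_{2i}}=t^{(i)}$, $\kappa_{j'\ot j}=\sigma_{j'\ot j}$, $\ulip_{2i}=\bar\sigma_\phi$, $\ulip_{2i-1}=0$, verify that the resulting $\laug_{V_{2i-1}}',\laug_{J_{2i-1}}$ reduce to $\kappah{i},\kappaj{i}$, cover the odd-layer linear classes via the $\|\cdot\|_{2,1}$ Maurey bound (Lemma~\ref{lem:two_one_norm}) and their constant Jacobians via the $\ell_1$-ball cover, note that even layers and $\fR_O$ are singletons with log-cover zero, then balance the $\epsilon_V,\epsilon_J$ and finish with Dudley. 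The only minor sharpening relative to your sketch is that the paper sets $s_{V_{2i-1}}=\infty$ (so the odd-layer norm indicators are vacuous) and records $c_{2i}=1/t^{(i)}$, $c_{2i-1}=0$ for $i<r$, $c_{2r-1}=1/\gamma$ after the norm-truncation step; your mention of the $\laug_{V_i}'$ correction already anticipates this.
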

\begin{proof}
	We associate the un-augmented loss class on neural networks $l_{\gamma} \circ \cF$ with a family of sequential computation graphs $\cG$ with depth $2\depthnn - 1$. The composition rules are as follows: for internal node $V_{2i}$, $\fR_{V_{2i}} = \{\phi\}$, the set with only one element: the activation $\phi$. We also let $\fR_{V_{2i - 1}} = \{h \mapsto Wh : \|W^\top - {A^{(i)}}^\top\|_{2, 1} \le a^{(i)}, \|W - {B^{(i)}}\|_{1, 1} \le b^{(i)}, \opnorm{W} \le \sigma^{(i)}\}$. Finally, we choose $\fR_O$ to be the singleton class $\{l_\gamma\}$. Our collection of computation rules is then simply $\fR = \fR_{V_1}\otimes \cdots \otimes \fR_{V_{2\depthnn  -1}} \otimes \fR_{O}$. Since $O_\cG$ takes values in $[0, 1]$, we can apply Theorem~\ref{thm:sequential_covering} on this class $\cG$ using $s_\cI = \max_{x \in P_n} \|x\|$, $s_{V_{2i}} = t^{(i)}$, $s_{V_{2i - 1}} = \infty$, $\kappa_{2i \ot 2i} = 1$, $\kappa_{2i - 1 \ot 2i - 1} = \sigma^{(i)}$, and $\kappa_{j' \ot j} =  \sigma_{j' \ot j}$ for $j' > j$. Furthermore, we note that $\ulip_{2i} = \bar{\sigma}_{\phi}$, and $\ulip_{2i - 1} = 0$ as the Jacobian is constant for matrix multiplications. We thus obtain the class $\augG$ where each augmented loss upper bounds the corresponding loss in $\cG$. Recall that $J_i$ denote the additional nodes in our augmented computation graph. Note that under these choices of $s_{V_{2i - 1}}$, $\kappa_{i \ot i}$, we get that
	\begin{align}
	\one[\le \kappa_{2i \ot 2i}](\opnorm{J_{2i}(x)}) &= \one[\le 1](\opnorm{D\phi \circ V_{2i - 1}(x)}) = 1 \tag{as $|\phi'| \le 1$}\\
	\one[\le \kappa_{2i - 1 \ot 2i - 1}](\opnorm{J_{2i - 1}(x)}) &= \one [\le \sigma^{(i)}] (\opnorm{W^{(i)} \circ V_{2i - 2}(x)}) = 1 \tag{as $W^{(i)} \le \sigma^{(i)}$}\\
	\one[\le s_{V_{2i - 1}}](\|V_{2i - 1}(x)\|) &= \one[\le \infty](\|V_{2i - 1}(x)\|) = 1 \notag
	\end{align}
	Furthermore, the other indicators in the augmented loss map to indicators in the outputs of our augmented graphs $O_{\tilde{G}}$, so therefore the families $\cL_{\textup{aug}}$ defined in the theorem statement and $\augG$ are equivalent. Thus, it suffices to bound the Rademacher complexity of $\augG$. To do this, we invoke covering numbers. By Theorem~\ref{thm:sequential_covering}, we bound the covering number of $O_{\augG}$:
	\begin{align}
	\begin{split}\label{eq:nnrad-1}
	\log \cover(\sum_{i \ge 1} (\laug_{V_i} + \laug_{J_i})\epsilon_V + \epsilon_O, O_{\augG}, s_{\cI}) \le \\\sum_{i \ge 1} \log \cover(\epsilon_{V_i}, \fR_{V_i}, 2s_{V_{i - 1}}) + \log \cover(\epsilon_{J_i}, D\fR_{V_i}, 2s_{V_{i - 1}}) + \log \cover(\epsilon_O, \fR_O, \{2s_{V_i}\}_{i \ge 0})
	\end{split}
	\end{align}
	where $\laug_{V_i}$, $\laug_{J_i}$ are defined in the statement of Theorem~\ref{thm:sequential_covering}. After plugging in our values for $\ulip_j$, $s_{V_j}$, $\kappa_{j' \ot j}$ in our application of Theorem~\ref{thm:sequential_covering} and noting that $c_{2i} = 1/t^{(i)}$, $c_{2i - 1} = 0$ for $i < r$ and $1/\gamma$ for $i = r$ (as the margin loss is $1/\gamma$-Lipschitz), we obtain that
	\begin{align*}
	\laug_{V_{2i - 1}} = \kappah{i}, 
	\laug_{J_{2i - 1}} = \kappaj{i}
	\end{align*}
	We first note that the last term in~\eqref{eq:nnrad-1} is simply 0 because there is exactly one output function in $\fR_O$. Now for the other terms of~\eqref{eq:nnrad-1}: by definition $\fR_{V_{2i}}$, $\fR_{J_{2i}}$ consist of a singleton set and therefore have log cover size $0$ for any error resolution $\epsilon$. Otherwise, to cover $\fR_{V_{2i - 1}}$ it suffices to bound $\log \cover(\epsilon_{V_{2i - 1}}, \{h \mapsto Wh : \|W^\top - {A^{(i)}}^\top\|_{2, 1} \le a^{(i)}\}, 2t^{(i - 1)})$. Thus, we can apply Lemma~\ref{lem:two_one_norm} to obtain 
	\begin{align*}
	\log \cover(\epsilon_{V_{2i - 1}}, \fR_{V_{2i - 1}}, 2s_{V_{2i - 2}}) \le \tilde{O} \left(\frac{ (a^{(i)}t^{(i - 1)})^2}{\epsilon_{V_{2i - 1}}^2} \right)
	\end{align*}
	Now to cover $D\fR_{V_{2i - 1}}$, it suffices to cover $\{W : \|W - {B^{(i)}}\|_{1, 1} \le b^{(i)}\}$. The $\epsilon$-covering number of a $d_h^2$-dimensional $\ell_1$-ball with radius $b$ w.r.t. $\ell_2$ norm is  $O(\frac{b^2}{\epsilon^2} \log d_h)$. Thus, 
	\begin{align*}
	\log \cover(\epsilon_{J_{2i - 1}}, D\fR_{V_{2i - 1}}, 2s_{V_{2i - 2}}) \le \tilde{O}\left(\frac{(b^{(i)})^2}{\epsilon_{J_{2i - 1}}^2}\right)
	\end{align*}
	Now we define
	\begin{align*}
	\beta^\star &\triangleq \left(\sum_{i} (\laug_{V_{2i - 1}} a^{(i)} t^{(i -1)})^{2/3}  + (\laug_{J_{2i - 1}} b^{(i)})^{2/3}\right)^{3/2} \\&= \left(\sum_{i} (\kappah{i} a^{(i)} t^{(i - 1)})^{2/3}  + (\kappaj{i} b^{(i)})^{2/3}\right)^{3/2}
	\end{align*}
	Now for a fixed error parameter $\epsilon$, we set $\epsilon_O = 0$, $\epsilon_{V_{2i}} = 0$, $\epsilon_{J_{2i}} = 0$ (as the log cover size is 0 anyways), and $\epsilon_{V_{2i -1}} = \epsilon\frac{\laug_{V_{2i -1}}^{-1/3}(a^{(i)}t^{(i - 1)})^{2/3}}{(\beta^\star)^{2/3}}$, $\epsilon_{J_{2i -1}} = \epsilon\frac{\laug_{J_{2i -1}}^{-1/3}(b^{(i)})^{2/3}}{(\beta^\star)^{2/3}}$ Now it follows that $\sum_j \epsilon_{V_{j}} \laug_{V_i} + \epsilon_{J_j} \laug_{J_j} = \epsilon$. Furthermore, under these choices of $\epsilon_{V_i}$, $\epsilon_{J_i}$, we end up with 
	\begin{align*}
	\sum_{i \ge 1} \log \cover(\epsilon_{V_i}, \fR_{V_i}, 2s_{V_{i - 1}}) + \log \cover(\epsilon_{J_i}, D\fR_{V_i}, 2s_{V_{i - 1}}) \\\le \tilde{O}\left(\frac{1}{\epsilon^2} (\beta^\star)^{4/3} \left(\sum_{i} (\kappah{i} a^{(i)} t^{(i - 1)})^{2/3}  + (\kappaj{i} b^{(i)})^{2/3}\right)^{3/2}\right) = \tilde{O}(\epsilon^{-2}(\beta^\star)^2)
	\end{align*}
	Thus, substituting terms into~\eqref{eq:nnrad-1} and collecting sums, we obtain that 
	\begin{align*}
	\log \cover(\epsilon, O_{\augG}, s_{\cI}) \le \tilde{O}( \epsilon^{-2}(\beta^\star)^2)
	\end{align*}
	Now we apply Dudley's entropy theorem to obtain that 
	\begin{align*}
	\rad(\augG) = \tilde{O}\left(\frac{\left(\sum_{i} (\kappah{i} a^{(i)} t^{(i - 1)})^{2/3} + (\kappaj{i}b^{(i)})^{2/3}\right)^{3/2}}{\sqrt{n}}\right)
	\end{align*}
\end{proof}

We now apply~\ref{thm:nnrad} to prove Theorem~\ref{thm:gen_no_union_bound}. 

\begin{proof}[Proof of Theorem~\ref{thm:gen_no_union_bound}]
	We start with Theorem~\ref{thm:nnrad}, which bounds the Rademacher complexity of the augmented loss class $\cL_{\textup{aug}}$. Using $l_{\textup{aug}}(F, x, y)$ to denote the application of this augmented loss on the network $F$, its weights, and data $(x, y)$, we first note that $l_{\textup{0-1}}(F(x), y) \le l_{\gamma}(F(x), y) \le l_{\textup{aug}}(F, x, y)$ for any datapoint $(x, y)$. We used the fact that margin loss upper bounds 0-1 loss, and $l_{\textup{aug}}$ upper bounds margin loss by the construction in Theorem~\ref{thm:sequential_covering}. Thus, applying the standard Rademacher generalization bound, with probability $1 - \delta$ over the training data, it holds that 
	\begin{align}
	\Exp_{(x, y) \sim \cD}[l_{\textup{0-1}}(F(x), y)] &\le \Exp_{(x, y) \sim \cD}[l_{\textup{aug}}(F, x, y)] \\ &\le  \Exp_{(x, y) \sim \cD_n}[l_{\textup{aug}}(F, x, y)] + \rad(\cL_{\textup{aug}}) + \sqrt{\frac{\log(1/\delta)}{n}} \\
	&=  \rad(\cL_{\textup{aug}}) + \sqrt{\frac{\log(1/\delta)}{n}} \tag{by the data-dependent conditions}\notag
	\end{align}
	Plugging in the bound on $\rad(\cL_{\textup{aug}})$ from Theorem~\ref{thm:nnrad} gives the desired result. 
\end{proof}

Finally, to prove Theorems~\ref{thm:gen_union_bound}~and~\ref{thm:geninformal}, we simply take a union bound over the choices of parameters $\sigma_{j' \ot j}, t^{(i)}, a^{(i)}, b^{(i)}$. 
\begin{proof}[Proof of Theorems~\ref{thm:gen_union_bound}~and~\ref{thm:geninformal}]
	We will apply Theorem~\ref{thm:gen_no_union_bound} repeatedly over a grid of parameter choices $t^{(i)}$, $\sigma_{j'\ot j}$, $a^{(i)}$, $b^{(i)}$ (following a technique of~\citet{bartlett2017spectrally}). For a collection $\mathcal{M}$ of nonnegative integers $m_t^{(i)}$, $m_\sigma^{(j' \ot j)}$, $m_a^{(i)}$, $m_b^{(i)}$, $m_{\gamma}$, we apply Theorem~\ref{thm:gen_no_union_bound} choosing $t^{(i)} = \poly(r)^{-1}2^{m_t^{(i)}}$, $\sigma_{j' \ot j} = \poly(r)^{-1}2^{m_{\sigma}^{(j' \ot j)}}$, $a^{(i)} = \poly(r)^{-1}2^{m_a^{(i)}}$, $b^{(i)} = \poly(r)^{-1}2^{m_b^{(i)}}$, $\gamma = 2^{-m_{\gamma}}\poly(r)\max_{i} \sigma_{2r - 1 \ot 2i}$ and using error probability $\delta_{\mathcal{M}} \triangleq \frac{\delta}{2^{\sum_{m \in \mathcal{M}} m + 1}}$. First, we note that by union bound, using the fact that $\sum_{\textup{choices of }\mathcal{M}}\frac{\delta}{2^{\sum_{m \in \mathcal{M}} m + 1}} = \delta$ where $\mathcal{M}$ ranges over nonnegative integers, we get that the generalization bound of Theorem~\ref{thm:gen_no_union_bound}~holds for choices of $\mathcal{M}$ with probability 1 - $\delta$.
	
	Now for the network $F$ at hand, there would have been some choice of $\mathcal{M}$ for which the bound was applied using parameters $\hat{t}^{(i)}$, $\hat{\sigma}_{j' \ot j}$, $\hat{a}^{(i)}$, $\hat{b}^{(i)}$, $\hat{\gamma}$ and 
	\begin{align*}
		\|{W^{(i)}}^\top - {A^{(i)}}^\top\|_{2, 1} &\le \hat{a}^{(i)} = \poly(r)^{-1}2^{m_a^{(i)}} \le \poly(r)^{-1} + 2\|{W^{(i)}}^\top - {A^{(i)}}^\top\|_{2, 1}\\
		\|{W^{(i)}} - {B^{(i)}}\|_{1, 1} &\le \hat{b}^{(i)}= \poly(r)^{-1}2^{m_b^{(i)}} \le \poly(r)^{-1} + 2	\|{W^{(i)}} - {B^{(i)}}\|_{1, 1}\\
		\max_{x \in P_n} \|F_{2i \ot 1}(x)\| &\le \hat{t}^{(i)} = \poly(r)^{-1}2^{m_t^{(i)}} \le \poly(r)^{-1} + 2\max_{x \in P_n} \|F_{2i \ot 1}\|\\
		\max_{x \in P_n} \opnorm{Q_{j' \ot j}(x)} &\le \hat{\sigma}_{j' \ot j} = \poly(r)^{-1}2^{m_{\sigma}^{(j' \ot j)}} \le \poly(r)^{-1} + 2\max_{x \in P_n} \opnorm{Q_{j' \ot j}(x)}
	\end{align*} 
	Furthermore, using $\gamma$ to denote the true margin of the network, we also have $\hat{\gamma} \le \gamma$ and $\frac{\hat{\sigma}_{2r - 1 \ot 2i}}{\hat{\gamma}} \le 4\frac{\max_{x \in P_n} \opnorm{Q_{2r - 1 \ot 2i}(x)}}{\gamma} + \frac{1}{\poly(r)}$. Furthermore, note that the cost we pay in $\sqrt{\frac{\log(1/\delta_{\mathcal{M}})}{n}}$ is $\tilde{O}\left(r \sqrt{\frac{\log(1/\delta)}{n}}\right)$, where $\tilde{O}$ hides polylog factors in $r$ and other parameters. Thus, the bound of Theorem~\ref{thm:gen_union_bound} holds. 
	
	The proof of the simpler Theorem~\ref{thm:geninformal}, follows the same above argument. The only difference is that we union bound over parameters $\sigma, t$ and the matrix norms.
\end{proof}

\begin{proposition}[Dudley's entropy theorem \citep{dudley1967sizes}]
	Let $s = \max_{x \in P_n} \|x\|$ be an upper bound on the largest norm of a datapoint. Then the following bound relates Rademacher complexity to covering numbers: 
	\begin{align*}
		\rad(\cL) \le \inf_{\alpha > 0} \left(\alpha + \int_{\alpha}^{\infty} \sqrt{\frac{\log \cover(\epsilon, \cL, s)}{n}} d\epsilon\right)
		\end{align*}
\end{proposition}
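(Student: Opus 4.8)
The plan is to prove this by the standard \emph{chaining} argument (Dudley's entropy integral), truncated at scale $\alpha$ so that only finitely many covers are ever used. Fix the data $P_n$, which by hypothesis lies in $\mathcal{P}_{n,s}$, and write $\|g\|_n \triangleq \|g\|_{L_2(P_n, |\cdot|)} = (\frac1n\sum_i g(x_i)^2)^{1/2}$ for the empirical $L_2$ pseudometric on $\cL$; since $\cL$ consists of $[0,1]$-valued functions, this pseudometric has diameter at most $1$. The two analytic ingredients I would use are: (i) for fixed $l, l' \in \cL$, conditionally on $P_n$ the variable $\sum_i \alpha_i (l(x_i) - l'(x_i))$ is mean-zero and, by Hoeffding's lemma, sub-Gaussian with variance proxy $\sum_i (l(x_i) - l'(x_i))^2 = n\|l - l'\|_n^2$; and (ii) the finite maximal inequality $\E[\max_{k \le N} Z_k] \le \sigma\sqrt{2\log N}$ for $\sigma$-sub-Gaussian $Z_1, \dots, Z_N$.

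First I would build the chain. Fix $\alpha \in (0,1)$ (the bound is trivial for $\alpha \ge 1$ since $\rad(\cL) \le 1$), set $\epsilon_j = 2^{-j}$ for $j = 0, \dots, N$ with $N$ the least index such that $\epsilon_N \le \alpha$, and for each $j$ take a minimal $\epsilon_j$-cover $C_j$ of $(\cL, \|\cdot\|_n)$ together with a map $\pi_j \colon \cL \to C_j$ satisfying $\|l - \pi_j(l)\|_n \le \epsilon_j$; here $|C_0| = 1$ and $|C_j| \le \cover(\epsilon_j, \cL, s)$ because $P_n \in \mathcal{P}_{n,s}$. Telescoping $l = \pi_0(l) + \sum_{j=1}^N (\pi_j(l) - \pi_{j-1}(l)) + (l - \pi_N(l))$ and substituting into $\frac1n \sum_i \alpha_i l(x_i)$, the $\pi_0(l)$ term does not depend on $l$ and has zero expectation over $\alpha$; the remainder $l - \pi_N(l)$ contributes at most $\frac1n \sqrt{\sum_i \alpha_i^2}\sqrt{\sum_i (l-\pi_N(l))(x_i)^2} = \|l - \pi_N(l)\|_n \le \epsilon_N \le \alpha$ uniformly in $l$ by Cauchy--Schwarz; and it remains to control the ``link'' sums.

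For each link $j \ge 1$, the difference $g = \pi_j(l) - \pi_{j-1}(l)$ takes at most $|C_j|\,|C_{j-1}| \le \cover(\epsilon_j, \cL, s)^2$ distinct values as $l$ ranges over $\cL$, and each obeys $\|g\|_n \le \epsilon_j + \epsilon_{j-1} = 3\epsilon_j$; applying (i) and (ii) conditionally on $P_n$ to these random variables, each sub-Gaussian with proxy at most $3\sqrt n\,\epsilon_j$, gives $\E_\alpha[\sup_l \sum_i \alpha_i (\pi_j(l) - \pi_{j-1}(l))(x_i)] \le 3\sqrt n\,\epsilon_j \cdot 2\sqrt{\log \cover(\epsilon_j, \cL, s)}$. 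Summing over $j$ and dividing by $n$, together with the remainder bound, yields
\begin{align*}
\rad(\cL) \le \alpha + \frac{6}{\sqrt n}\sum_{j=1}^N \epsilon_j \sqrt{\log \cover(\epsilon_j, \cL, s)}.
\end{align*}
Since $\epsilon \mapsto \sqrt{\log \cover(\epsilon, \cL, s)}$ is nonincreasing and $\epsilon_j - \epsilon_{j+1} = \epsilon_j/2$, each term is at most $2\int_{\epsilon_{j+1}}^{\epsilon_j} \sqrt{\log \cover(\epsilon, \cL, s)}\,d\epsilon$, so the sum telescopes into an integral $2\int_{c\alpha}^{1}\sqrt{\log \cover(\epsilon, \cL, s)}\,d\epsilon$ for an absolute constant $c \in (0,1)$; since the integrand vanishes beyond the diameter $\le 1$, extending the upper limit to $\infty$ is harmless, and replacing $c\alpha$ by $\alpha$ (at the cost of rescaling $\alpha$ in the leading term, which is absorbed when we take $\inf_{\alpha > 0}$) gives the stated inequality up to an absolute constant. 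This is the usual form of Dudley's theorem; the clean constant-$1$ statement as written follows from the more careful accounting in the cited reference, which one may simply invoke.

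The hard part, such as it is, is not conceptual — this is a classical fact — but rather two bookkeeping points: (a) truncating the chain at scale $\alpha$ so that only finitely many covers enter, which sidesteps any convergence issue with an infinite chain when $\cL$ is not finite, and (b) measurability of the suprema, handled in the standard way by passing to a countable $\|\cdot\|_n$-dense subclass of $\cL$ (which exists since $\cL$ is bounded). The sub-Gaussian maximal inequality at each scale and the monotone sum-to-integral comparison are then routine.
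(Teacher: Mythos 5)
The paper does not prove this proposition; it simply cites Dudley (1967) and uses the statement as a black box. So there is no ``paper's own proof'' to compare against. Your argument is the standard chaining proof of Dudley's entropy integral, and it is correct: the dyadic cover construction, the telescoping decomposition with the $\pi_0$ anchor and the $l - \pi_N(l)$ remainder, the sub-Gaussian maximal inequality over the at most $\cover(\epsilon_j, \cL, s)^2$ link differences at scale $j$, and the monotone sum-to-integral comparison are all the usual ingredients, and the truncation at scale $\alpha$ is exactly what avoids convergence issues with an infinite chain. Your closing caveat is also worth flagging: this argument produces the inequality with explicit absolute constants (of the form $c_1\alpha + \frac{c_2}{\sqrt n}\int_\alpha^\infty \sqrt{\log\cover}\,d\epsilon$ with $c_1, c_2 > 1$), and the constant-free form as the paper writes it should be read as ``up to universal constants,'' which is all that is needed since every application downstream is stated with $\tilde O(\cdot)$.
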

\begin{lemma}
	\label{lem:two_one_norm}
	For reference matrix $A \in \R^{d_1 \times d_2}$, define the class of matrices mapping functions $\cU \triangleq \{h \mapsto Uh : U \in \R^{d_1 \times d_2}, \|U^\top - A^\top\|_{2, 1} \le a\}$. Then 
	\begin{align*}
	\log \cover(\epsilon, \cU, b) \le \frac{2a^2b^2}{\epsilon^2} \log(2d_1d_2)
	\end{align*} 
\end{lemma}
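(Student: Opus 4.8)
The plan is to apply Maurey's empirical approximation method with a data-adapted atom set chosen so that the $\|\cdot\|_{2,1}$ constraint on $U^\top-A^\top$ becomes an ordinary $\ell_1$ constraint relative to the atoms. First I would reduce to $A=0$: translating every function in $\cU$ by the fixed linear map $h\mapsto Ah$ is an isometry of the metric $\|\cdot\|_{L_2(P_n,\gnorm{\cdot})}$, so $\cover(\epsilon,\cU,b)=\cover(\epsilon,\cV,b)$ with $\cV\triangleq\{h\mapsto Vh:\|V^\top\|_{2,1}\le a\}$; note $\|V^\top\|_{2,1}=\sum_i\|V_{i,:}\|_2$, the $\ell_1$ norm of the vector of row $\ell_2$-norms of $V$. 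I would then fix any $P_n$ on points $x_1,\dots,x_n$ with $\|x_k\|_2\le b$ (taking the $\ell_2$ norm on both $\R^{d_2}$ and $\R^{d_1}$) and work in the Hilbert space of maps $\R^{d_2}\to\R^{d_1}$ with inner product $\langle f,g\rangle\triangleq\frac1n\sum_k\langle f(x_k),g(x_k)\rangle$, whose induced norm is exactly $\|\cdot\|_{L_2(P_n,\gnorm{\cdot})}$.

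The key construction is the atom set. Let $b_j\triangleq(\frac1n\sum_k(x_k)_j^2)^{1/2}$; coordinates with $b_j=0$ vanish identically on the data and may be discarded. The two facts I need are $b_j\le\max_k\|x_k\|_2\le b$ and, crucially, $\sum_j b_j^2=\frac1n\sum_k\|x_k\|_2^2\le b^2$. Define $\mathcal{A}\triangleq\{\pm\frac{ab}{b_j}e_ie_j^\top:i\in[d_1],\,j\in[d_2],\,b_j>0\}$, a set of at most $2d_1d_2$ atoms. Viewing $e_ie_j^\top$ as the map $h\mapsto h_je_i$, each atom has Hilbert norm $\frac{ab}{b_j}(\frac1n\sum_k(x_k)_j^2)^{1/2}=ab$. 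Moreover, for any $V$ with $\sum_i\|V_{i,:}\|_2\le a$, the identity $h\mapsto Vh=\sum_{i,j}\frac{V_{ij}b_j}{ab}\,(\mathrm{sign}(V_{ij})\frac{ab}{b_j}e_ie_j^\top)$ expresses it as a nonnegative combination of atoms of $\mathcal{A}$ with total weight $\sum_{i,j}\frac{|V_{ij}|b_j}{ab}=\frac1{ab}\sum_i\langle(|V_{ij}|)_j,(b_j)_j\rangle\le\frac1{ab}\sum_i\|V_{i,:}\|_2(\sum_jb_j^2)^{1/2}\le\frac1{ab}\cdot a\cdot b=1$ by Cauchy--Schwarz; padding with a zero atom shows $h\mapsto Vh\in\mathrm{conv}(\mathcal{A}\cup\{0\})$.

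Finally I would invoke Maurey: writing $h\mapsto Vh=\sum_\ell\lambda_\ell A_\ell$ as a convex combination of elements of $\mathcal{A}\cup\{0\}$, draw i.i.d. atoms $\widehat{A}_1,\dots,\widehat{A}_k$ from $\lambda$ and set $\bar{A}=\frac1k\sum_\ell\widehat{A}_\ell$; since $\E\widehat{A}_\ell=(h\mapsto Vh)$ and every atom has Hilbert norm at most $ab$, $\E\|\bar{A}-(h\mapsto Vh)\|_{L_2(P_n,\gnorm{\cdot})}^2\le\frac1k(ab)^2$, so some realization of $\bar{A}$ is within $\frac{ab}{\sqrt{k}}$. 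The set of all averages of $k$ elements of $\mathcal{A}\cup\{0\}$ has size at most $(2d_1d_2+1)^k$ and forms an $\frac{ab}{\sqrt{k}}$-cover of $\cV$; taking the supremum over admissible $P_n$, $\log\cover(\frac{ab}{\sqrt{k}},\cV,b)\le k\log(2d_1d_2+1)$. Choosing $k\asymp a^2b^2/\epsilon^2$ and absorbing lower-order factors into the constant yields $\log\cover(\epsilon,\cU,b)\le\frac{2a^2b^2}{\epsilon^2}\log(2d_1d_2)$.

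The step I expect to be the crux is the choice of atoms. The naive atoms $\pm a\,e_ie_j^\top$ fail, because the $\|\cdot\|_{2,1}$ ball is not contained in the entrywise $\ell_1$ ball of radius $a$ (it can be larger by a factor $\sqrt{d_2}$), so a direct Maurey argument over entrywise atoms would incur a spurious $d_2$ factor. Reweighting the atom for coordinate $j$ by $b_j^{-1}$ simultaneously equalizes all atom norms to $ab$ and, through the identity $\sum_jb_j^2\le b^2$ together with Cauchy--Schwarz, exactly cancels that gap; this is what keeps the final bound dimension-free apart from the harmless $\log(2d_1d_2)$ atom count. The remaining ingredients --- the translation-invariance reduction, the variance computation in Maurey, and counting $k$-element multisets of atoms --- are routine.
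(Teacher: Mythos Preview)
Your proposal is correct and takes essentially the same approach as the paper: the paper performs the same translation-by-$A$ reduction and then invokes Lemma~3.2 of \citet{bartlett2017spectrally} as a black box, whereas you reconstruct that lemma's Maurey-sparsification proof (with the data-adapted atoms $\pm\frac{ab}{b_j}e_ie_j^\top$) from scratch. The only cosmetic slack is in the final constant --- you get $\lceil a^2b^2/\epsilon^2\rceil\log(2d_1d_2+1)$ rather than exactly $\frac{2a^2b^2}{\epsilon^2}\log(2d_1d_2)$ --- but this is harmless in the regime $\epsilon\le ab$, $d_1d_2\ge 1$ and matches what the cited lemma actually delivers.
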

\begin{proof}
	By Lemma 3.2 of~\cite{bartlett2017spectrally}, we can construct cover $\hatcU$ for the class $\{h \mapsto (U - A)h : U \in \R^{d_1 \times d_2}, \|U^\top - A^\top\|_{2, 1} \le a\}$ with the given cover size (Note that in our definition of empirical covering number, the resolution $\epsilon$ is scaled by factor $\frac{1}{n}$ versus theirs). To cover $\cU$ with the same cardinality set, we simply shift all functions in $\hatcU$ by $A$.
\end{proof}

\section{Missing Proofs in Section~\ref{sec:computational_graph}}\label{app:computational_graph}
We first state the proof of Theorem~\ref{thm:covering_graph}.
\begin{proof} [Proof of Theorem~\ref{thm:covering_graph}]
	We prove the theorem by induction on the number of non-input vertices in the vertex set $\cV$. The statement is true if $O$ is the only non-input node in the graph: to cover the graph output with error $\epsilon_O$, we simply cover $\fR_O$. 
	
	Given a family of graphs $\cG$ (with shared edges $\cE$ and nodes $\cV$), we assume the inductive hypothesis that ``for any family of graphs with more than $|\cI|$ input vertices, the theorem statement holds.'' Under this hypothesis, we will show that the theorem statement holds for the graph family $\cG$. 
	
	We take node $V_1$ from the forest ordering $(V_1,\dots, V_m)$ assumed in the theorem. 
	Suppose $V_1$ depends on $C_1,\dots, C_t$, which are assumed to be the input nodes by the definition of forest ordering. 
	We release the node $V_1$ from the graph and obtain a new family $\cut{\cG}{V_1} = \{\cut{G}{V_1}: G\in \cG\}$ with a smaller number of edges than that of $\cG$.

	Define $u(h, x) \triangleq O_{\cut{G}{V_1}}(h,x)$ for $h\in \cD_{V_1}$ and $x\in \cD_\cI$, and $w(x) = V_1(x)$. 
	Then we can check that $u(w(x), x) = O_G(x)$. Let $\cU = \{O_{\cut{G}{V_1}}: G \in \cG\}$, and let $\cW = \fR_{V_1}$. 
	As each function in $\cU$ is $\kappa_{V_1}$-Lipschitz in $V_1$ because of condition 1, and it equals the fixed constant $c$ if $\gnorm{V_1} \ge s_V$ or $\gnorm{C_i} \ge s_{C_i}$, we have $\cU, \cW$ satisfies the conditions of the composition lemma (see Lemma~\ref{lem:composition-shared}). 
	With the lemma, we conclude:
	\begin{align}
	\log \cover(\kappa_{V_1}  \epsilon_{V_1} + \epsilon_u, \cG, s_\cI) \le \log \cover(\epsilon_u, \cU, (s_{V_1}, s_\cI)) + \log \cover(\epsilon_{V_1}, \mathfrak{R}_{V_1}, s_{\pr(V_1)}) \label{eqn:new}
	\end{align}
	
	Note that by the definition of forest ordering, we have that $(V_2,\dots, V_m)$ is a forest ordering of $\cut{G}{V_1}$ and by the assumption 1 of the theorem, we have that $(V_2,\dots, V_m)$ satisfies the condition 1 for the graph family $\cut{\cG}{V_1}$. $\cut{\cG}{V_1}$ has one more input node than $\cG$, so we can invoke the inductive hypothesis on $\cut{\cG}{V_1}$ and obtain
			
	\begin{align}
	\log \cover(\sum_{V\in \cV\backslash (\{V_1, O\}\cup \cI)} \kappa_V\cdot \epsilon_V + \epsilon_O, \cU, (s_{V_1}, s_\cI)) \le   \sum_{V\in \cV\backslash (\{V_1\}\cup \cI)} \log \cover(\epsilon_{V}, \mathfrak{R}_{V}, s_{\pr(V)})\label{eqn:inductive_h}
	\end{align}
	Combining equation~\eqref{eqn:new} and~\eqref{eqn:inductive_h} above, we prove~\eqref{eq:covering_graph-1} for $\cG$, and complete the induction. 
\end{proof}

Below we provide the composition lemma necessary for Theorem~\ref{thm:covering_graph}.
\begin{lemma}\label{lem:composition-shared}
	Suppose 
	\begin{align*}
		\cU \subseteq \{(h,x^{(1)}, \ldots, x^{(m)})\in \cD_h\otimes \cD_x^{(1)} \otimes \cdots \otimes \cD_x^{(m)} \mapsto \cD_u\}
	\end{align*}
	is a family of functions with two arguments and $\cW \subseteq \{x^{(1)}, \ldots, x^{(m)}\in \cD_x^{(1)} \otimes \cdots \otimes \cD_x^{(m)} \mapsto \cD_h\}$ is another family of functions. We overload notation and refer to $x^{(1)}, \ldots, x^{(m)}$ as $x$. The spaces $\cD_h,\cD_x,\cD_u$ all associate with some norms $\gnorm{\cdot}$ (the norms can potentially be different for each space, but we use the same notation for all of them.)
	Assume the following: 
	\begin{itemize}
		\item[1.] All functions in $\cU$ are $\kappa$-Lipschitz in the argument $h$ for any possible choice of $x$: for any $u \in \cU$, $x \in \cD_x$, and $h, h' \in \cD_h$, we have $\gnorm{u(h, x) - u(h', x)} \le \kappa \gnorm{h - h'}$.
		\item[2.] Any function $u\in \cU$ collapses on inputs with large norms: there exists a constant $b$ such that $u(h, x) = b$ if $\gnorm{h} \ge s_h$ or $\gnorm{x^{(i)}} \ge s_x^{(i)}$ for any $i$. 
	\end{itemize}
			
	Then, the family of the composition of $u$ and $w$, $\cZ = \left\{z(x) = u(w(x), x) : u\in \cU, w\in \cW \right\}$,  has covering number bound:
	\begin{align*}
	\log \cover(\kappa\epsilon_w+\epsilon_u, \cZ, s_x) \le \log \cover(\epsilon_w, \cW, s_x) + \log \cover(\epsilon_u,\cU, (s_h,s_x))
	\end{align*}
\end{lemma}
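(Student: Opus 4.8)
\textbf{Proof plan for Lemma~\ref{lem:composition-shared}.}
The plan is to build a cover for $\cZ$ by composing a cover of $\cW$ with covers of $\cU$ taken on carefully chosen empirical distributions. Fix an arbitrary uniform distribution $P_n$ on points $x_1,\dots,x_n$ with $\gnorm{x_i^{(j)}}\le s_x^{(j)}$. First I would take $\widehat{\cW}$ to be an $\epsilon_w$-cover of $\cW$ w.r.t. $L_2(P_n,\gnorm\cdot)$ of cardinality at most $\cover(\epsilon_w,\cW,s_x)$. Then, for each $\hat w\in\widehat{\cW}$, consider the point set $\{(\hat w(x_i),x_i)\}_{i=1}^n$ and let $S_{\hat w}$ be the set of indices $i$ for which $\gnorm{\hat w(x_i)}\le s_h$ (the remaining norm constraints $\gnorm{x_i^{(j)}}\le s_x^{(j)}$ hold by the assumption on $P_n$). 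Let $P'_{\hat w}$ be the uniform distribution over $\{(\hat w(x_i),x_i):i\in S_{\hat w}\}$, which is an admissible distribution for the covering number of $\cU$ with the parameter $(s_h,s_x)$. Take an $\epsilon_u$-cover $\widehat{\cU}_{\hat w}$ of $\cU$ w.r.t. $L_2(P'_{\hat w},\gnorm\cdot)$ of cardinality at most $\cover(\epsilon_u,\cU,(s_h,s_x))$, and note that without loss of generality we may assume every element of $\widehat{\cU}_{\hat w}$ outputs the collapse constant $b$ whenever one of its arguments exceeds its norm bound: modifying a cover element this way does not change it on $\supp(P'_{\hat w})$, hence does not affect the cover property. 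The candidate cover for $\cZ$ is $\{x\mapsto \hat u(\hat w(x),x):\hat w\in\widehat{\cW},\ \hat u\in\widehat{\cU}_{\hat w}\}$, whose log-cardinality is at most $\log\cover(\epsilon_w,\cW,s_x)+\log\cover(\epsilon_u,\cU,(s_h,s_x))$.

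Next I would verify that this is a $(\kappa\epsilon_w+\epsilon_u)$-cover. Given $z=u(w(\cdot),\cdot)\in\cZ$, choose $\hat w\in\widehat{\cW}$ with $\|w-\hat w\|_{L_2(P_n)}\le\epsilon_w$ and then $\hat u\in\widehat{\cU}_{\hat w}$ with $\|u-\hat u\|_{L_2(P'_{\hat w})}\le\epsilon_u$. By the triangle inequality in $L_2(P_n,\gnorm\cdot)$, $\|z-\hat u(\hat w(\cdot),\cdot)\|_{L_2(P_n)}$ is at most $\|u(w(\cdot),\cdot)-u(\hat w(\cdot),\cdot)\|_{L_2(P_n)}+\|u(\hat w(\cdot),\cdot)-\hat u(\hat w(\cdot),\cdot)\|_{L_2(P_n)}$. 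The first term is bounded by $\kappa\|w-\hat w\|_{L_2(P_n)}\le\kappa\epsilon_w$ using that $u$ is $\kappa$-Lipschitz in its first argument for every choice of the second argument (condition 1). For the second term I would split the sum over $i$ into $S_{\hat w}$ and its complement: for $i\notin S_{\hat w}$ we have $\gnorm{\hat w(x_i)}>s_h$, so $u(\hat w(x_i),x_i)=b$ by condition 2 and $\hat u(\hat w(x_i),x_i)=b$ by the collapsing modification above, so those terms vanish; for $i\in S_{\hat w}$ the sum equals $\tfrac{|S_{\hat w}|}{n}\|u-\hat u\|_{L_2(P'_{\hat w})}^2\le\epsilon_u^2$. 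Hence the second term is at most $\epsilon_u$, and altogether $\|z-\hat u(\hat w(\cdot),\cdot)\|_{L_2(P_n)}\le\kappa\epsilon_w+\epsilon_u$.

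Finally, since $P_n$ was an arbitrary admissible distribution over $\cD_x$, taking the supremum over such $P_n$ gives $\cover(\kappa\epsilon_w+\epsilon_u,\cZ,s_x)\le\cover(\epsilon_w,\cW,s_x)\cdot\cover(\epsilon_u,\cU,(s_h,s_x))$, and taking logarithms yields the claim. The main delicate point is the treatment of indices $i$ where $\hat w(x_i)$ violates the norm threshold: the collapsing hypothesis (condition 2) is precisely what renders those coordinates inert, but one must ensure the cover of $\cU$ inherits this collapsing behavior so the approximating function also outputs $b$ there. A secondary bookkeeping point is that the cover of $\cU$ is allowed to depend on $\hat w$ through the data-dependent distribution $P'_{\hat w}$, which is harmless because $\cover(\epsilon_u,\cU,(s_h,s_x))$ is a worst case over all admissible distributions.
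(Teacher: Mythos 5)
Your proof is correct and follows essentially the same route as the paper's: first cover $\cW$, then for each cover element $\hat w$ build a data-dependent cover of $\cU$ on the induced empirical distribution, modify those cover elements to inherit the collapse-to-$b$ behavior, and split the triangle inequality into a Lipschitz term and a collapse-aware covering term. The only cosmetic difference is bookkeeping: the paper takes covers at the rescaled resolution $\epsilon\sqrt{n/|\supp(\cdot)|}$ so that renormalizing from the restricted distribution back to $P_n$ lands exactly on $\epsilon$, whereas you cover at resolution $\epsilon$ directly and absorb the factor $|S_{\hat w}|/n \le 1$ at the end; both yield the stated bound.
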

\begin{proof}
		When it is clear from context, we let $\gnorm{x} \le s_x$ denote the statement that $\gnorm{x^{(i)}} \le s_x^{(i)} \ \forall i$. 
	Suppose $P_n$ is a uniform distribution over $n$ data points $\{x_1,\dots, x_n\}\subset \cD_x$ with norms not larger than $s_x$.  
	Given function $u\in \cU$ and $w\in \cW$, we will construct a pair of functions such that $\hatu(\hatw(x), x)$ covers $u(w(x), x)$. 
		We will count (in a straightforward way) how many distinct pairs of functions we have construct for all the $(u,w)$ pairs at the end of the proof. 
	
	Let $P'$ be the uniform distribution over $\{x_i : \gnorm{x_i} \le s_x\}$, and suppose $\hat{\cW}$ is a $\epsilon_w \sqrt{\frac{n}{|\supp(P')|}}$  error cover of $\cW$ with respect to the metric $L_2(P', \gnorm{\cdot})$. We note that $\hat{\cW}$ has size at most $\cover(\epsilon_w, \cW, s_x)$. We found $\hat{w} \in \cW$ such that $\hat{w}$ is $\epsilon_w$-close to $w$ in metric $L_2(P', \gnorm{\cdot})$. 
	Let $\hat{h}_i$ denote $\hat{w}(x_i)$. Let $Q'$ be the uniform distribution over $\{(\hath_i, x_i) : \gnorm{\hath_i} \le s_h, \gnorm{x_i} \le s_x\}$, and let $Q$ be the uniform distribution over all $n$ points, $\{(\hath_1, x_1), \ldots, (\hath_n, x_n)\}$.  
	Now we construct a intermediate cover $\hatcU'$ (that depends on $\hat{w}$ implicitly) that covers $\cU$ with $\epsilon_u \sqrt{\frac{n}{|\supp(Q')|}}$ error with respect to the metric $L_2(Q', \gnorm{\cdot})$. 
	We augment this to a cover $\hatcU$ that covers $\cU$ with respect to metric $L_2(Q, \gnorm{\cdot})$ as follows: for every $\hatu' \in \hatcU'$, add the function $\hatu$ to $\hatcU$ with
	\begin{align*}
		\hatu(h, x) = \begin{cases}
		\hatu'(h, x) & \textup{ if } \gnorm{h} \le s_h, \gnorm{x} \le s_x\\
		b & \textup{ otherwise }
		\end{cases}
	\end{align*}
	Note that by construction, the size of $\hatcU$ is at most $\cover(\epsilon_u, \cU, (s_h, s_x))$. Now let $\hatu' \in \hatcU'$ be the cover element for $u $ w.r.t. $L_2(Q, \gnorm{\cdot})$, and $\hatu$ be the corresponding cover element in $\hatcU$. Because $\hatu(\hath, x) = b = u(\hath, x)$ when $\gnorm{\hath} \ge s_h$ or $\gnorm{x^{(i)}} \ge s_x^{(i)}$ for some $i$,
	\begin{align}
	\Exp_{\hath, x \sim Q}\left[\gnorm{\hat{u}(\hath, x )  - u(\hath, x)}^2\right] = \frac{|\supp(Q')|}{n}\Exp_{\hath, x \sim Q'} \left[\gnorm{\hat{u}'(\hath, x )  - u(\hath, x)}^2\right] \le \epsilon_u^2 \label{eqn:3}
	\end{align}
Then we bound the difference between $u(\hath, x)$ and $u(h, x)$ by Lipschitzness; since $u(\hath, x) = u(h, x) = b$ when $\gnorm{x} > s_x$,
	\begin{align}
\Exp_{\hath, x\sim Q} \left[\gnorm{u(\hath, x) - u(h,x)}^2\right] \le \kappa^2 \frac{|\supp(P')|}{n}\Exp_{\hath, x\sim  P'} \left[\gnorm{\hath - h}^2\right] \le \kappa^2 \epsilon_w^2\label{eqn:4}
	\end{align}
	where in the last step we used the property of the cover $\cG$. 
 Finally, by triangle inequality, we get that 
	\begin{align}
	&	\|\hat{u}(\hatw(x), x )-u(w(x),x) \|_{L_2(P_n,\gnorm{\cdot})} \notag\\
	& \le \|\hat{u}(\hatw(x), x )  - u(\hatw(x), x) \|_{L_2(P_n,\gnorm{\cdot})}+	\|u(\hatw(x), x) - u(w(x),x) \|_{L_2(P_n,\gnorm{\cdot})}\notag \\
	& \le \kappa\epsilon_w + \epsilon_u \tag{by equation~\eqref{eqn:3} and~\eqref{eqn:4} and definition of $h_i,\hath_i$}
	\end{align}
	Finally we count how many $(\hat{w},\hat{u})$ we have constructed: $\hat{\cW}$ is of size at most $\cover(\epsilon_w, \cW, s_x)$. and for every $\hatw \in \hat{\cW}$, we've constructed a family of functions $\hatcU$ (that depends on $\hatw$) of size at most $\cover(\epsilon_u,\cU, (s_h, s_x))$. Therefore, the total size of the cover is at most $\cover(\epsilon_w, \cW, s_x)\cdot \cover(\epsilon_u,\cU, (s_h,s_x))$. 
\end{proof}

\section{Missing Proofs in Section~\ref{sec:lipschitzaug}}\label{app:augmentation}
We first state the proofs of Theorem~\ref{thm:lip}~and Theorem~\ref{thm:sequential_covering}, which follow straightforwardly from the technical tools developed in Section~\ref{app:technical_aug}.
\begin{proof}[Proof of Theorem~\ref{thm:lip}]
									Fix any forest ordering $\cS$ of $\augG$. Fix $\tildeG \in \augG$. Let $\cS'$ be the prefix sequence of $\cS$ ending in $V_i$. Note that $\cS'$ will not contain any $J_j$ or $V_j$ for $j > i$, as $V_j$ and $J_j$ will still depend on a non-input node (namely, $V_{j - 1}$). Thus, we can fit $\cut{\tildeG}{\cS'}$ under the framework of Lemma~\ref{lem:lip}, where we set $k = \depthlocal - i$ and identify $f_j$ with $R_{V_{i + j}}$. We set $m = i$, and identify $A_{m'}$ with $J_i \cdots J_{m'}$ (where $J_{j}$ may depend on input variables or itself be an input variable for $1 \le j \le i$, but this does not matter for our purposes). Then that to apply Lemma~\ref{lem:lip}, we set $\tau_{j' \ot i'} = \kappa_{j' + i \ot i' + i}$, $\tau_{j' \ot 1, m'} = \kappa_{j' + i \ot m'}$, and $\bar{\tau}_{j} = \ulip_{i + j}$. Now we can apply Lemma~\ref{lem:lip} to conclude that $\cut{\tildeG}{\cS'}$ is $\laug_{V_i}$-Lipschitz in $V_i$ for any $1 \le i \le \depthlocal$.
	
	Now we prove release-Lipschitzness for a prefix sequence $\cS'$ of $\cS$ that ends in node $J_i$. For all $j \ne i$, fix $D_j \in \cD_{J_j}$. It suffices to show that the function $Q$ defined by 
	\begin{align*}
	Q(J_i) \triangleq & \prod_{j \le i \le j'} \one[\le \kappa_{j' \ot j}] (\opnorm{D_{j'} \cdots D_{i + 1} J_i D_{i - 1} \cdots D_j})  \\
	&\times \prod_{j' \ge i + 1} \one[\le \kappa_{j' \ot i + 1}](\opnorm{D_{j'} \cdots D_{i + 1}})\times \prod_{j \le i - 1}\one[\le \kappa_{i - 1 \ot j}](\opnorm{D_{i - 1} \cdots D_{j}})
	\end{align*}
	is $\laug_{J_i}$-Lipschitz in the value of $J_i$. This is because after fixing all other inputs besides $J_i$, we can write $O_{\cut{\tildeG}{\cS'}}$ in the form $Q(J_i) a + 1$, where $a$ may depend on the other inputs but not $J_i$ and $|a| \le 1$. Now we simply apply Lemma~\ref{lem:QJ_i_lipschitz} to conclude that $Q(J_i)$, and therefore $O_{\cut{\tildeG}{\cS'}}$, is $\laug_{J_i}$-Lipschitz. 
	\end{proof}

\begin{proof}[Proof of Theorem~\ref{thm:sequential_covering}]
		We first construct an augmented family of graphs $\cG'$ sharing the same vertices and edges as $\cG$. For $G \in \cG$, we add $G'$ to $\cG'$ computing
	\begin{align*}
	O_{G'}(x) = (O_G(x) - 1)\prod_{i = 1}^\depthlocal \one[\le s_{V_i}] (\|V_i(x)\|) + 1
	\end{align*}
	
	This is achieved by modifying the family of output rules as follows: 
	\begin{align*}
	R'_O(x, v_1, \ldots, v_\depthlocal) = (R_{O}(x, v_1, \ldots, v_\depthlocal) - 1)\prod_{i = 1}^\depthlocal \one[\le s_{V_i}](\|v_i\|) + 1
	\end{align*}
	where $x \in \cD_{\cI}$ and $v_i \in \cD_{V_i}$. 	
	We can also apply Claim~\ref{claim:indicator_stack} to conclude that $R'_O$ outputs values in $[0, 1]$. Furthermore, as the function $\one[\le s_{V_i}](\|v_i\|)$ is $s_{V_i}^{-1}$-Lipschitz in $v_i$, by the product property for Lipschitzness, $R'_O$ is $(c_i + s_{V_i})^{-1}$-Lipschitz in $v_i$. Now we apply Theorem~\ref{thm:lip} to obtain a graph family $\augG$ that is $\{\tilde{\kappa}_V\}$-release-Lipschitz with respect to any forest ordering on $(\augV, \augE)$ for parameters $\{\tilde{\kappa}_V\}$ defined in the theorem statement. Furthermore, by the construction of our augmentation and application of Claim~\ref{claim:indicator_stack}, it follows that 
	\begin{align*}
	&\tilde{R}_O(x, v_1, \ldots, v_\depthlocal, D_1, \ldots, D_\depthlocal) =\\ &(R_{O}(x, v_1, \ldots, v_\depthlocal) - 1)\prod_{i = 1}^\depthlocal \one[\le s_{V_i}] (\|v_i\|) \prod_{1 \le i \le j \le \depthlocal} \one[\le \kappa_{j \ot i}] (\opnorm{D_j \cdots D_i})+ 1
	\end{align*}
	and in particular outputs the constant value $1$ when $\|v_i\| > 2s_{V_i}$ or $\|D_i\| > 2\kappa_{i \ot i}$. As this is a property of the output rule $\tilde{R}_O$ itself, it is clear that condition 2 of Theorem~\ref{thm:lip} holds for any forest ordering on $(\augV, \augE)$. Now we can apply Theorem~\ref{thm:lip}: 	\begin{align*}
	\log \cover(\sum_{i \ge 1} (\laug_{V_i} + \laug_{J_i})\epsilon_V + \epsilon_O, O_{\augG}, s_{\cI}) \le \sum_{i \ge 1} \log \cover(\epsilon_{V_i}, \fR_{V_i}, 2s_{V_{i - 1}}) \\+ \log \cover(\epsilon_{J_i}, D\fR_{V_i}, 2s_{V_{i - 1}}) + \log \cover(\epsilon_O, \augR_O, \{2s_{V_i}\} \cup \{I\} \cup \{2s_{J_i}\}_{i \ge 1})
	\end{align*}
	Now all terms match~\eqref{eq:sequential_covering-bound} except for the term $\log \cover(\epsilon_O, \augR_O, \{2s_{V_i}\} \cup \{I\} \cup \{2s_{J_i}\}_{i \ge 1})$. First, we note that all functions in $\augR_O$ can be written in the form 
	\begin{align*}
	\tilde{R}_O(x, v_1, \ldots, v_q, D_1, \ldots,  D_q) = (R_O(x, v_1, \ldots, v_q) - 1) Q(v_1, \ldots, v_q, D_1, \ldots, D_q) + 1
	\end{align*}
	where the function $Q$ is the same for all $\tilde{R}_O \in \augR_O$. It follows that to cover $\augR_O$, we can first obtain a cover $\hat{\fR}_O$ of $\fR_O$ and then apply the operation $\hat{r} \mapsto (\hat{r} - 1)Q + 1$ to each element in $\hat{\fR}_O$. Thus, we get the equivalence
	\begin{align*}
	\log \cover(\epsilon_O, \augR_O, \{2s_{V_i}\}_{i \ge 0}\cup \{2s_{J_i}\}_{i \ge 1}) = \log \cover (\epsilon_O, \fR_O, \{2s_{V_i}\} \cup \{I\})
	\end{align*}
	This allows us to conclude~\eqref{eq:sequential_covering-bound}. 
	Finally, we note that as the augmentation operations are in the form of those considered in Claim~\ref{claim:indicator_stack}, it follows that $O_{\tildeG}$ upper  bounds $O_{G}$.
\end{proof}

\section{Technical Tools for Lipschitz Augmentation}\label{app:technical_aug}
In this section, we develop the technical tools needed for proving Theorem~\ref{thm:lip}. The main result in this section is our Lemma~\ref{lem:lip}, which essentially states that augmenting the loss with a product of Jacobians (plus additional matrices meant to model previous Jacobian nodes already released from the computational graph) will make the loss Lipschitz.

For this section, we say a function $J$ taking input $x \in \cD$ and outputting an operator mapping $\cD$ to $\cD'$ is $\kappa$-Lipschitz if $\opnorm{J(x) - J(x')} \le \kappa\|x - x'\|$ for any $x, x'$ in its input domain. We will consider functions $f_1, \ldots, f_k$, where $f_i : \cD_{i - 1} \rightarrow \cD_i$ and $\cD_0$ is a compact subset of some normed space. For ease of notation, we use $\|\cdot \|$ to denote the (possibly distinct) norms on $\cD_{0}, \ldots, \cD_{k}$. For $1 \le i \le j \le k$, Let $f_{j \ot i} : \cD_{i - 1} \rightarrow \cD_j$ denote the composition 
\begin{align*}
	f_{j \ot i} \triangleq f_{j} \circ \cdots \circ f_i
\end{align*}
For convenience in indexing, for $(i, j)$ with $i > j$, we will set $f_{j \ot i} : \cD_{i - 1} \rightarrow \cD_{i - 1}$ to be the identity function. 

Finally consider a real-valued function $g :  \cD_0 \otimes \cdots \otimes \cD_k \rightarrow [0, 1]$ and define the composition $z : \cD_0 \mapsto [0, 1]$ by
\begin{align*}
z(x) = g(x, f_{1 \ot 1}(x), \ldots, f_{k \ot 1}(x))
\end{align*}
We will construct a ``Lipschitz-fication'' for the function $z$. 

Let $A_1, \ldots, A_m$ denote a collection of linear operators that map to the space $\cD_0$. We will furthermore use $J_{j \ot i, m'}$ to denote the $i$-to-$j$ Jacobian, i.e. 
\begin{align*}
J_{j \ot i, m'} \triangleq Df_{j \ot i} \circ f_{i - 1 \ot 1}
\end{align*}
When $i = 1$ and $0 \le j \le k$, we will also consider products between $1$-to-$j$ Jacobians and the matrices $A_{m'}$: define 
\begin{align*} 
J_{j \ot 1, m'} \triangleq (Df_{j \ot 1}) A_{m'}
\end{align*}
Note in particular that $J_{0 \ot 1, m'} = A_{m'}$. 
\begin{lemma} \label{lem:lip} [Lipschitz-fication]
	Following the notation in this section, suppose that $g$ is $c_{k'}$-Lipschitz in its $(k' + 1)$-th argument for $0 \le k' \le k$. Suppose that $Df_{j \ot j}$ is $\bar{\tau}_j$-Lipschitz for all $1 \le j \le k$. For any $(i, j)$ with $1 \le i \le j \le k$, let $\tau_{j \ot i}$ be parameters that intend to be a tight bound on $\opnorm{J_{j \ot i}}$, and also define $\tau_{j \ot 1, m'}$ which will bound $\opnorm{J_{j \ot 1, m'}}$. Define the augmented function $\bar{z}: \cD_0 \mapsto [0, 1]$ by
	\begin{align*}
		\tilde{z}(x) = (z(x) - 1) \prod_{2\le i \le j}  \one[\le \tau_{j \ot i}](\opnorm{J_{j \ot i}(x)}) \prod_{0 \le j \le k, m'} \one[\le \tau_{j \ot 1, m'}](\opnorm{J_{j \ot 1, m'}}) + 1
	\end{align*} 
	Define $\tau^\star$, a Lipschitz parameter for $\tilde{z}$, by 
	\begin{align*}
		\tau^\star \triangleq & \sum_{0 \le j \le k} 3c_j\tau_{j \ot 1}  \\&+ 18 \sum_{1 \le i \le j \le k} \frac{\sum_{i' = i}^j \bar{\tau}_{i'} \tau_{j \ot i' + 1} \tau_{i' - 1 \ot 1} \tau_{i' -1 \ot i}}{\tau_{j \ot i}} \\&+ 18 \sum_{1  \le j \le k, m'} \frac{\sum_{i' = 1}^j \bar{\tau}_{i'} \tau_{j \ot i' + 1} \tau_{i' - 1 \ot 1} \tau_{i' -1 \ot 1, m'}}{\tau_{j \ot 1, m'}}
	\end{align*}
	where for convenience we let $\tau_{j \ot i} = 1$ when $j < i$. 
	Then $\tilde{z}$ is $\tau^\star$-Lipschitz in $x$. 
\end{lemma}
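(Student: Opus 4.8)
The plan is to bound the Lipschitz constant of $\tilde{z}$ by controlling its gradient wherever it exists. Write $\tilde{z}(x) = (z(x)-1)P(x) + 1$, where $P(x)$ denotes the product of all the soft indicators appearing in the definition of $\tilde{z}$; each factor lies in $[0,1]$, and since $z$ is valued in $[0,1]$ so is $\tilde{z}$, so it suffices to bound $\mathrm{Lip}((z-1)P)$. Since $f_1,\dots,f_k$, $g$, the operator norm, and $\one[\le\tau](\cdot)$ are all locally Lipschitz on the compact domain $\cD_0$, the map $(z-1)P$ is locally Lipschitz, hence differentiable almost everywhere, and (integrating along line segments, on $\R^d$ or any convex domain) $\mathrm{Lip}((z-1)P) \le \mathrm{ess\,sup}_x \|D[(z-1)P](x)\|$. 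Thus the whole task reduces to showing $\|D[(z-1)P](x)\| \le \tau^\star$ for almost every $x$, which I would do via $D[(z-1)P] = P\cdot Dz + (z-1)\cdot DP$ with $|P|,|z-1|\le 1$.

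The key observation is a support property: by definition of $\one[\le\tau]$, its value is zero once its argument exceeds $2\tau$, and its derivative vanishes outside the ramp $[\tau,2\tau]$. Expanding $DP(x) = \sum_\alpha \big(\prod_{\beta\ne\alpha}p_\beta(x)\big)\,Dp_\alpha(x)$, one checks that $P(x)\ne 0$, or a single summand of this expansion being nonzero at $x$, forces every Jacobian norm appearing in $P$ to be at most twice its intended bound at that $x$ — including $\opnorm{Df_{j\ot 1}(x)}\le 2\tau_{j\ot 1}$, which corresponds to the choice $A_{m'} = \mathrm{id}$ among the $A_{m'}$'s. Granting this, the ``easy'' term is immediate: on $\{P\ne 0\}$ the chain rule gives $\|Dz(x)\| \le \sum_{k'=0}^{k} c_{k'}\opnorm{Df_{k'\ot 1}(x)} \le 2\sum_{k'=0}^{k} c_{k'}\tau_{k'\ot 1}$, which is dominated by the first sum of $\tau^\star$.

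The technical heart is bounding $\|Dp_\alpha(x)\|$ for each indicator $p_\alpha = \one[\le\tau_\alpha]\circ\opnorm{\cdot}\circ J_\alpha$; since the operator norm is $1$-Lipschitz, $\|Dp_\alpha(x)\| \le \tau_\alpha^{-1}\,\opnorm{DJ_\alpha(x)}$, so I need the local Lipschitz constant of the Jacobian-valued map $J_\alpha$ at a point $x$ where all Jacobian norms are controlled. For $J_\alpha = J_{j\ot i}$ I would use, for each $i'$ with $i\le i'\le j$, the chain-rule factorization (with the conventions $f_{j\ot i}=\mathrm{id}$ and $\tau_{j\ot i}=1$ for $j<i$)
\begin{align*}
J_{j \ot i}(x) = J_{j \ot i'+1}(x) \cdot Df_{i'}\big(h_{i'-1}(x)\big) \cdot J_{i'-1 \ot i}(x), \qquad h_l := f_{l\ot 1}.
\end{align*}
Differentiating the product $\prod_{l=i}^{j} Df_l(h_{l-1}(x))$ in $x$ and applying the triangle inequality over which factor is differentiated, the $i'$-th term is bounded by $\opnorm{J_{j\ot i'+1}(x)}\cdot\bar{\tau}_{i'}\opnorm{Dh_{i'-1}(x)}\cdot\opnorm{J_{i'-1\ot i}(x)}$, using $\bar{\tau}_{i'}$-Lipschitzness of $Df_{i'}=Df_{i'\ot i'}$; since $Dh_{i'-1}(x) = Df_{i'-1\ot 1}(x)$, the support property bounds this by $2\tau_{j\ot i'+1}\cdot 2\bar{\tau}_{i'}\tau_{i'-1\ot 1}\cdot 2\tau_{i'-1\ot i}$. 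Summing over $i'$, dividing by $\tau_{j\ot i}$, and then summing over $\alpha$ reproduces, up to the slack $8\le 18$, the second sum of $\tau^\star$; the case $J_\alpha = J_{j\ot 1,m'} = Df_{j\ot 1}\cdot A_{m'}$ with $A_{m'}$ constant is handled identically, except the trailing factor of the telescoped product is $J_{i'-1\ot 1,m'}(x)$ (norm $\le 2\tau_{i'-1\ot 1,m'}$), which yields the third sum. Adding the three contributions gives $\|D\tilde{z}(x)\| \le \tau^\star$ almost everywhere, completing the proof.

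I expect the main obstacle to be purely the combinatorial bookkeeping in the telescoping step: after differentiating a length-$(j-i+1)$ matrix product whose every factor depends on $x$ through a nested composition, one must check that each resulting term is rewritten as a product of three previously defined Jacobian quantities (so the support property applies), and that all boundary conventions — the empty products, $\tau_{j\ot i}=1$ for $j<i$, the identity among the $A_{m'}$, and the $i=1$ versus $i\ge 2$ split in the indicator product — are handled consistently. The generous constant $18$ in $\tau^\star$ exists precisely to absorb the powers of $2$ incurred by each ``$\le 2\tau$'' estimate, so beyond the bookkeeping there is no delicate inequality to prove.
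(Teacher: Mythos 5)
Your proposal is correct but takes a genuinely different route from the paper's. The paper argues via finite differences: it shows directly that $|\tilde{z}(x) - \tilde{z}(x+\nu)| \le \tau^\star\|\nu\|$ for all sufficiently small $\nu$, using a telescoping decomposition (Claims~\ref{claim:diff_expansion},~\ref{claim:telescope_product}) of $\tilde{z}(x) - \tilde{z}(x+\nu)$ into terms $\delta_j$ and $\Delta_{j\ot i}, \Delta_{j\ot 1,m'}$, each bounded by Claims~\ref{claim:delta_err} and~\ref{claim:Delta_err}. Your argument instead bounds $\|D\tilde{z}\|$ almost everywhere and integrates along segments. The two approaches use the same key ideas — the product-rule/chain-rule telescoping and the support property that a nonzero indicator (or its derivative) forces all Jacobian norms appearing in $P$ to satisfy $\opnorm{J} \le 2\tau$ — but your differential version is cleaner and yields the constant $8$ where the finite-difference version needs $18$ to absorb the $O(\|\nu\|)$ slack in each factor. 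The tradeoff is that your route requires care with Rademacher-type almost-everywhere differentiability and the Leibniz/chain rule for locally Lipschitz maps: $g$ is only assumed Lipschitz, the operator norm is Lipschitz but not smooth, and $\one[\le\tau]$ is only piecewise-linear, so writing $D\tilde{z} = P\,Dz + (z-1)DP$ and $Dp_\alpha$ pointwise is not entirely free; one needs to argue either with Clarke's generalized gradient calculus or by taking difference-quotient limits directly, which is essentially what the paper's finite-difference bookkeeping does explicitly. One small flag on your side observation: you remark that control of $\opnorm{J_{j\ot 1}}$ "corresponds to the choice $A_{m'}=\mathrm{id}$ among the $A_{m'}$'s," but the lemma does not postulate that identity is among the $A_{m'}$'s; rather, the product $\prod_{2\le i\le j}$ in the lemma statement appears to be a typo for $\prod_{1\le i\le j}$ — the paper's own proof includes $Q_{j\ot 1}$ in $\cfunc$ and uses it in Claim~\ref{claim:delta_err}, and the application in Theorem~\ref{thm:lip} sets $\tau_{j\ot 1} = \kappa_{j+i\ot i+1}$, which is one of the Jacobian indicators in the augmented loss. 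So your use of the bound $\opnorm{J_{i'-1\ot 1}}\le 2\tau_{i'-1\ot 1}$ is justified, just not via the $A_{m'}=\mathrm{id}$ mechanism you cite.
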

\newcommand{\cfunc}{\mathcal{Q}}
\newcommand{\pfunc}{\succ_{\cfunc}}
\begin{proof}
	For ease of notation, we will first define for any $(i, j)$ with $1 \le i \le j \le k$, $Q_{j\ot i} \triangleq \one[\le \tau_{j \ot i}](\opnorm{J_{j \ot i}})$ and for $(j, m')$ with $0 \le j \le k$, $Q_{j \ot 1, m'} \triangleq \one[\le \tau_{j \ot 1, m'}](\opnorm{J_{j \ot 1, m'}})$. Note in particular that $Q_{0 \ot 1, m'}$ is always a constant function. We will also let $\cfunc$ denote the collection of functions 
	\begin{align*}
	\mathcal{Q} = \{Q_{i \ot j}\}_{1 \le i \le j \le k} \cup \{Q_{j \ot 1, m'}\}_{0 \le j \le k, 1 \le m' \le m}
	\end{align*}
	We define the following order $\pfunc $ on this collection of functions: 
	\begin{align*}
	Q_{0 \ot 1, m} \pfunc \cdots \pfunc Q_{0 \ot 1, 1} \\
	\pfunc Q_{1 \ot 1} \pfunc Q_{1 \ot 1, m} \pfunc \cdots \pfunc Q_{1 \ot 1, 1}\\
	\pfunc Q_{2 \ot 2} \pfunc Q_{2 \ot 1} \pfunc Q_{2 \ot 1, m} \pfunc \cdots \pfunc Q_{2 \ot 1, 1}\\
	\vdots \\
	\pfunc Q_{k \ot k} \pfunc \cdots \pfunc Q_{k \ot 1} \pfunc Q_{k \ot 1, m'} \pfunc \cdots \pfunc Q_{k \ot 1, 1}
	\end{align*}
					 We will first show that $\exists C > 0$ such that $\forall x$ and $\forall \nu$ with $\|\nu\| < C$, $|\tilde{z}(x) - \tilde{z}(x + \nu)| \le \tau^\star \|\nu\|$. To use this statement to conclude that $\tilde{z}$ is $\tau^\star$-Lipschitz, we note that for arbitrary $x$ and $\nu$, we can divide the segment between $x$ and $x + \nu$ into segments of length at most $C$, and apply the above statement on each segment. First, define for $0 \le j \le k$
	\begin{align*}
	\gamma_j(x, \nu) \triangleq &g(f_{0 \ot 1}(x), \ldots, f_{j - 1 \ot 1}(x), f_{j \ot 1}(x), f_{j + 1 \ot 1}(x + \nu), \ldots, f_{k \ot 1}(x + \nu)) \\ -& g(f_{0 \ot 1}(x), \ldots, f_{j - 1 \ot 1}(x), f_{j \ot 1}(x + \nu), f_{j + 1 \ot 1}(x + \nu), \ldots, f_{k \ot 1}(x + \nu))
	\end{align*}
	Next, define the telescoping differences
	\begin{align}
	\delta_j(x, \nu) \triangleq &\gamma_j(x, \nu) \prod_{Q \succeq_{\cfunc} Q_{j \ot 1, 1}} Q(x) \prod_{Q_{j \ot 1, 1} \pfunc Q} Q(x + \nu) \ \forall 0 \le j \le k\label{eq:delta_def}\\
	\Delta_{j \ot i}(x, \nu) \triangleq &(Q_{j \ot i}(x) - Q_{j \ot i}(x + \nu)) \prod_{Q \pfunc Q_{j \ot i}} Q(x) \prod_{Q_{j \ot i} \pfunc Q} Q(x + \nu) \ \forall 1 \le i \le j \le k\label{eq:Delta_def} \\
	\begin{split}
	\Delta_{j \ot 1, m'}(x, \nu) \triangleq & (Q_{j \ot 1, m'}(x) - Q_{j \ot 1, m'}(x + \nu))\cdot \\ &\prod_{Q \pfunc Q_{j \ot 1, m'}} Q(x) \prod_{Q_{j \ot 1, m'} \pfunc Q} Q(x + \nu) \ \forall 0 \le j \le k \label{eq:Delta_m_def}
	\end{split}
	\end{align}
	Now note that by Claim~\ref{claim:diff_expansion}, we have the bound
	\begin{align*}
	|\tilde{z}(x) - \tilde{z}(x + \nu)| &\le \sum_{0 \le j \le k} |\delta_j(x, \nu)| + \sum_{1 \le i \le j \le k} |\Delta_{j \ot i}(x, \nu)| + \sum_{0 \le j \le k, m'} |\Delta_{j \ot 1, m'}(x, \nu)|
	\end{align*}
	Define $\bar{\tau}$ to be the Lipschitz constant of $J_{j \ot i}$ on $\cD_0$ for all $1 \le i \le j \le k$ guaranteed by Claim~\ref{claim:all_lipschitz}. First, note that $\Delta_{0 \ot 1, m'} = 0$ for all $m'$. Thus, by Claims~\ref{claim:Delta_err} and~\ref{claim:delta_err}, it follows that 
	\begin{align}
	\begin{split}
	&|\tilde{z}(x) - \tilde{z}(x + \nu)| \le  \sum_{0 \le j \le k} c_j (2 \tau_{j \to 1} + \frac{\bar{\tau}}{2} \|\nu\|)\|\nu\| \\
	& + \sum_{1 \le i \le j \le k} \|\nu\|\frac{\sum_{i' = i}^{j} (2\tau_{j \ot i' + 1} + \bar{\tau} \|\nu \|)\bar{\tau}_{i'} (2\tau_{i' - 1 \ot 1} + \frac{\bar{\tau}}{2} \|\nu\|)2\tau_{i' - 1 \ot i}}{\tau_{j \ot i}} \\ 
	& + \sum_{1 \le j \le k, 1 \le m' \le m} \|\nu\|\frac{\sum_{i' = 1}^{j} (2\tau_{j \ot i' + 1} + \bar{\tau} \|\nu \|)\bar{\tau}_{i'} (2\tau_{i' - 1 \ot 1} + \frac{\bar{\tau}}{2} \|\nu\|)2\tau_{i' - 1 \ot 1, m'}}{\tau_{j \ot 1, m'}} \label{eq:lip-1} 
	\end{split}
	\end{align}
	Now note that if $\|\nu \| \le \frac{2\min_{i \le j} \tau_{j \ot i}}{\bar{\tau}}$, then it follows that $2\tau_{j \ot i} + \frac{\bar{\tau}}{2}\|\nu\| \le 3\tau_{j \ot i} \forall i \le j$. Substituting into~\eqref{eq:lip-1}, we get that $\forall x, \|\nu \| \le \frac{2\min_{i \le j} \tau_{j \ot i}}{\bar{\tau}}$, 
	\begin{align*}
	|\tilde{z}(x) - \tilde{z}(x + \nu)| \le & \|\nu\|\sum_{0 \le j \le k} 3c_j\tau_{j \ot 1}  \\&+ \|\nu\| 18 \sum_{1 \le i \le j \le k} \frac{\sum_{i' = i}^j \bar{\tau}_{i'} \tau_{j \ot i' + 1} \tau_{i' - 1 \ot 1} \tau_{i' -1 \ot i}}{\tau_{j \ot i}} \\&+ \|\nu\|18 \sum_{1  \le j \le k, m'} \frac{\sum_{i' = 1}^j \bar{\tau}_{i'} \tau_{j \ot i' + 1} \tau_{i' - 1 \ot 1} \tau_{i' -1 \ot 1, m'}}{\tau_{j \ot 1, m'}}  \\= & \tau^\star \|\nu\|
	\end{align*}
	It follows that $\tilde{z}$ is $\tau^\star$-Lipschitz. \end{proof}

\begin{claim}
	\label{claim:jacobian_diff}
	In the setting of Lemma~\ref{lem:lip}, for $1 \le i \le j \le k$, we can expand the error $J_{j \ot i}(x) - J_{j \ot i}(x + \nu)$ as follows:
	\begin{align}
	J_{j \ot i}(x) - J_{j \ot i}(x + \nu) = \sum_{i' = i}^j J_{j \ot i' + 1}(x + \nu) (J_{i' \ot i'}(x) - J_{i' \ot i'}(x + \nu)) J_{i' - 1 \ot i}(x) \label{eq:jacobian_diff-1}
	\end{align}
	Furthermore, for $1 \le j \le k, m'$, we can expand the error $J_{j \ot 1, m'}(x) - J_{j \ot 1, m'}(x + \nu)$ as follows: 
	\begin{align}
	J_{j \ot 1, m'}(x) - J_{j \ot 1, m'}(x + \nu) = \sum_{i' = 1}^j J_{j \ot i' + 1}(x + \nu) (J_{i' \ot i'}(x) - J_{i' \ot i'}(x + \nu)) J_{i' - 1 \ot 1, m'}(x) \label{eq:jacobian_diff-2}
	\end{align}
\end{claim}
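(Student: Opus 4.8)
The plan is to derive both identities from the standard telescoping expansion for a difference of ordered (non-commutative) products, applied to the chain-rule factorization of the interlayer Jacobians into single-layer Jacobians.

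First I would record the chain rule in the form $J_{j \ot i}(x) = J_{j \ot j}(x)\, J_{j - 1 \ot j - 1}(x) \cdots J_{i \ot i}(x)$, where each factor $J_{i' \ot i'}(x) = Df_{i'}(f_{i' - 1 \ot 1}(x))$ is the single-layer Jacobian evaluated at the appropriate intermediate activation; this follows since $f_{i' - 1 \ot i}\circ f_{i - 1 \ot 1} = f_{i' - 1 \ot 1}$ and the chain rule for $f_{j \ot i} = f_j \circ \cdots \circ f_i$. The ordering of the product matters as these are linear operators, and I would fix the convention (already in force in this section) that $J_{j \ot i}$ is the identity whenever $j < i$, so that degenerate index ranges give empty products equal to the identity.

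Next, for fixed $x$ and $\nu$, set $a_{i'} \triangleq J_{i' \ot i'}(x)$ and $b_{i'} \triangleq J_{i' \ot i'}(x + \nu)$ for $i \le i' \le j$. The elementary telescoping identity
\begin{align*}
\prod_{i' = i}^{j} a_{i'} - \prod_{i' = i}^{j} b_{i'} = \sum_{i' = i}^{j} \Big(b_j \cdots b_{i' + 1}\Big)\, (a_{i'} - b_{i'})\, \Big(a_{i' - 1} \cdots a_i\Big),
\end{align*}
proved by expanding the right-hand side and cancelling consecutive cross terms (equivalently, by induction on $j - i$), gives exactly~\eqref{eq:jacobian_diff-1} once we recognize $b_j \cdots b_{i' + 1} = J_{j \ot i' + 1}(x + \nu)$ and $a_{i' - 1} \cdots a_i = J_{i' - 1 \ot i}(x)$, using the empty-product convention at the endpoints $i' = i$ and $i' = j$.

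For~\eqref{eq:jacobian_diff-2}, I would observe that $J_{j \ot 1, m'}(x) = J_{j \ot 1}(x)\, A_{m'}$ with $A_{m'}$ independent of $x$, so $J_{j \ot 1, m'}(x) - J_{j \ot 1, m'}(x + \nu) = \big(J_{j \ot 1}(x) - J_{j \ot 1}(x + \nu)\big) A_{m'}$; substituting~\eqref{eq:jacobian_diff-1} with $i = 1$ and using $J_{i' - 1 \ot 1}(x)\, A_{m'} = J_{i' - 1 \ot 1, m'}(x)$ (which for $i' = 1$ reads $A_{m'} = J_{0 \ot 1, m'}(x)$) yields the claim. I do not expect a substantive obstacle here: the only thing requiring care is the non-commutativity of the products, so that the cancelled factors in the telescoping are on the correct side, together with the boundary conventions for degenerate index ranges; the rest is bookkeeping.
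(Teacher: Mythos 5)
Your proof is correct and is essentially the same as the paper's: the paper establishes~\eqref{eq:jacobian_diff-1} by induction on $j-i$ (splitting off the rightmost factor via the chain rule at each step), which is precisely how one proves the non-commutative telescoping identity you invoke, and the paper in fact states that identity separately as Claim~\ref{claim:telescope_product} in its toolbox. Your treatment of~\eqref{eq:jacobian_diff-2} by factoring out the $x$-independent $A_{m'}$ is also identical to the paper's.
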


\begin{proof}
	We will first show~\eqref{eq:jacobian_diff-1} by inducting on $j - i$. The base case $j = i$ follows by definition, as we can reduce $J_{i \ot i + 1}$ and $J_{i - 1 \ot i}$ to constant-valued functions that output the identity matrix.
	
	For the inductive step, we use Claim~\ref{claim:chain_rule} to expand
	\begin{align}
	J_{j \ot i}(x) - J_{j \ot i}(x + \nu) =& J_{j \ot i + 1}(x) J_{i \ot i}(x) - J_{j \ot i + 1}(x + \nu) J_{i \ot i}(x + \nu)   \nonumber \\
	=& (J_{j \ot i + 1}(x) - J_{j \ot i + 1}(x + \nu))J_{i \ot i}(x) \nonumber \\
	& + J_{j \ot i + 1}(x + \nu) (J_{i \ot i}(x) - J_{i \ot i}(x + \nu)) \nonumber \\
	= & \sum_{i' = i+1}^j J_{j \ot i' + 1}(x + \nu) (J_{i' \ot i'}(x) - J_{i' \ot i'}(x + \nu)) J_{i' - 1 \ot i + 1}(x) J_{i \ot i}(x) \tag{by the inductive hypothesis}\\
	& + J_{j \ot i + 1}(x + \nu) (J_{i \ot i}(x) - J_{i \ot i}(x + \nu)) \nonumber \\
	= & \sum_{i' = i+1}^j J_{j \ot i' + 1}(x + \nu) (J_{i' \ot i'}(x) - J_{i' \ot i'}(x + \nu)) J_{i' - 1 \ot i}(x)  \tag{by Claim~\ref{claim:chain_rule}}\\
	& + J_{j \ot i + 1}(x + \nu) (J_{i \ot i}(x) - J_{i \ot i}(x + \nu)) \nonumber \\
	= &\sum_{i' = i}^j J_{j \ot i' + 1}(x + \nu) (J_{i' \ot i'}(x) - J_{i' \ot i'}(x + \nu)) J_{i' - 1 \ot i}(x) \nonumber
	\end{align}
	as  desired.
	
	To prove~\eqref{eq:jacobian_diff-2}, we first note that by definition, $J_{j \ot 1, m'}(x) = J_{j \ot 1}(x) J_{0 \ot 1, m'}$, so 
	\begin{align}
	&J_{j \ot 1, m'}(x) - J_{j \ot 1, m'}(x + \nu)\\&= (J_{j \ot 1}(x) - J_{j \ot 1}(x + \nu))J_{0 \ot 1, m'} \notag \\
	&= \sum_{i' = 1}^j J_{j \ot i' + 1}(x + \nu) (J_{i' \ot i'}(x) - J_{i' \ot i'}(x + \nu)) J_{i' - 1 \ot 1}(x) J_{0 \ot 1, m'} \tag{by~\eqref{eq:jacobian_diff-1}}\\
	&= \sum_{i' = 1}^j J_{j \ot i' + 1}(x + \nu) (J_{i' \ot i'}(x) - J_{i' \ot i'}(x + \nu)) J_{i' - 1 \ot 1, m'}(x) \tag{since $J_{i' - 1 \ot 1}(x) J_{0 \ot 1, m'} = J_{i' -1 \ot 1, m'}(x)$}
	\end{align}
\end{proof}

\begin{claim}
	\label{claim:jacobian_err}
	In the setting of Lemma~\ref{lem:lip}, suppose that $J_{j \ot i}$ is $\bar{\tau}$-Lipschitz for all $1 \le i \le j \le k$. Then we can bound the operator norm error in the Jacobian by
	\begin{align}
	\begin{split}
	\opnorm{J_{j \ot i}(x) - J_{j \ot i}(x + \nu)} \le \\ \|\nu\|\sum_{i' = i}^{j} (\opnorm{J_{j \ot i' + 1}(x)} + \bar{\tau} \|\nu \|)\bar{\tau}_{i'} (\opnorm{J_{i' - 1 \ot 1}(x)} + \frac{\bar{\tau}}{2} \|\nu\|)\opnorm{J_{i' - 1 \ot i}(x)} 
	\label{eq:jacobian_err-1}
	\end{split}
	\end{align}
	Likewise, we can bound the operator norm error in the product between Jacobian and auxiliary matrices by
	\begin{align}
	\begin{split}
	\label{eq:jacobian_err-2}
	\opnorm{J_{j \ot 1, m'}(x) - J_{j \ot 1, m'}(x + \nu)} \le \\ \|\nu\|\sum_{i' = 1}^{j} (\opnorm{J_{j \ot i' + 1}(x)} + \bar{\tau} \|\nu \|)\bar{\tau}_{i'} (\opnorm{J_{i' - 1 \ot 1}(x)} + \frac{\bar{\tau}}{2} \|\nu\|)\opnorm{J_{i' - 1 \ot 1, m'}(x)} 
	\end{split}
	\end{align}
\end{claim}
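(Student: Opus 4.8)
The plan is to read off both bounds from the telescoping identities of Claim~\ref{claim:jacobian_diff} by taking operator norms summand by summand. For~\eqref{eq:jacobian_err-1} I would start from the expansion
\begin{align*}
J_{j \ot i}(x) - J_{j \ot i}(x + \nu) = \sum_{i' = i}^j J_{j \ot i' + 1}(x + \nu)\,\bigl(J_{i' \ot i'}(x) - J_{i' \ot i'}(x + \nu)\bigr)\,J_{i' - 1 \ot i}(x),
\end{align*}
apply the triangle inequality over $i'$, and bound each product of three operators by the product of their operator norms (submultiplicativity). It then suffices to control the three factors $\opnorm{J_{j \ot i' + 1}(x + \nu)}$, $\opnorm{J_{i' \ot i'}(x) - J_{i' \ot i'}(x + \nu)}$, and $\opnorm{J_{i' - 1 \ot i}(x)}$ separately. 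The last factor already appears in the claimed bound. For the first, the hypothesis that $J_{j \ot i' + 1}$ is $\bar{\tau}$-Lipschitz gives $\opnorm{J_{j \ot i' + 1}(x + \nu)} \le \opnorm{J_{j \ot i' + 1}(x)} + \bar{\tau}\|\nu\|$.

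The middle factor is the only step with real content. Since $J_{i' \ot i'}(x) = Df_{i' \ot i'}(f_{i' - 1 \ot 1}(x))$ and $Df_{i' \ot i'}$ is $\bar{\tau}_{i'}$-Lipschitz (a hypothesis of Lemma~\ref{lem:lip}), we have $\opnorm{J_{i' \ot i'}(x) - J_{i' \ot i'}(x + \nu)} \le \bar{\tau}_{i'}\,\|f_{i' - 1 \ot 1}(x + \nu) - f_{i' - 1 \ot 1}(x)\|$, so I need a displacement bound for $f_{i' - 1 \ot 1}$. I would obtain it from the integral form of the mean value inequality: since the $f_i$ are $C^1$, $f_{i' - 1 \ot 1}(x + \nu) - f_{i' - 1 \ot 1}(x) = \int_0^1 J_{i' - 1 \ot 1}(x + t\nu)\,\nu\, dt$, hence $\|f_{i' - 1 \ot 1}(x + \nu) - f_{i' - 1 \ot 1}(x)\| \le \|\nu\|\int_0^1 \opnorm{J_{i' - 1 \ot 1}(x + t\nu)}\, dt$. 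Using $\bar{\tau}$-Lipschitzness of $J_{i' - 1 \ot 1}$ once more, $\opnorm{J_{i' - 1 \ot 1}(x + t\nu)} \le \opnorm{J_{i' - 1 \ot 1}(x)} + \bar{\tau}\, t\,\|\nu\|$, and integrating over $t \in [0,1]$ produces the factor $\opnorm{J_{i' - 1 \ot 1}(x)} + \frac{\bar{\tau}}{2}\|\nu\|$. Multiplying the three bounds together and factoring out the single $\|\nu\|$ coming from the displacement estimate yields exactly~\eqref{eq:jacobian_err-1}.

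For~\eqref{eq:jacobian_err-2} I would run the identical argument starting from~\eqref{eq:jacobian_diff-2} instead of~\eqref{eq:jacobian_diff-1}: this only changes the trailing operator from $J_{i' - 1 \ot i}(x)$ to $J_{i' - 1 \ot 1, m'}(x)$ and shifts the summation range to $i'$ from $1$ to $j$, while the first two factors are handled word for word as above. The only genuine obstacle is the displacement/integration step for $f_{i' - 1 \ot 1}$; the remaining points — submultiplicativity of the operator norm and the boundary conventions ($f_{j \ot i}$ the identity when $j < i$, so that $J_{i - 1 \ot i} \equiv I$, and $J_{0 \ot 1, m'} = A_{m'}$ is constant with Lipschitz constant $0$) — are routine.
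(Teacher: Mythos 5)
Your proposal is correct and matches the paper's proof: the same telescoping expansion from Claim~\ref{claim:jacobian_diff}, triangle inequality plus submultiplicativity of the operator norm, the $\bar\tau$-Lipschitzness bound for the first factor, and the $\bar\tau_{i'}$-Lipschitzness of $Df_{i'\ot i'}$ for the middle factor. The only cosmetic difference is that where you derive the displacement estimate $\|f_{i'-1\ot 1}(x+\nu)-f_{i'-1\ot 1}(x)\|\le(\opnorm{J_{i'-1\ot 1}(x)}+\tfrac{\bar\tau}{2}\|\nu\|)\|\nu\|$ inline via the integral form of the mean value inequality, the paper cites Claim~\ref{claim:finite_change}, whose proof is exactly that calculation.
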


\begin{proof}
	We will first prove~\eqref{eq:jacobian_err-1}, as the proof of~\eqref{eq:jacobian_err-2} is nearly identical. Starting from~\eqref{eq:jacobian_diff-1} of Claim~\ref{claim:jacobian_diff}, we have
	\begin{align*}
	J_{j \ot i}(x) - J_{j \ot i}(x + \nu) = \sum_{i' = i}^j J_{j \ot i' + 1}(x + \nu) (J_{i' \ot i'}(x) - J_{i' \ot i'}(x + \nu)) J_{i' - 1 \ot i}(x)
	\end{align*}
	By triangle inequality and the fact that $J_{j' \ot i'}$ is $\bar{\tau}$-Lipschitz $\forall i' \le j'$, it follows that 
	\begin{align}
	&\opnorm{J_{j \ot i}(x) - J_{j \ot i}(x + \nu)} \\&\le \sum_{i' = i}^{j} \opnorm{J_{j \ot i' + 1}(x + \nu)} \opnorm{J_{i' \ot i'}(x) - J_{i'\ot i'}(x + \nu)} \opnorm{J_{i' - 1 \ot i}(x)} \nonumber \\
	&\le \sum_{i' = i}^{j} (\opnorm{J_{j \ot i' + 1}(x)} + \bar{\tau} \|\nu\|)\opnorm{J_{i' \ot i'}(x) - J_{i'\ot i'}(x + \nu)} \opnorm{J_{i' - 1 \ot i}(x)} \label{eq:specsumbound}
	\end{align}
	Next, we note that 
	\begin{align}
	\opnorm{J_{i' \ot i'}(x) - J_{i'\ot i'}(x + \nu)} &= \opnorm{Df_{i' \ot i'}[f_{i' - 1 \ot 1}(x)] - Df_{i'  \ot i'}[f_{i' - 1 \ot 1}(x + \nu)]} \nonumber \\
	&\le \bar{\tau}_{i'} \|f_{i' - 1 \ot 1}(x) - f_{i' - 1 \ot 1}(x + \nu)\| \nonumber \\
	&\le \bar{\tau}_{i'} (\opnorm{J_{i' - 1 \ot 1}(x)} + \frac{\bar{\tau}}{2} \|\nu\|) \|\nu\| \tag{applying Claim~\ref{claim:finite_change}}
	\end{align}
	Plugging the above into \eqref{eq:specsumbound}, we get $\eqref{eq:jacobian_err-1}$. To prove~\eqref{eq:jacobian_err-2}, we start from~\eqref{eq:jacobian_diff-2} and follow the same steps as above.
\end{proof}
\begin{claim}
	\label{claim:Delta_err}
	In the setting of Lemma~\ref{lem:lip}, suppose that $J_{j \ot i}$ is $\bar{\tau}$-Lipschitz for all $1 \le i \le j \le k$. Then we can upper bound the error terms corresponding to the indicators by
	\begin{align}
	|\Delta_{j \ot i}(x, \nu)| \le \|\nu\|\frac{\sum_{i' = i}^{j} (2\tau_{j \ot i' + 1} + \bar{\tau} \|\nu \|)\bar{\tau}_{i'} (2\tau_{i' - 1 \ot 1} + \frac{\bar{\tau}}{2} \|\nu\|)2\tau_{i' - 1 \ot i}}{\tau_{j \ot i}}
	\label{eq:Delta_err-i-j}
	\end{align}
	Likewise, the following upper bound holds for all $(j, m')$ with $1 \le j \le k, 1 \le m' \le m$: 
	\begin{align}
	|\Delta_{j \ot 1, m'}(x, \nu)| \le \|\nu\|\frac{\sum_{i' = 1}^{j} (2\tau_{j \ot i' + 1} + \bar{\tau} \|\nu \|)\bar{\tau}_{i'} (2\tau_{i' - 1 \ot 1} + \frac{\bar{\tau}}{2} \|\nu\|)2\tau_{i' - 1 \ot 1, m'}}{\tau_{j \ot 1, m'}}
	\label{eq:Delta_err-j-m}
	\end{align}
\end{claim}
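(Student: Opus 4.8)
The plan is to reduce the bound to Claim~\ref{claim:jacobian_err}. If $\Delta_{j \ot i}(x, \nu) = 0$ there is nothing to prove, so assume it is nonzero. By the definition~\eqref{eq:Delta_def}, this forces the products $\prod_{Q \pfunc Q_{j \ot i}} Q(x)$ and $\prod_{Q_{j \ot i} \pfunc Q} Q(x + \nu)$ to both be nonzero; since every factor $Q$ is a softened indicator taking values in $[0,1]$, both products lie in $[0,1]$, so $|\Delta_{j \ot i}(x, \nu)| \le |Q_{j \ot i}(x) - Q_{j \ot i}(x + \nu)|$, and moreover each individual indicator $Q'$ appearing in the first product has $Q'(x) \ne 0$, hence (by the explicit form of $\one[\le \kappa]$ in Section~\ref{sec:notations}) the argument of $Q'$ at $x$ is strictly less than twice its threshold. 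Since $\one[\le \tau_{j \ot i}]$ is $\tau_{j \ot i}^{-1}$-Lipschitz and $|\opnorm{A} - \opnorm{B}| \le \opnorm{A - B}$, we get
\begin{align*}
|Q_{j \ot i}(x) - Q_{j \ot i}(x + \nu)| \le \tau_{j \ot i}^{-1}\,\opnorm{J_{j \ot i}(x) - J_{j \ot i}(x + \nu)},
\end{align*}
and bounding the right-hand side with~\eqref{eq:jacobian_err-1} of Claim~\ref{claim:jacobian_err} (whose hypothesis that each $J_{j \ot i}$ is $\bar{\tau}$-Lipschitz is also assumed here) expresses it as $\|\nu\|$ times a sum over $i' = i, \ldots, j$ of $\bar{\tau}_{i'}$ times products of the three quantities $\opnorm{J_{j \ot i' + 1}(x)} + \bar{\tau}\|\nu\|$, $\opnorm{J_{i' - 1 \ot 1}(x)} + \frac{\bar{\tau}}{2}\|\nu\|$, and $\opnorm{J_{i' - 1 \ot i}(x)}$.

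The key step is to replace each $\opnorm{J_{\bullet}(x)}$ appearing here by $2\tau_{\bullet}$. For genuine (non-identity) Jacobians this follows from the activeness observation above, once we check that the corresponding indicator $Q_{\bullet}$ precedes $Q_{j \ot i}$ in the order $\pfunc$ --- which is exactly what the ordering in the proof of Lemma~\ref{lem:lip} was designed to guarantee. Indeed, for $i \le i' \le j$: when $i' + 1 \le j$, $Q_{j \ot i' + 1}$ lies on row $j$ with second index $i' + 1 > i$, so $Q_{j \ot i' + 1} \pfunc Q_{j \ot i}$; and $Q_{i' - 1 \ot 1}$ (for $i' \ge 2$) and $Q_{i' - 1 \ot i}$ (for $i' \ge i + 1$) have first index $i' - 1 \le j - 1 < j$, so they lie on strictly earlier rows. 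The remaining degenerate terms $J_{j \ot j + 1}$ (the case $i' = j$), $J_{0 \ot 1}$ (the case $i' = 1$, which only occurs when $i = 1$), and $J_{i - 1 \ot i}$ (the case $i' = i$) are all derivatives of identity maps, hence have operator norm exactly $1 \le 2 = 2\tau_{\bullet}$ under the convention $\tau_{j' \ot i'} = 1$ for $j' < i'$. Substituting $\opnorm{J_{j \ot i' + 1}(x)} \le 2\tau_{j \ot i' + 1}$, $\opnorm{J_{i' - 1 \ot 1}(x)} \le 2\tau_{i' - 1 \ot 1}$, and $\opnorm{J_{i' - 1 \ot i}(x)} \le 2\tau_{i' - 1 \ot i}$, and then dividing through by $\tau_{j \ot i}$, yields exactly~\eqref{eq:Delta_err-i-j}.

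The second inequality~\eqref{eq:Delta_err-j-m} is proved by the identical argument applied to $Q_{j \ot 1, m'}$ (which is $\tau_{j \ot 1, m'}^{-1}$-Lipschitz) using the companion expansion~\eqref{eq:jacobian_err-2}: the factors to upgrade are $\opnorm{J_{j \ot i' + 1}(x)}$, $\opnorm{J_{i' - 1 \ot 1}(x)}$, and $\opnorm{J_{i' - 1 \ot 1, m'}(x)}$ for $1 \le i' \le j$, and each associated indicator $Q_{j \ot i' + 1}$ (second index $\ge 2$, so before $Q_{j \ot 1}$ on row $j$), $Q_{i' - 1 \ot 1}$, and $Q_{i' - 1 \ot 1, m'}$ (both on rows $\le j - 1$, or on row $0$ when $i' = 1$) precedes $Q_{j \ot 1, m'}$ in $\pfunc$, while the identity-Jacobian cases are handled as before. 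I expect the only real obstacle to be this bookkeeping --- sorting out which index ranges yield genuine Jacobians versus identity maps and verifying in each case that the relevant indicator comes before $Q_{j \ot i}$ (resp. $Q_{j \ot 1, m'}$) in $\pfunc$, so that its argument is pinned below twice its threshold; everything else is a mechanical substitution into Claim~\ref{claim:jacobian_err}.
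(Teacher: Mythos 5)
Your proof is correct and follows essentially the same route as the paper: bound $|\Delta_{j\ot i}|$ by the Lipschitz constant $\tau_{j\ot i}^{-1}$ of the softened indicator applied to $\opnorm{J_{j\ot i}(x)-J_{j\ot i}(x+\nu)}$, invoke Claim~\ref{claim:jacobian_err}, and then upgrade each $\opnorm{J_\bullet(x)}$ to $2\tau_\bullet$ by observing that the corresponding indicator precedes $Q_{j\ot i}$ (resp.\ $Q_{j\ot 1,m'}$) in $\pfunc$ and so must be active when $\Delta\ne 0$. The only cosmetic difference is that the paper states this as a two-case split (defining a set $\mathcal E$ where all norms are controlled, and noting $\Delta=0$ off $\mathcal E$) while you argue the contrapositive; your bookkeeping of which indicators precede which in $\pfunc$, and of the degenerate identity-Jacobian cases under $\tau_{j'\ot i'}=1$ for $j'<i'$, is in fact more explicit than the paper's.
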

\begin{proof}
	We will prove~\eqref{eq:Delta_err-i-j} as the proof of~\eqref{eq:Delta_err-j-m} is analogous. Note that as $\one[\le \tau_{j \ot i}]$ is $\frac{1}{\tau_{j  \ot i}}$-Lipschitz in its argument, we have 
	\begin{align}
	|Q_{j \ot i}(x) - Q_{j \ot i}(x + \nu)| &= |\one[\le \tau_{j \ot i}](\opnorm{J_{j \ot i}(x)}) - \one[\le \tau_{j \ot i}](\opnorm{J_{j \ot i}(x + \nu)})| \notag \\
	&\le \frac{1}{\tau_{j  \ot i}} |\opnorm{J_{j \ot i}(x)} - \opnorm{J_{j \ot i}(x + \nu)}| \notag\\
	&\le \frac{1}{\tau_{j  \ot i}} \opnorm{J_{j \ot i}(x) - J_{j \ot i}(x + \nu)} \notag
	\end{align}
	Plugging this into our definition for $\Delta_{j \ot i}$~\eqref{eq:Delta_def}, it follows that 
	\begin{align}
|\Delta_{j \ot i}(x, \nu)| \le \frac{1}{\tau_{j  \ot i}} \opnorm{J_{j \ot i}(x) - J_{j \ot i}(x + \nu)} \prod_{Q \pfunc Q_{j \ot i}} Q(x) \prod_{Q_{j \ot i} \pfunc Q} Q(x + \nu) \label{eq:Delta_err-1}
	\end{align}
	Now we define the set $\mathcal{E}$ by 
	\begin{align*}
	\mathcal{E} = \cap_{i \le i' \le j} \{&x : \opnorm{J_{j \ot i' + 1}(x)} \le 2\tau_{j \ot i' + 1}, \opnorm{J_{i' - 1 \ot 1}(x)} \le 2 \tau_{i' - 1 \ot 1},\\ &\textup{ and }\opnorm{J_{i' - 1 \ot i}(x)} \le 2 \tau_{i' - 1 \ot i}\} 
	\end{align*}
	Note that if $x \notin \mathcal{E}$, then $\exists i' < j'$ such that $Q_{j' \ot i'}(x) = 0$ and $Q_{j' \ot i'} \pfunc Q_{j \ot i}$ by definition of the order $\pfunc$. It follows that if $x \notin \mathcal{E}$, $\prod_{h \pfunc Q_{j \ot i}} h(x) = 0$, so $|\Delta_{j \ot i}(x, \nu)| = 0$. Otherwise, if $x \in \mathcal{E}$, by Claim~\ref{claim:jacobian_err} we have
	\begin{align*}
	\opnorm{J_{j \ot i}(x) - J_{j \ot i}(x + \nu)} \le \|\nu\|\sum_{i' = i}^{j} (2\tau_{j \ot i' + 1} + \bar{\tau} \|\nu \|)\bar{\tau}_{i'} (2\tau_{i' - 1 \ot 1} + \frac{\bar{\tau}}{2} \|\nu\|)2\tau_{i' - 1 \ot i}
	\end{align*}
	where we recall that $\tau_{i - 1 \ot i} = 1$. 
	Plugging this into~\eqref{eq:Delta_err-1} and using the fact that all functions $h \in \cfunc$ are bounded by 1 gives the desired statement.
	
	To prove ~\eqref{eq:Delta_err-j-m}, we simply apply the above argument with~\eqref{eq:jacobian_err-2}.
\end{proof}

\begin{claim}
	\label{claim:delta_err}
	In the setting of Lemma~\ref{lem:lip}, fix index $j$ with $0 \le j \le k$ and suppose that $J_{j \ot 1}$ is $\bar{\tau}$-Lipschitz. Then we can bound the error due to function composition by 
	\begin{align*}
	|\delta_j(x, \nu)| \le c_j (2 \tau_{j \to 1} + \frac{\bar{\tau}}{2} \|\nu\|)\|\nu\|
	\end{align*}
\end{claim}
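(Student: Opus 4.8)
The plan is to reduce the bound on $|\delta_j|$ to a bound on $|\gamma_j|$, exploiting the fact that $\delta_j$ is simply $\gamma_j$ multiplied by a product of values of the softened indicators, each of which lies in $[0,1]$; the factor $\tau_{j\ot 1}$ appearing in the target inequality will be supplied by one of those very indicators.

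First I would write, directly from the definition~\eqref{eq:delta_def}, $\delta_j(x,\nu) = \gamma_j(x,\nu)\,P(x,\nu)$, where $P(x,\nu) \triangleq \prod_{Q \succeq_{\cfunc} Q_{j \ot 1, 1}} Q(x) \prod_{Q_{j \ot 1, 1} \pfunc Q} Q(x+\nu)$. Every factor of $P$ is a value of some $\one[\le \cdot](\cdot) \in [0,1]$, so $|P(x,\nu)| \le 1$ and hence $|\delta_j(x,\nu)| \le |\gamma_j(x,\nu)|$. Next I would locate $Q_{j\ot 1}$ inside this product: for $j \ge 1$, $Q_{j\ot 1}$ belongs to the collection $\cfunc$ and, in the order $\pfunc$, it sits immediately before the block $Q_{j\ot 1,m} \pfunc \cdots \pfunc Q_{j\ot 1,1}$, so $Q_{j\ot 1} \succeq_{\cfunc} Q_{j\ot 1,1}$, and therefore $Q_{j\ot 1}(x) = \one[\le \tau_{j\ot 1}](\opnorm{J_{j\ot 1}(x)})$ is one of the $Q(x)$-factors of $P(x,\nu)$. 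Consequently, if $\opnorm{J_{j\ot 1}(x)} \ge 2\tau_{j\ot 1}$ then $Q_{j\ot 1}(x) = 0$, hence $\delta_j(x,\nu) = 0$ and the claimed inequality holds trivially; the only remaining case is $\opnorm{J_{j\ot 1}(x)} < 2\tau_{j\ot 1}$. For $j = 0$ this bound is automatic, since $f_{0\ot 1}$ is the identity, $J_{0\ot 1}(x)$ is the identity operator, and $\tau_{0\ot 1} = 1$ by the stated convention, so $\opnorm{J_{0\ot 1}(x)} = 1 \le 2\tau_{0\ot 1}$.

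It then remains to bound $|\gamma_j(x,\nu)|$ under the assumption $\opnorm{J_{j\ot 1}(x)} \le 2\tau_{j\ot 1}$. The two arguments of $g$ in the definition of $\gamma_j$ differ only in the $(j+1)$-th slot, where they equal $f_{j\ot 1}(x)$ and $f_{j\ot 1}(x+\nu)$ respectively, so the hypothesis that $g$ is $c_j$-Lipschitz in its $(j+1)$-th argument gives $|\gamma_j(x,\nu)| \le c_j \|f_{j\ot 1}(x) - f_{j\ot 1}(x+\nu)\|$. Applying Claim~\ref{claim:finite_change} (exactly as in the proof of Claim~\ref{claim:jacobian_err}) yields $\|f_{j\ot 1}(x) - f_{j\ot 1}(x+\nu)\| \le (\opnorm{J_{j\ot 1}(x)} + \frac{\bar{\tau}}{2}\|\nu\|)\|\nu\| \le (2\tau_{j\ot 1} + \frac{\bar{\tau}}{2}\|\nu\|)\|\nu\|$. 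Chaining $|\delta_j(x,\nu)| \le |\gamma_j(x,\nu)| \le c_j(2\tau_{j\ot 1} + \frac{\bar{\tau}}{2}\|\nu\|)\|\nu\|$ completes the proof.

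The only delicate point — and essentially the entire content of the argument beyond one-line applications of Lipschitzness of $g$ and of Claim~\ref{claim:finite_change} — is the bookkeeping with the order $\pfunc$: one must check that the factor $Q_{j\ot 1}$ really is evaluated at $x$ (rather than $x+\nu$) in the product defining $\delta_j$, which is what licenses replacing $\opnorm{J_{j\ot 1}(x)}$ by $2\tau_{j\ot 1}$, and one must separately handle the degenerate index $j = 0$, for which no corresponding indicator appears in $\cfunc$, via the direct identity-operator observation above.
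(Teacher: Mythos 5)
Your argument is correct and follows essentially the same route as the paper's proof: isolate the factor $Q_{j\ot 1}(x)$ from the product, bound $|\gamma_j|$ by $c_j\|f_{j\ot 1}(x)-f_{j\ot 1}(x+\nu)\|$ via Lipschitzness of $g$, and invoke Claim~\ref{claim:finite_change}. The only (cosmetic) differences are that you replace the paper's direct appeal to~\eqref{eq:finite_change-2} with an explicit case split on whether $Q_{j\ot 1}(x)$ vanishes plus~\eqref{eq:finite_change-1}, and you explicitly verify the degenerate index $j=0$ (for which $Q_{0\ot 1}\notin\cfunc$), a point the paper's own proof glosses over.
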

\begin{proof}
	Starting from~\eqref{eq:delta_def}, we can first express $\delta_i(x, \nu)$ by 
	\begin{align*}
	\delta_j(x, \nu) = \gamma_{j} (x, \nu)Q_{j \ot 1}(x) \prod_{Q \succeq_{\cfunc} Q_{j \ot 1, 1}, Q \ne Q_{j \ot 1}} Q(x) \prod_{Q_{j \ot 1, 1} \pfunc Q} Q(x + \nu)
	\end{align*}
	as $Q_{j \ot 1} \pfunc Q_{j \ot 1, 1}$. First we note that by definition, $|\gamma_{j}(x, \nu)| \le c_j \|f_{j \ot 1}(x) - f_{j \ot 1}(x + \nu)\|$, as the function $g$ is $c_j$-Lipschitz in its $j$-th argument. Thus, since all functions $Q \in \cfunc$ are bounded by $1$, it follows that 
	\begin{align}
	|\delta_j(x, \nu)| &\le |\gamma_j(x, \nu)| Q_{j \ot 1}(x) \notag\\
	&\le c_j \|f_{j \ot 1}(x) - f_{j \ot 1}(x + \nu)\| \one[\le \tau_{j \ot 1}](\opnorm{J_{j \ot 1}(x)}) \notag\\
	&\le c_j (2 \tau_{j \to 1} + \frac{\bar{\tau}}{2} \|\nu\|)\|\nu\| \tag{by Claim~\ref{claim:finite_change}}
	\end{align}
\end{proof}

\begin{claim}
	\label{claim:all_lipschitz}
	In the setting of Lemma~\ref{lem:lip}, $\exists \bar{\tau}$ such that $\forall i \le j$, $J_{j \ot i}$ is $\bar{\tau}$-Lipschitz on a compact domain $\cD_0$. 
\end{claim}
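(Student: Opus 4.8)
The plan is to reduce the statement to the finitely many individual maps: it suffices to produce, for each pair $(i,j)$ with $1 \le i \le j \le k$, a constant $\bar{\tau}_{j \ot i}$ with which $J_{j \ot i}$ is Lipschitz on $\cD_0$, and then set $\bar{\tau} = \max_{i \le j} \bar{\tau}_{j \ot i}$. Rather than inducting on $j - i$, I would fully expand $J_{j \ot i}$ into single-step Jacobians via the chain rule (Claim~\ref{claim:chain_rule}): $J_{j \ot i}(x) = J_{j \ot j}(x)\, J_{j - 1 \ot j - 1}(x) \cdots J_{i \ot i}(x)$, where $J_{\ell \ot \ell}(x) = Df_\ell(f_{\ell - 1 \ot 1}(x))$. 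This reduces everything to two ingredients: controlling the single-step maps $J_{\ell \ot \ell}$, and a product rule for bounded Lipschitz operator-valued functions.

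For the single-step maps: each $f_\ell$ is differentiable, hence continuous, so each $f_{\ell \ot 1}$ is continuous and carries the compact set $\cD_0$ to a compact set $K_\ell \subseteq \cD_\ell$. The derivative $Df_\ell$ is $\bar{\tau}_\ell$-Lipschitz, hence continuous, hence bounded on the compact set $K_{\ell - 1}$ (and on a compact convex superset of it). Two consequences follow. First, $f_{\ell \ot 1}$ is Lipschitz on $\cD_0$: each $f_\ell$ has bounded derivative and so is Lipschitz on $K_{\ell - 1}$ (by integrating $Df_\ell$ along segments when $\cD_{\ell - 1}$ is a vector space, as in the neural-net application, and by a finite-subcover argument in general), and a composition of Lipschitz maps restricted to the relevant compact images is Lipschitz. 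Second, $J_{\ell \ot \ell} = Df_\ell \circ f_{\ell - 1 \ot 1}$ is a composition of the $\bar{\tau}_\ell$-Lipschitz map $Df_\ell$ with the Lipschitz map $f_{\ell - 1 \ot 1}$, hence Lipschitz on $\cD_0$; and $\sup_{x \in \cD_0} \opnorm{J_{\ell \ot \ell}(x)} \le \sup_{u \in K_{\ell - 1}} \opnorm{Df_\ell(u)} < \infty$, so it is also bounded on $\cD_0$.

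For the product rule: if $F_1, \dots, F_p$ are operator-valued functions on $\cD_0$, each bounded in operator norm by $M$ and $L$-Lipschitz in operator norm, then $x \mapsto F_p(x) \cdots F_1(x)$ is $p M^{p - 1} L$-Lipschitz, by the standard telescoping estimate that replaces one factor at a time. Applying this with $p = j - i + 1 \le k$ to the chain-rule expansion above shows $J_{j \ot i}$ is Lipschitz on $\cD_0$, and taking the maximum of the resulting constants over the finitely many pairs $(i,j)$ gives the uniform $\bar{\tau}$.

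The main obstacle I anticipate is purely the step that upgrades the hypothesis ``$Df_\ell$ is Lipschitz'' to ``$f_\ell$ is Lipschitz on $K_{\ell - 1}$'': a $C^1$ function on a non-convex compact set need not be globally Lipschitz from a derivative bound alone, so one must either exploit convexity of the ambient space $\cD_{\ell - 1}$ and integrate $Df_\ell$ along segments, or invoke that a locally Lipschitz function on a compact set is globally Lipschitz (finite subcover plus bounded diameter of the image). I expect this to be bookkeeping rather than a real difficulty; the rest is just the chain rule and the telescoping product estimate.
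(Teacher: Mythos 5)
Your proposal is correct and follows essentially the same route as the paper's proof: establish that each $f_{\ell \ot 1}$ is Lipschitz on $\cD_0$ (via compactness of the image and continuity of the derivative), conclude that each $J_{\ell \ot \ell} = Df_\ell \circ f_{\ell-1 \ot 1}$ is Lipschitz (and bounded) on $\cD_0$, then expand $J_{j \ot i}$ via the chain rule into a product of $J_{\ell \ot \ell}$'s and use a telescoping product estimate. You are slightly more careful than the paper on two points the paper passes over silently: that the product estimate requires boundedness of each factor, not just Lipschitzness, and that passing from ``$Df_\ell$ Lipschitz'' to ``$f_\ell$ Lipschitz on a compact set'' needs either convexity of $\cD_{\ell-1}$ or a finite-subcover argument.
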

\begin{proof}
	We first show inductively that $f_{i \ot 1}$ is Lipschitz for all $i$. The base case $f_{1 \ot 1}$ follows by definition, as $f_{1 \ot 1}$ is continuously differentiable and $\cD_0$ is a compact set. 
	
	Now we show the inductive step: first write $f_{i \ot 1} = f_{i} \circ f_{i - 1 \ot 1}$. By continuity, $\{f_{i - 1 \ot 1}(x) : x \in \cD_0\}$ is compact. Furthermore, $f_i$ is continuously differentiable under the assumptions of Lemma~\ref{lem:lip}. Thus, $f_i$ is Lipschitz on domain $\{f_{i - 1 \ot 1}(x) : x \in \cD_0\}$. As $f_{i \ot 1} = f_{i} \circ f_{i - 1 \ot 1}$ is the composition of Lipschitz functions by the inductive hypothesis, $f_{i \ot 1}$ is itself Lipschitz. 
	
	Now it follows that $\forall i$, $J_{i \ot i}$ is Lipschitz on $\cD_0$, as it is the composition of $Df_{i \ot i}$ and $f_{i - 1 \ot 1}$, both of which are Lipschitz. Finally, by the chain rule (Claim~\ref{claim:chain_rule}), we have that $J_{j \ot i} = J_{j \ot j} \cdots J_{i \ot i}$ is the product of Lipschitz functions, and therefore Lipschitz for all $i < j$. We simply take $\bar{\tau}$ to be the maximum Lipschitz constant of $J_{j \ot i}$ over all $i \le j$. 
\end{proof}

\begin{claim}
	\label{claim:diff_expansion}
	In the setting of Lemma~\ref{lem:lip}, 
	\begin{align*}
	|\tilde{z}(x) - \tilde{z}(x + \nu)| &\le \sum_{0 \le j \le k} |\delta_j(x, \nu)| + \sum_{1 \le i \le j \le k} |\Delta_{j \ot i}(x, \nu)| + \sum_{0 \le j \le k, m'} |\Delta_{j \ot 1, m'}(x, \nu)|
	\end{align*}
\end{claim}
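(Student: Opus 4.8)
The plan is to prove this by a telescoping (hybrid) argument over the factors of $\tilde z$. First I would note that the additive constant cancels, so $\tilde z(x) - \tilde z(x+\nu) = (z(x) - 1)\prod_{Q} Q(x) - (z(x+\nu)-1)\prod_{Q} Q(x+\nu)$, where the product ranges over the indicator factors $Q \in \cfunc$ appearing in $\tilde z$ and $z(\cdot) = g(f_{0 \ot 1}(\cdot), \dots, f_{k \ot 1}(\cdot))$. Every indicator lies in $[0,1]$ and $z - 1 \in [-1,0]$, so each factor, and each partial product of factors, has absolute value at most $1$. The plan is then to pass from $\tilde z(x)$ to $\tilde z(x+\nu)$ one ``ingredient'' at a time --- either replacing $f_{j \ot 1}(x)$ by $f_{j \ot 1}(x+\nu)$ in the $j$-th argument of $g$, or replacing $Q(x)$ by $Q(x+\nu)$ in one indicator factor --- so that the running value changes by (the change in that single ingredient) $\times$ (the remaining, frozen ingredients). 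Each such increment has absolute value at most the magnitude of the single change, since the frozen factors multiply to at most $1$ in absolute value; the telescoping identity then writes $\tilde z(x) - \tilde z(x+\nu)$ as the exact sum of these increments, and the triangle inequality gives the bound.

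The crucial step is choosing the \emph{order} of the switches so that each increment is \emph{literally} one of the predefined quantities $\delta_j$, $\Delta_{j \ot i}$, $\Delta_{j \ot 1, m'}$ from~\eqref{eq:delta_def}--\eqref{eq:Delta_m_def}. I would switch the indicator factors in the reverse of the order $\pfunc$, interleaving the switch of the $j$-th argument of $g$ so that it immediately precedes the run $Q_{j \ot j}, \dots, Q_{j \ot 1}, Q_{j \ot 1, m}, \dots, Q_{j \ot 1, 1}$ of indicators with first index $j$. One then checks that at the moment the $j$-th $g$-argument is switched, the indicators $Q_{j' \ot i}$ and $Q_{j' \ot 1, m'}$ with $j' \le j$ are still evaluated at $x$ while those with $j' > j$ are at $x+\nu$, and the $g$-arguments $0, \dots, j-1$ are at $x$ while $j+1, \dots, k$ are at $x+\nu$; the corresponding increment is then exactly $\gamma_j(x, \nu)\prod_{Q \succeq_{\cfunc} Q_{j \ot 1, 1}} Q(x)\prod_{Q_{j \ot 1, 1}\pfunc Q} Q(x+\nu) = \delta_j(x,\nu)$. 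When instead an indicator $Q_{j \ot i}$ (resp.\ $Q_{j \ot 1, m'}$) is switched, the $g$-arguments stay frozen, so the increment equals $(g(\xi) - 1)\,\Delta_{j \ot i}(x,\nu)$ (resp.\ $(g(\xi)-1)\,\Delta_{j \ot 1, m'}(x,\nu)$), where $\xi$ is the mixed tuple of layer values present at that stage and $\Delta_{j \ot i}$ is exactly the product of differences in~\eqref{eq:Delta_def}; since $|g(\xi) - 1|\le 1$, the absolute value of this increment is at most $|\Delta_{j \ot i}(x,\nu)|$ (resp.\ $|\Delta_{j \ot 1, m'}(x,\nu)|$). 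Here I use that all $Q \ge 0$, so no absolute values are needed inside the products.

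Summing the increments and taking absolute values term by term yields the claim; any $\Delta_{j \ot i}$- or $\Delta_{j \ot 1, m'}$-term that appears on the right-hand side but not among the factors of $\tilde z$ (for instance, the $i=1$ terms, if $\tilde z$ is written with its indicator product ranging over $2 \le i \le j$) is nonnegative and may simply be appended. I expect the only real obstacle to be the bookkeeping in the second paragraph: confirming that the interleaved reverse-$\pfunc$ order reproduces verbatim the telescoping differences $\gamma_j$ and the ``before''/``after'' products in the definitions of $\delta_j$, $\Delta_{j \ot i}$, and $\Delta_{j \ot 1, m'}$. Once the order is fixed, every individual estimate is immediate from applying $a_1 a_2 - b_1 b_2 = (a_1 - b_1)a_2 + b_1(a_2 - b_2)$ repeatedly together with the $[-1,1]$-boundedness of all ingredients.
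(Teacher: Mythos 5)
Your argument reproduces the paper's proof of this claim: a hybrid/telescoping decomposition of $\tilde z(x) - \tilde z(x+\nu)$ in which the indicator factors are switched in the $\pfunc$-traversal order (least to greatest), with each $g$-argument switch inserted just before the block of index-$j$ indicators so that every increment is literally $\delta_j$, $(z_j(x,\nu)-1)\Delta_{j\ot i}$, or $(z_j(x,\nu)-1)\Delta_{j\ot 1,m'}$; triangle inequality together with $z_j(x,\nu)-1\in[-1,0]$ then gives the bound, exactly as in the paper's~\eqref{eq:diff_expansion-1}. One caveat on your parenthetical: if $\tilde z$ really carried only the $i\ge 2$ indicators, you could not recover the claim merely by appending nonnegative $|\Delta_{j\ot 1}|$ terms, because $\delta_j$ and $\Delta_{j\ot i}$ are themselves defined via products over all of $\cfunc$ (which includes $Q_{j\ot 1}$); dropping $Q_{j\ot 1}$ from $\tilde z$ would change those products and hence the telescoping increments, not just remove some summands. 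The correct resolution is to read the display in Lemma~\ref{lem:lip} as $\prod_{1\le i\le j}$ (a typo for the stated $2\le i\le j$), consistent with~\eqref{eq:tilde_z_loss}, the sums defining $\tau^\star$, and the claim's right-hand side; with that reading your telescoping matches the paper verbatim.
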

\begin{proof}
	For $0 \le j \le k + 1$, define $z_j(x, \nu)$ by 
	\begin{align*}
	z_j(x, \nu) \triangleq g(f_{0 \ot 1}(x), \ldots, f_{j - 1 \ot 1}(x), f_{j \ot 1}(x + \nu), f_{j + 1 \ot 1}(x + \nu), \ldots, f_{k \ot 1}(x + \nu))
	\end{align*}
	Thus, $z_j(x, \nu)$ denotes $g \circ (f_{0 \ot 1} \otimes \ldots \otimes f_{k \ot 1})$ with the last $k + 1 - j$ inputs to $g$ depending on $x + \nu$ instead of $x$.  
Now we claim that by a telescoping argument (Claim~\ref{claim:telescope_product}), 
\begin{align}
\begin{split}
\tilde{z}(x) - \tilde{z}(x + \nu) = \\\sum_{0 \le j \le k} \delta_j(x, \nu) + \sum_{1 \le i \le j \le k}(z_k(j, \nu) - 1)\Delta_{j \ot i} + \sum_{0 \le j \le k, m'}(z_j(x, \nu) - 1)\Delta_{j \ot 1, m'}
\end{split} \label{eq:diff_expansion-1}
\end{align}
To see this, compute the sum in the order the following sequence of terms, which  corresponds to a traversal of $\cfunc$ in least-to-greatest order: 
{\small
\begin{align*}
\delta_k, (z_k(x, \nu) - 1)\Delta_{k \ot 1, 1}, \ldots, (z_k(x, \nu) - 1) \Delta_{k \ot 1, m'}, (z_k(x, \nu) - 1)\Delta_{k \ot 1}, \ldots, (z_k(x, \nu) - 1)\Delta_{k \ot k} \\
\vdots \\
\delta_1, (z_1(x, \nu) - 1)\Delta_{1 \ot 1, 1}, \ldots , (z_1(x, \nu) - 1)\Delta_{1 \ot 1, m'}, (z_1(x, \nu) - 1)\Delta_{1 \ot 1} \\
\delta_0, (z_0(x, \nu) - 1)\Delta_{0 \ot 1, 1}, \ldots, (z_0(x, \nu) - 1)\Delta_{0 \ot 1, m'}
\end{align*}
}
Now we simply apply triangle inequality on~\eqref{eq:diff_expansion-1} and use the fact that $z_j(x, \nu) - 1 \in [-1, 0] \ \forall 0 \le j \le k + 1$ to obtain the desired statement. 
\end{proof}
\begin{lemma}
	\label{lem:QJ_i_lipschitz}
	In the setting of Theorem~\ref{thm:lip}, fix $1 \le i \le \depth$ and define 
	\begin{align*}
	Q(J_i) \triangleq & \prod_{j \le i \le j'} \one[\le \kappa_{j' \ot j}] (\opnorm{D_{j'} \cdots D_{i + 1} J_i D_{i - 1} \cdots D_j})  \\
	&\times \prod_{j' \ge i + 1} \one[\le \kappa_{j' \ot i + 1}](\opnorm{D_{j'} \cdots D_{i + 1}})\times \prod_{j \le i - 1}\one[\le \kappa_{i - 1 \ot j}](\opnorm{D_{i - 1} \cdots D_{j}})
	\end{align*}
	Then $Q$ is $\laug_{J_i}$-Lipschitz in $J_i$, where $$\laug_{J_i} \triangleq \sum_{j \le i \le j'} \frac{4\kappa_{j' \ot i + 1} \kappa_{i - 1 \ot j}}{\kappa_{j' \ot j}}$$
	Here for convenience we use the convention that $\kappa_{i - 1 \ot i} = 1$.
\end{lemma}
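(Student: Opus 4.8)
The plan is to exploit the fact that, among the three products defining $Q$, only the first one involves the variable $J_i$ at all: the second product $\prod_{j'\ge i+1}\one[\le\kappa_{j'\ot i+1}](\opnorm{D_{j'}\cdots D_{i+1}})$ and the third product $\prod_{j\le i-1}\one[\le\kappa_{i-1\ot j}](\opnorm{D_{i-1}\cdots D_j})$ are fixed scalars in $[0,1]$ once the $D_j$ with $j\ne i$ are fixed. So I would write $Q(J_i)=C\cdot\prod_{j\le i\le j'}\one[\le\kappa_{j'\ot j}](\opnorm{D_{j'}\cdots D_{i+1}\,J_i\,D_{i-1}\cdots D_j})$ with $C\in[0,1]$ a constant not depending on $J_i$, and reduce the problem to controlling the Lipschitz constant (in $J_i$, measured in operator norm) of the first product.

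Then I would split into two cases. If any factor of $C$ is zero --- equivalently, if $\opnorm{D_{j'}\cdots D_{i+1}}\ge 2\kappa_{j'\ot i+1}$ for some $j'\ge i+1$ or $\opnorm{D_{i-1}\cdots D_j}\ge 2\kappa_{i-1\ot j}$ for some $j\le i-1$ --- then $C=0$, hence $Q\equiv 0$ and the conclusion is immediate. Otherwise we are in the regime where $\opnorm{D_{j'}\cdots D_{i+1}}\le 2\kappa_{j'\ot i+1}$ for every $j'\ge i$ and $\opnorm{D_{i-1}\cdots D_j}\le 2\kappa_{i-1\ot j}$ for every $j\le i$ (the boundary instances $j'=i$ and $j=i$ hold because the relevant product of $D$'s is then empty, so has operator norm $1$, while the convention $\kappa_{j-1\ot j}=1$ makes the right-hand side $2$). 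In this regime, for each pair $(j,j')$ with $j\le i\le j'$ I would bound the Lipschitz constant in $J_i$ of the factor $\one[\le\kappa_{j'\ot j}](\opnorm{D_{j'}\cdots D_{i+1}\,J_i\,D_{i-1}\cdots D_j})$: since $\one[\le\kappa_{j'\ot j}]$ is $\kappa_{j'\ot j}^{-1}$-Lipschitz, $A\mapsto\opnorm{A}$ is $1$-Lipschitz, and operator norm is submultiplicative (so $\opnorm{D_{j'}\cdots D_{i+1}(J_i-J_i')D_{i-1}\cdots D_j}\le\opnorm{D_{j'}\cdots D_{i+1}}\,\opnorm{J_i-J_i'}\,\opnorm{D_{i-1}\cdots D_j}$), this factor is Lipschitz with constant $\opnorm{D_{j'}\cdots D_{i+1}}\,\opnorm{D_{i-1}\cdots D_j}/\kappa_{j'\ot j}\le \frac{4\kappa_{j'\ot i+1}\kappa_{i-1\ot j}}{\kappa_{j'\ot j}}$ by the case-2 bounds. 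Finally, since $C$ and all the first-product factors take values in $[0,1]$, the standard product rule for Lipschitz constants of uniformly bounded functions (a one-line telescoping: $|fg(x)-fg(y)|\le|f(x)-f(y)|+|g(x)-g(y)|$ when $|f|,|g|\le 1$, iterated over all factors) gives that $Q$ is Lipschitz in $J_i$ with constant at most $\sum_{j\le i\le j'}\frac{4\kappa_{j'\ot i+1}\kappa_{i-1\ot j}}{\kappa_{j'\ot j}}=\laug_{J_i}$.

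The point I expect to require the most care is that the per-factor Lipschitz constants are \emph{not} uniformly bounded a priori --- they involve $\opnorm{D_{j'}\cdots D_{i+1}}$ and $\opnorm{D_{i-1}\cdots D_j}$, which can be arbitrarily large --- so the case split is doing the real work: precisely when some product of $D$'s is large, the auxiliary indicators that do not see $J_i$ already force $Q\equiv 0$, and otherwise those same indicators bound the products of $D$'s by the corresponding $2\kappa$. Beyond this, the only fiddly bookkeeping is handling the empty-product conventions at the boundary pairs $j=i$ and $j'=i$.
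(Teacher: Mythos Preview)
Your proposal is correct and follows essentially the same argument as the paper: a two-case split on whether the $J_i$-independent indicators vanish, and in the nonvanishing case bounding each $J_i$-dependent factor's Lipschitz constant by $\frac{4\kappa_{j'\ot i+1}\kappa_{i-1\ot j}}{\kappa_{j'\ot j}}$ and summing via the product rule for $[0,1]$-valued functions. If anything, your treatment of the factor of $4$ and the boundary cases $j=i$, $j'=i$ is slightly more explicit than the paper's.
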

\begin{proof}
	There are two cases: the condition $\opnorm{D_{j'} \cdots D_{i + 1}} \le 2\kappa_{j' \ot i + 1}$ and $\opnorm{D_{i - 1} \cdots D_{j}} \le 2\kappa_{i - 1 \ot j}$ for all $j' \ge i + 1$, $j \le i - 1$ either holds or does not hold. In the case that it does not hold, $Q$ is the constant function at $0$, and is certainly $\laug_{J_i}$-Lipschitz. In the case that the condition does hold, $\one[\le \kappa_{j' \ot j}] (\opnorm{D_{j'} \cdots D_{i + 1} J_i D_{i - 1} \cdots D_j})$ is $\frac{\kappa_{j' \ot i + 1} \kappa_{i - 1 \ot j}}{\kappa_{j' \ot j}}$-Lipschitz for all $j' \le i \le j$, and therefore their product is $\tilde{\kappa}_{J_i}$-Lipschitz. As the remaining indicators that do not depend on $J_i$ are constants in $[0, 1]$, it follows that $Q$ is $\tilde{\kappa}_{J_i}$-Lipschitz. 
\end{proof}

\section{Application to Recurrent Neural Networks}\label{sec:recurrent}
In this section, we will apply our techniques to recurrent neural networks. Suppose that we are in a classification setting. For simplicity, we will assume that the hidden layer and input dimensions are $d$. We will define a recurrent neural network with $r- 1$ activation layers as follows using parameters $W, U, Y$, activation $\phi$ and input sequence $x = (x^{(0)}, \ldots, x^{(r-2)})$:
\begin{align*}
	F(x) &= Y h^{(2r-2)}(x)\\
	h^{(2i)}(x) &= \phi(h^{(2i-1)}(x) + u^{(i-1)}(x))\\
	h^{(2i- 1)}(x) &= Wh^{(2i-2)}(x)\\
	u^{(i-1)}(x) &= Ux^{(i-1)}
\end{align*}
where $h^{(0)}$  is set to be 0. Now following the convention of Section~\ref{sec:neural_net_main}, we will define the interlayer Jacobians. For odd indices $2i - 1$, $i \le r-1$, we simply set $Q_{2i - 1 \ot 2i- 1}$ to the constant function $x \mapsto W$. For even indices $2i$, $i \le r-1$, we set $Q_{2i \ot 2i}(x) \triangleq D\phi[h^{2i - 1}(x)+ u^{(i - 1)}(x)]$, the Jacobian of the activation applied to the input of $h^{(2i)}(x)$. Finally, we set $Q_{2r-1 \ot 2r -1}$ to be the constant function $x \mapsto Y$. Now for $i' > i$, we set $Q_{i' \ot i}(x) = Q_{i' \ot i'}(x) \cdots Q_{i \ot i}(x)$. If $i' < i$, we set $Q_{i' \ot i}$ to the identity matrix. 

With this notation in place, we can state our generalization bound for RNN's: 
\begin{theorem} \label{thm:rnngen}
	Assume that the activation $\phi$ is 1-Lipschitz with a $\bar{\sigma}_{\phi}$-Lipschitz derivative. With probability $1 - \delta$ over the random draws of $P_n$, all RNNs $F$ will satisfy the following generalization guarantee:
	{\scriptsize
		\begin{align*}
	\Exp_{(x, y) \sim P} [l_{\textup{0-1}}(F(x), y)] \le\\ \tilde{O}\left(\frac{\left((\kapparnh{r}a_Y t^{(r - 1)})^{2/3} + \sum_{i=1}^{r -1} {\kapparnh{i}}^{2/3}((a_{W} t^{(i - 1)})^{2/3} + (a_U t^{\textup{data}})^{2/3}) + \sum_{i = 1}^{r}(\kapparnj{i}b )^{2/3}\right)^{3/2}}{\sqrt{n}}\right)&\\ + \tilde{O}\left(r\sqrt{\frac{ \log(1/\delta)}{n}}\right)&
	\end{align*}}
	where $\kappaj{i} \triangleq \sum_{1 \le j \le 2i - 1 \le j' \le 2\depthnn - 1} \frac{\sigma_{j' \ot 2i} \sigma_{2i - 2 \ot j}}{\sigma_{j' \ot j}}$, and 
	{\scriptsize \begin{align*}
		\kappah{i} \triangleq \frac{1}{\poly(r)}+\frac{\sigma_{2r - 1 \ot 2i}}{\gamma} + \sum_{i \le i' < r} \frac{\sigma_{2i' \ot 2i}}{t^{(i')}} + \sum_{1 \le j \le j' \le 2\depthnn - 1} \sum_{\substack{j'' = \max\{2i, j\},\\ j'' \textup{ even }}}^{j'} \frac{\bar{\sigma}_\phi \sigma_{j'\ot j'' + 1} \sigma_{j'' - 1 \ot 2i} \sigma_{j'' - 1 \ot j}}{\sigma_{j' \ot j}}	\end{align*}}
	In these expressions, we define $\sigma_{j - 1 \ot j} = 1$, and:
	\[a_W \triangleq \poly(r)^{-1} + \|W^\top\|_{2, 1}, a_U \triangleq \poly(r)^{-1} + \|U^\top\|_{2, 1}\]\[ a_Y \triangleq \poly(r)^{-1} + \|Y^\top\|_{2, 1}, b \triangleq \poly(r)^{-1} +\|W\|_{1, 1}\]
	\[t^{(0)} =0, t^{\textup{data}} \triangleq \max_{x  \in P_n} \max_{i}\|x^{(i)}\|, \ t^{(i)} \triangleq \poly(r)^{-1}+\max_{x \in P_n} \|h^{(2i)}(x)\|\]
	\[\sigma_{j' \ot j} \triangleq \poly(r)^{-1}+\max_{x \in P_n} \opnorm{Q_{j' \ot j}(x)}, \textup{ and } \gamma \triangleq \min_{(x, y) \in P_n} [F(x)]_y - \max_{y' \ne y} [F(x)]_{y'} > 0\] 
	Note that the training error here is $0$ because of the existence of positive margin $\gamma$. 
\end{theorem}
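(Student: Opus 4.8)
\textbf{Proof proposal for Theorem~\ref{thm:rnngen}.}
The plan is to mirror the feedforward argument (Theorems~\ref{thm:nnrad} and~\ref{thm:gen_no_union_bound}) inside the computational-graph framework of Sections~\ref{sec:computational_graph} and~\ref{sec:lipschitzaug}. We model the unrolled RNN of depth $2r-1$ as a sequential computational graph whose single input node $I$ carries the entire sequence $x = (x^{(0)},\dots,x^{(r-2)})$: the odd internal nodes $V_{2i-1}$ compute $h\mapsto Wh$ for $i\le r-1$ and $h\mapsto Yh$ for $i=r$, the even internal nodes $V_{2i}$ compute $(h,x)\mapsto \phi(h + Ux^{(i-1)})$ reading both the previous hidden layer and the input, the boundary condition is $h^{(0)}=0$ so $t^{(0)}=0$, and the output node applies the ramp loss $l_\gamma$. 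The wrinkle relative to feedforward nets is that fresh inputs are injected at every even layer rather than only at layer $1$, and that $W$ and $U$ are shared across timesteps; the auxiliary-matrix extension of Lemma~\ref{lem:lip} (the operators $A_{m'}$ and the products $J_{j\ot 1,m'} = (Df_{j\ot 1})A_{m'}$) is precisely what lets the Lipschitz-augmentation bookkeeping absorb the per-timestep linear injections $Ux^{(i-1)}$.

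First I would instantiate the Lipschitz augmentation of Definition~\ref{def:aug}/Theorem~\ref{thm:sequential_covering} on this graph family, conditioning the loss on the data-dependent quantities $\|h^{(2i)}(x)\|\le t^{(i)}$ for the hidden layers and $\opnorm{Q_{j'\ot j}(x)}\le \sigma_{j'\ot j}$ for all interlayer Jacobians, with the conventions that $\sigma_{2i\ot 2i}=1$ (no truncation, since $\phi$ is $1$-Lipschitz) and $\sigma_{2i-1\ot 2i-1}$, $\sigma_{2r-1\ot 2r-1}$ are the operator norms of $W$, $Y$. Applying Theorem~\ref{thm:lip} with $\ulip_{2i}=\bar\sigma_\phi$, $\ulip_{2i-1}=0$, and output-node Lipschitz constants $c_{2i}=1/t^{(i)}$, $c_{2r-1}=1/\gamma$, the augmented family $\augG$ is release-Lipschitz with parameters $\laug_{V_{2i-1}}=\kapparnh{i}$ and $\laug_{J_{2i-1}}=\kapparnj{i}$ given by the same polynomial expressions in the $\sigma$'s as~\eqref{eq:kappa_h} and~\eqref{eq:kappa_j}. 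The augmented loss dominates $l_{\textup{0-1}}\circ F$ pointwise, agrees with it on the training set by the choice of $t^{(i)},\sigma_{j'\ot j}$, and collapses to a constant when any monitored norm is exceeded, so the hypotheses of Theorem~\ref{thm:covering_graph} hold.

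Next I would plug in covering numbers for the individual composition rules and finish exactly as in the proof of Theorem~\ref{thm:nnrad}. The shared matrix layers $h\mapsto Wh$ are covered once and reused at every timestep via Lemma~\ref{lem:two_one_norm}, contributing $\tilde O((a_W t^{(i-1)})^2/\epsilon^2)$ at layer $2i-1$; the output matrix $Y$ contributes $\tilde O((a_Y t^{(r-1)})^2/\epsilon^2)$; the injection rules $(h,x)\mapsto\phi(h+Ux^{(i-1)})$ and their Jacobians reduce, using $1$-Lipschitzness of $\phi$ and $\bar\sigma_\phi$-Lipschitzness of $\phi'$, to covering $\{x\mapsto Ux\}$ on inputs of norm $\le t^{\textup{data}}$, contributing $\tilde O((a_U t^{\textup{data}})^2/\epsilon^2)$; the Jacobian nodes $J_{2i-1}$ require covering the constant matrix $W$ (resp.\ $Y$) in $\|\cdot\|_{1,1}$, contributing $\tilde O(b^2/\epsilon^2)$. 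Balancing the resolutions $\epsilon_V$ as in Theorem~\ref{thm:nnrad} gives $\log\cover(\epsilon,O_{\augG},s_\cI)=\tilde O(\epsilon^{-2}(\beta^\star)^2)$ with $\beta^\star$ the $3/2$-power sum in the theorem statement; Dudley's entropy integral yields $\rad(\augG)=\tilde O(\beta^\star/\sqrt n)$, and the standard Rademacher generalization bound gives the claim for fixed parameters. Finally I would union-bound over a geometric grid of the $O(r^2)$ parameters $\sigma_{j'\ot j}$, the $t^{(i)}$, the weight norms $a_W,a_U,a_Y,b$, and the margin $\gamma$, following the discretization argument in the proof of Theorems~\ref{thm:gen_union_bound} and~\ref{thm:geninformal}; this replaces the fixed parameters by their empirical values (up to constants and a $\poly(r)^{-1}$ slack) at the additive cost $\tilde O(r\sqrt{\log(1/\delta)/n})$.

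The main obstacle I anticipate is the first step: cleanly fitting the RNN into the sequential-graph model, which as stated in Section~\ref{sec:lipschitzaug} has the input touching only $V_1$, whereas an RNN feeds fresh input at every even layer. One must either lightly generalize that model or verify directly that the auxiliary-matrix terms of Lemma~\ref{lem:lip} account for every path from an injection $Ux^{(i-1)}$ through the later layers to the output --- this is what controls the Lipschitzness of the augmented loss in the hidden nodes and produces the $\kapparnh{i}$ formulas. A secondary point of care is weight sharing: treating each copy of $W$ as an independent matrix class is harmless since log covering numbers add, but one should cover $W$ once and check that the release-Lipschitz ordering and the predecessor norm bounds $s_{\pr(V)}$ are consistent across all timesteps.
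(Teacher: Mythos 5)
Your proposal follows the right high-level strategy (model the unrolled RNN as a computational graph, apply the Lipschitz augmentation, cover the composition rules layer by layer, union-bound over a grid), and you correctly identify the central obstacle: the RNN is not a \emph{sequential} computational graph because fresh inputs $Ux^{(i-1)}$ enter at every even layer. However, the fix you lead with is not the right one, and would not go through.

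You claim that the auxiliary-matrix terms $A_{m'}$ and the products $J_{j\ot 1,m'}=(Df_{j\ot 1})A_{m'}$ in Lemma~\ref{lem:lip} are ``precisely'' what absorbs the per-timestep injections. That is a misreading of their role. In Lemma~\ref{lem:lip}, the $A_{m'}$ are operators that map \emph{into} the base space $\cD_0$ and are \emph{pre}-composed with the whole chain; they were introduced to account for the Jacobian nodes $J_1,\dots,J_{i-1}$ that have already been released as free inputs when one traverses the forest ordering in the proof of Theorem~\ref{thm:lip}. They sit at the \emph{bottom} of the chain of Jacobians, not in the middle. A per-timestep input $Ux^{(i-1)}$ enters the computation at layer $2i$, and all paths from it to the output pass through $J_{2r-1}\cdots J_{2i}$; there is no way to rewrite that as $(Df_{j\ot 1})A_{m'}$ for an operator $A_{m'}$ at the base. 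So option (b) of your last paragraph is a dead end, and the claim in your opening paragraph overstates what the auxiliary matrices do.

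The paper instead takes your option (a): it works with the general computational-graph framework of Theorem~\ref{thm:covering_graph} directly (not the sequential Theorem~\ref{thm:sequential_covering}), introducing explicit injection nodes $K_0,\dots,K_{r-2}$ computing $x^{(i)}\mapsto Ux^{(i)}$ and input nodes $I_0,\dots,I_{r-2}$, with $H_{2i}$ computing the singleton rule $(h,k)\mapsto\phi(h+k)$. To establish release-Lipschitzness, it releases $K_0,\dots,K_{r-2}$ \emph{first}: once those are fixed inputs, the remaining graph on $H_0,\dots,H_{2r-1}$ is genuinely sequential, so Theorem~\ref{thm:lip} applies and yields $\kapparnh{i}$ and $\kapparnj{i}$ for the $H_{2i-1}$ and $J_{2i-1}$ nodes; since this holds uniformly over the fixed values of the $K_i$'s, it holds for the unreleased family. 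The constants for the $K_{i-1}$ nodes are then obtained by a symmetry observation: because the rule at $H_{2i}$ is $(h,k)\mapsto\phi(h+k)$, perturbing $K_{i-1}$ has exactly the same effect as perturbing $H_{2i-1}$, so the Lipschitz constant in $K_{i-1}$ equals $\kapparnh{i}$. Your contribution tally for the covering numbers ($a_W t^{(i-1)}$, $a_Y t^{(r-1)}$, $a_U t^{\textup{data}}$, $b$) matches what this structure gives; note that for the $K_i$ node the relevant predecessor norm bound is $t^{\textup{data}}$, not $t^{(i-1)}$, which is why $a_U$ pairs with $t^{\textup{data}}$ in the final expression.

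One more minor point on weight sharing: you suggest one ``should cover $W$ once and check consistency.'' The paper actually does the opposite and simpler thing, explicitly allowing the composition rules $R_{H_{2i-1}}$ at different timesteps to use \emph{different} matrices, noting that the resulting graph family is a strict superset of $\cL_{\textup{rnn-aug}}$. Covering a superset only helps, and it avoids having to thread a shared cover through the graph. Your instinct toward sharing the cover is a legitimate potential sharpening, but it is not needed for the stated bound.
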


Our proof follows the template of Theorem~\ref{thm:gen_union_bound}: we bound the Rademacher complexity of some augmented RNN loss. We then argue for generalization of the augmented loss and perform a union bound over all the choices of parameters. As the latter steps are identical to those in the proof of Theorem~\ref{thm:gen_union_bound}, we omit these and focus on bounding the Rademacher complexity of an augmented RNN loss. 

\begin{theorem}
	\label{thm:rnn_rad}
	Suppose that $\phi$ is $1$-Lipschitz with $\bar{\sigma}_{\phi}$-Lipschitz derivative. Define the following class of RNNs with bounded weight matrices:
	{\scriptsize
	\begin{align*}
	\cF \triangleq \left\{x \mapsto F(x) : \| W^\top\|_{2,1} \le a_Y, \|U^\top\|_{2,1}\le a_U, \|Y^\top\|_{2,1} \le a_Y, \|W\|_{1,1} \le b, \opnorm{W} \le \sigma_W, \opnorm{Y} \le \sigma_Y \right\}
	\end{align*}}
	and let $\sigma_{j' \ot j}$ be parameters that will bound the $j$ to $j'$ layerwise Jacobian for $j' \ge j$, where we set $\sigma_{2i \ot 2i} = 1$ and $\sigma_{2i - 1 \ot 2i - 1} = \sigma_W$ for $i \le r -1$, $\sigma_{2r - 1 \ot 2r - 1} = \sigma_Y$. Let $t^{(i)}$ be parameters bounding the layer norm after applying the $i$-th activation, and let $t^{(0)} = 0, t^{\textup{data}} =\max_{x \in P_n} \max_i \|x^{(i)}\|$. Define the class of augmented losses 
	\begin{align*}
	\cL_{\textup{rnn-aug}} \triangleq \left\{(l_{\gamma} - 1)\circ F \prod_{i = 1}^{\depthnn - 1} \one[\le t^{(i)}](\|h^{(2i)}\|) \prod_{1 \le j < j' \le 2\depthnn - 1} \one[\le \sigma_{j' \ot j}] (\opnorm{Q_{j' \ot j}}) + 1: F \in \cF\right\}
	\end{align*}
	and define for $1 \le i \le \depthnn$, $\kappaj{i}, \kappah{i}$ meant to bound the influence of the matrix $W^{(i)}$ on the Jacobians and hidden variables, respectively as in~\eqref{eq:kappa_j},~\eqref{eq:kappa_h}.
	Then we can bound the empirical Rademacher complexity of the augmented loss class by
	{\scriptsize
	\begin{align*}
	\rad(\cL_{\textup{rnn-aug}}) =\\ \tilde{O}\left(\frac{\left((\kapparnh{r}a_Y t^{(r - 1)})^{2/3} + \sum_{i=1}^{r -1} {\kapparnh{i}}^{2/3}((a_{W} t^{(i - 1)})^{2/3} + (a_U t^{\textup{data}})^{2/3}) + \sum_{i = 1}^{r}(\kapparnj{i}b )^{2/3}\right)^{3/2}}{\sqrt{n}}\right)
	\end{align*}}
	where $\kapparnh{i}, \kapparnj{i}$ are defined in Theorem~\ref{thm:rnngen}.
\end{theorem}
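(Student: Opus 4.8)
The plan is to mirror the proof of the feed-forward bound (Theorem~\ref{thm:nnrad}): realize the RNN loss as a sequential computational graph, apply the Lipschitz augmentation and covering machinery of Theorem~\ref{thm:sequential_covering} to reduce the covering number of the augmented loss family to covering numbers of the single-layer function classes and their derivatives, choose the per-node cover resolutions to optimize the resulting sum exactly as in Theorem~\ref{thm:nnrad}, and conclude via Dudley's entropy theorem. The only structural differences from the feed-forward case are the additive input injections $u^{(i-1)}=Ux^{(i-1)}$ and the sharing of the matrix $W$ (and $U$) across the $r-1$ recurrent steps, and the argument is arranged so that neither changes the shape of the bound.

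Concretely, I would model the RNN as a depth-$(2r-1)$ sequential computational graph in which the single input node carries the whole sequence $(x^{(0)},\dots,x^{(r-2)})$ and each internal node threads forward the not-yet-consumed inputs alongside the current hidden state: the odd nodes $V_{2i-1}$ apply $h\mapsto Wh$ for $1\le i\le r-1$ (and $h\mapsto Yh$ at $i=r$) while passing the inputs along, the even nodes $V_{2i}$ compute $\phi(\,\cdot\,+Ux^{(i-1)})$ and drop $x^{(i-1)}$, and the output node applies the ramp loss. Because the injections are additive, they contribute nothing to the hidden-to-hidden Jacobians, so the layerwise Jacobians $Q_{j'\ot j}$ of Section~\ref{sec:recurrent} compose exactly as $Q_{j'\ot j'}\cdots Q_{j\ot j}$, identically to a plain feedforward net; the carried $x^{(i-1)}$-coordinates only pass through the identity and contribute bounded $\|U\|\,t^{\textup{data}}$ shifts to node values. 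Consequently, plugging the analogues of the feed-forward constants ($\ulip_{2i}=\bar\sigma_\phi$, $\ulip_{2i-1}=0$, $c_{2i}=1/t^{(i)}$ for $i<r$, $c_{2r-1}=1/\gamma$, $\kappa_{j'\ot j}=\sigma_{j'\ot j}$, $s_{V_{2i}}=t^{(i)}$) into Theorem~\ref{thm:lip} yields release-Lipschitz constants $\laug_{V_{2i-1}}=\kapparnh{i}$ and $\laug_{J_{2i-1}}=\kapparnj{i}$ whose formulas coincide with~\eqref{eq:kappa_j} and~\eqref{eq:kappa_h}; verifying this amounts to checking that the carried-input coordinates contribute nothing to the relevant Jacobian products, which holds because they pass through the identity.

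Applying Theorem~\ref{thm:sequential_covering} then bounds $\log\cover(\,\cdot\,,O_{\augG},s_{\cI})$ by a sum of $\log\cover$ terms over the nodes. The $\phi$-nodes' function and derivative classes reduce to covering only the matrices involved: via Lemma~\ref{lem:two_one_norm} for the $\|\cdot\|_{2,1}$-balls of $W$, $U$, $Y$ and the standard $O(b^2\epsilon^{-2}\log d)$ bound for the $\ell_1$-ball $\{W:\|W\|_{1,1}\le b\}$ appearing in the derivative classes. Each of the $r-1$ occurrences of $W$ contributes a term weighted by $\kapparnh{i}a_W t^{(i-1)}$ on the function side and $\kapparnj{i}b$ on the derivative side; each occurrence of $U$, which always acts on raw inputs of norm at most $t^{\textup{data}}$, contributes $\kapparnh{i}a_U t^{\textup{data}}$; and $Y$ contributes $\kapparnh{r}a_Y t^{(r-1)}$. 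Choosing $\epsilon_V$ proportional to the appropriate powers as in the proof of Theorem~\ref{thm:nnrad} collapses this to $\log\cover(\epsilon,O_{\augG},s_{\cI})=\tilde O(\epsilon^{-2}(\beta^\star)^2)$ with $\beta^\star$ the displayed sum, and Dudley's entropy theorem then delivers the stated Rademacher bound.

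The main obstacle is the weight sharing: since $W$ occurs at $r-1$ nodes, one might fear either a blow-up or that the sequential framework cannot express the equality constraint among those layers. The resolution is that the node-by-node covering bound of Theorem~\ref{thm:sequential_covering} remains a valid upper bound even when the composition rules are not independent, and here it loses nothing essential, because the natural bound already sums layer-by-layer with layer-specific weights $\kapparnh{i},\kapparnj{i}$ and input norms $t^{(i-1)}$ — so treating the $r-1$ copies of $W$ (and of $U$) as independent produces precisely the sums in the theorem statement. The remaining work is bookkeeping: threading the unused inputs through the node states to keep the graph sequential, and checking that this extra carried data neither inflates the release-Lipschitz constants nor introduces new covering-number inputs (the $U$-factors are covered against the fixed norm $t^{\textup{data}}$, which is why $U$ enters only through an $a_U$-type term and not a $\|U\|_{1,1}$-type term).
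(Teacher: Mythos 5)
Your plan diverges from the paper in the key structural choice. The paper does \emph{not} force the graph to be sequential by threading unused inputs through the node values; instead it introduces genuinely separate nodes $K_0,\dots,K_{r-2}$ (computing $U x^{(i)}$ from separate input nodes $I_i$) that branch into $H_{2i}$ and $J_{2i}$, so that the graph is a DAG. The Lipschitz constant of the output w.r.t.\ $K_{i-1}$ is then shown to \emph{equal} that of $H_{2i-1}$ by a symmetry argument: since $\fR_{H_{2i}}$ is $(h,k)\mapsto \phi(h+k)$, perturbing $K_{i-1}$ has exactly the same downstream effect as perturbing $H_{2i-1}$. That is what makes the $U$-covering term come out as $\kapparnh{i} a_U t^{\textup{data}}$, i.e.\ with the \emph{same} Lipschitz weight as the $W$-term.

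Your ``carry the unused inputs through each node'' construction does not obviously reproduce this, and you have not verified the claim that the carried coordinates ``neither inflate the release-Lipschitz constants nor introduce new covering-number inputs.'' Two concrete problems. First, Definition~\ref{def:lip} measures Lipschitzness in the \emph{entire} released argument $V_i$. If $V_{2i-1}$ carries $(h^{(2i-1)},x^{(i-1)},\dots,x^{(r-2)})$, then perturbing the $x^{(j)}$-coordinates is a legitimate perturbation of $V_{2i-1}$, and those perturbations propagate into the network through $U$ at later layers; the resulting Lipschitz constant picks up a factor of $\opnorm{U}$ (and the relevant downstream $\sigma$-products) and is \emph{not} $\kapparnh{i}$. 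The fact that the covering error for your composition rule happens to be zero in those directions does not rescue you within the framework as written: Lemma~\ref{lem:composition-shared} and Theorem~\ref{thm:covering_graph} use the full (direction-independent) Lipschitz constant, so you would need a direction-dependent variant of the lemma, which you have not supplied. Second, you place the $U$-dependence in the composition rule at the even node $V_{2i}$, so the covering error for $U$ is weighted by $\laug_{V_{2i}}$, not by $\laug_{V_{2i-1}}=\kapparnh{i}$; these two constants come from different depths in the release ordering and are not equal in general, so the $U$-term does not come out as $\kapparnh{i}a_U t^{\textup{data}}$ as required. (Relatedly, the node norms $s_{V_i}$ in Theorem~\ref{thm:sequential_covering} would now have to bound the tuple norm including $t^{\textup{data}}$, which can further loosen the $W$-covering term if $t^{\textup{data}} \gg t^{(i-1)}$.) Your observation about weight sharing is fine and matches the paper --- the paper likewise covers a strict superset in which the $r-1$ copies of $W$ are decoupled --- but the way you have laid out the graph does not let the existing machinery deliver the claimed constants without new lemmas.
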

\begin{proof}
	We will associate the family of losses $\cL_{\textup{rnn-aug}}$ with a computational graph structure on internal nodes $H_1, H_2, \ldots, H_{2r - 1}$, $J_1, \ldots, J_{2r - 1}$, $K_0,\ldots,K_{r-2}$, input nodes $H_0, I_0, \ldots, I_{r- 2}$, and output node $O$ with the following edges: 
	\begin{enumerate}
		\item Nodes $H_i, J_i$ will point towards the output $O$.
		\item Node $H_i$ will point towards nodes $H_{i + 1}$ and $J_{i + 1}$. 
		\item Node $K_{i-1}$ will point towards node $H_{2i}$ and node $J_{2i}$. 
		\item Node $I_{i}$ will point towards node $K_i$. 
	\end{enumerate}
	We now define the composition rules at each node: 
	\begin{align*}
		\fR_{H_{2i}}&= \{(h, k) \mapsto \phi(h + k)\}\\
		\fR_{H_{2i - 1}} &= \{h\mapsto Wh : \|W^\top \|_{2, 1} \le a_W, \opnorm{W} \le \sigma\} \textup{ for } 2\le i \le r - 1\\
		\fR_{H_{2r - 1}} &= \{h \mapsto Yh : \|Y^\top\|_{2, 1} \le a_Y, \opnorm{Y} \le \sigma_Y\}\\
		\fR_{J_{2i}} &= \{(h, k) \mapsto D\phi[h + k]\}\\
		\fR_{K_i} &= \{x \mapsto Ux : \|U^\top \|_{2, 1} \le a_U\}
	\end{align*}
	Finally, nodes $J_{2i-1}$ will have composition rule $R_{J_{2i-1}} = DR_{H_{2i-1}}$. Finally, the output node $O$ will have composition rule 
	\begin{align*}
		R_O(x, h_1, \ldots, h_{2r -1}, D_1, \ldots, D_{2r - 1}) \triangleq \\(l_{\gamma}(h_{2r - 1}) - 1) \prod_{i = 1}^{r- 1} \one[\le t^{(i)}] (\|h_{2i}\|) \prod_{1 \le j < j' \le 2r - 1} \one[\le \sigma_{j' \ot j}](\opnorm{D_{j'} \cdots D_j}) + 1
	\end{align*}
	Note that the family of functions computed by this computation graph family is a strict superset of $\cL_{\textup{rnn-aug}}$ (as we technically allow $R_{H_{2i - 1}}$, $R_{H_{2i' -1}}$ to use different matrices $W$). We will refer to this resulting family as $\tilde{\cG}$.
	
	First, we claim that $\tilde{G}$ satisfies the release-Lipschitz condition, with Lipschitz constants $\kapparnh{i}$ for nodes $H_{2i -1}$ and $K_{i - 1}$, and $\kapparnj{i}$ for nodes $J_{2i - 1}$. (As we will see later, the Lipschitzness of nodes $V_{2i}$, $J_{2i}$ will not matter because the composition rules are function classes with log covering number 0.)
	
	To see this, we note that if we release $K_0, \ldots, K_{r - 2}$ from the graph and set them to fixed values, the resulting induced graph family is simply the Lipschitz augmentation of Section~\ref{sec:lipschitzaug} for the sequential graph family on nodes $H_{0}, \ldots, H_{2r-1}$ and an un-augmented output. Thus, the machinery of Theorem~\ref{thm:lip}~applies here, and we can conclude that this reduced graph family is $\kapparnh{i}$-release-Lispchitz for nodes $H_{2i -1}$ and $\kapparnj{i}$-release-Lipschitz for nodes $J_{2i - 1}$. Since this holds for any choice of $K_0, \ldots, K_{r - 2}$, we can draw the same conclusion about $\tilde{\cG}$, the augmented family that is not reduced. However, by nature of the composition rules in $\tilde{\cG}$, the Lipschitzness of $H_{2i -1}$ and $K_{i - 1}$ must be identical (as $f(x + y)$ must have the same worst-case Lipschitz constant in $x$ and $y$ for any function $f$). Thus, we get that $\tilde{G}$ satisfies release-Lipschitzness with constants $\kapparnh{i}$ for nodes $H_{2i - 1}$, $K_{i - 1}$, and $\kapparnj{i}$ for nodes $J_{2i - 1}$. 
	
	With this condition established, we can complete the proof via the same covering number argument as in Theorem~\ref{thm:nnrad}.
\end{proof}

Now as in the proof of Theorem~\ref{thm:gen_union_bound}, we first observe that the augmented loss upper bounds the 0-1 classification loss, giving us a 0-1 test error bound. We then apply the same union bound technique over parameters $\gamma, t^{(i)}, \sigma_{j' \ot j}, a_W, a_U, a_Y$, as in the proof of Theorem~\ref{thm:gen_union_bound}.

\section{ReLU Networks}\label{app:relu}

In this section, we apply our augmentation technique to relu networks to produce a generalization bound similar to that of \cite{nagarajan2018deterministic}, which is polynomial in the Jacobian norms, hidden layer norms, and inverse pre-activations. 

Recall the definition of neural nets in Example~\ref{ex:neuralnet}: the neural net with parameters $\{W^{(i)}\}$ and activation $\phi$ is defined by 
\begin{align*}
	F(x) = W^{(r)} \phi(\cdots \phi(W^{(1)}x) \cdots)
\end{align*}
For this section, we will set $\phi$ to be the relu activation. We also use the same notation for layers and indexing as Section~\ref{sec:neural_net_main}. We first state our generalization bound for relu networks: 

\begin{theorem}\label{thm:relunet}
	Fix reference matrices $\{A^{(i)}\}, \{B^{(i)}\}$. With probability $1 - \delta$ over the random draws of the data $P_n$, all neural networks $F$ with relu activations parameterized by $\{W^{(i)}\}$ will have the following generalization guarantee
	{\scriptsize
			\begin{align*}
			\Exp_{(x, y) \sim P} [l_{\textup{0-1}}(F(x), y)] \le \tilde{O}\left(\frac{\left(\sum_{i} (\kapparh{i}a^{(i)} t^{(i - 1)})^{2/3} + (\kapparj{i}b^{(i)})^{2/3}\right)^{3/2}}{\sqrt{n}} + r\sqrt{\frac{ \log(1/\delta)}{n}}\right)
			\end{align*}}
		 where 
 \begin{align}
 \begin{split}
	\kapparj{i} &\triangleq \sum_{1 \le j \le 2i - 1 \le j' \le 2\depthnn - 1} \frac{\sigma_{j' \ot 2i} \sigma_{2i - 2 \ot j}}{\sigma_{j' \ot j}}\\
\kapparh{i} &\triangleq \frac{1}{\poly(r)}+\frac{\sigma_{2r - 1 \ot 2i}}{\gamma} + \sum_{i \le i' < r} \frac{\sigma_{2i' \ot 2i}}{t^{(i')}} + \frac{\sigma_{2i' - 1\ot 2i}}{\gamma^{(i')}} \label{eq:kapparelu}
\end{split}
\end{align}
In these expressions, we define $\sigma_{j - 1 \ot j} = 1$, $\gamma^{(i)}$ to be the minimum pre-activation after the $i$-th weight matrix over all coordinates in the $i$-th layer and all datapoints: 
\begin{align*}
	\gamma^{(i)} \triangleq \min_{x \in P_n} \min_j |[F_{2i - 1\ot 1}(x)]_j|
\end{align*} where $[F_{2i - 1 \ot 1}(x)]_j$ indexes the $j$-th coordinate of $F_{2i - 1 \ot 1}(x)$, and additionally use
\[a^{(i)} \triangleq \poly(r)^{-1} + \|{W^{(i)}}^\top - {A^{(i)}}^\top\|_{2, 1}, b^{(i)} \triangleq \poly(r)^{-1} + \|{W^{(i)}} - {B^{(i)}}\|_{1, 1}\]
\[t^{(0)} \triangleq \poly(r)^{-1}+\max_{x  \in P_n} \|x\|, \ t^{(i)} \triangleq \poly(r)^{-1}+\max_{x \in P_n} \|F_{2i \ot 1}(x)\|\]
\[\sigma_{j' \ot j} \triangleq \poly(r)^{-1}+\max_{x \in P_n} \opnorm{Q_{j' \ot j}(x)}, \textup{ and } \gamma \triangleq \min_{(x, y) \in P_n} [F(x)]_y - \max_{y' \ne y} [F(x)]_{y'} > 0\] 
Note that we assume the existence of a positive margin, so the training error here is $0$. 
	
\end{theorem}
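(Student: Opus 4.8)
The plan is to follow the template of the proof of Theorem~\ref{thm:gen_union_bound}, modifying only the one place where smoothness of the activation was used. First I would model the relu-network loss as a sequential computational graph exactly as in the proof of Theorem~\ref{thm:nnrad}: a depth-$(2\depthnn-1)$ graph whose odd internal nodes $V_{2i-1}$ compute $h\mapsto W^{(i)}h$, with composition-rule class constrained by $\|{W^{(i)}}^\top-{A^{(i)}}^\top\|_{2,1}\le a^{(i)}$, $\|W^{(i)}-B^{(i)}\|_{1,1}\le b^{(i)}$, $\opnorm{W^{(i)}}\le\sigma^{(i)}$, and whose even internal nodes $V_{2i}$ compute $\phi=\mathrm{relu}$. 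Then I would augment the output rule with three families of soft indicators: $\prod_i\one[\le t^{(i)}](\|F_{2i\ot1}\|)$ on the hidden-layer norms, $\prod_{j<j'}\one[\le\sigma_{j'\ot j}](\opnorm{Q_{j'\ot j}})$ on the interlayer Jacobian norms, and---using a one-sided soft indicator that is $1/\gamma^{(i)}$-Lipschitz---a family conditioning on $\min_j|[F_{2i-1\ot1}(x)]_j|\ge\gamma^{(i)}$ for $i=1,\dots,r-1$. The whole point of the third family is that $D\phi$ is \emph{locally constant} away from the set where some coordinate vanishes, so wherever $D\phi$ could change, the augmented loss has already collapsed to the constant $1$; on the surviving region $D\phi$ then plays the role of a $0$-Lipschitz derivative, i.e. $\bar\sigma_\phi$ can be taken to be $0$.

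Next I would establish release-Lipschitzness by re-running the telescoping argument behind Lemma~\ref{lem:lip} and Theorem~\ref{thm:lip}. Restricting to the region where the indicators are active, every difference term in the expansion of Claims~\ref{claim:jacobian_diff}--\ref{claim:diff_expansion} that carried a factor of $\bar\sigma_\phi$ (through the Lipschitz constant of $J_{2i\ot2i}=D\phi\circ F_{2i-1\ot1}$) is identically zero, so the double sum in the definition of $\kappah{i}$ disappears; since the augmented loss is constant off that region, the overall bound still holds. The new preactivation indicators instead raise the Lipschitz constant of the output rule with respect to node $V_{2i'-1}$ from $0$ to $1/\gamma^{(i')}$, which propagates through $\laug_{V_{2i-1}}$ to the terms $\sigma_{2i'-1\ot2i}/\gamma^{(i')}$; combined with the $\sigma_{2r-1\ot2i}/\gamma$ term from the margin loss and the $\sigma_{2i'\ot2i}/t^{(i')}$ terms from the hidden-norm indicators this yields exactly the $\kapparh{i}$ of~\eqref{eq:kapparelu}, while the Jacobian-node constants are unchanged and give $\kapparj{i}$. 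Crucially, $D\phi$ contributes nothing to the covering cost because $\fR_{V_{2i}}=\{\phi\}$ and $D\fR_{V_{2i}}=\{D\phi\}$ are singleton function classes (their discontinuity is irrelevant to covering a one-element set).

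The remaining steps would be a routine re-run of the proofs of Theorems~\ref{thm:nnrad}, \ref{thm:gen_no_union_bound}, and~\ref{thm:gen_union_bound}: cover $\fR_{V_{2i-1}}$ with cost $\tilde{O}((a^{(i)}t^{(i-1)})^2/\epsilon^2)$ via Lemma~\ref{lem:two_one_norm} and cover $D\fR_{V_{2i-1}}$, an $\ell_1$-ball of radius $b^{(i)}$, with cost $\tilde{O}((b^{(i)})^2/\epsilon^2)$ in $\ell_2$; optimize the node-wise resolutions to get $\log\cover(\epsilon,O_{\augG},s_{\cI})\le\tilde{O}(\epsilon^{-2}(\beta^\star)^2)$ with $\beta^\star=\big(\sum_i(\kapparh{i}a^{(i)}t^{(i-1)})^{2/3}+(\kapparj{i}b^{(i)})^{2/3}\big)^{3/2}$; apply Dudley's entropy theorem to get a Rademacher bound of $\tilde{O}(\beta^\star/\sqrt n)$; pass to a test-error bound for fixed parameters via the standard Rademacher bound (using that the augmented loss upper-bounds the $0$-$1$ loss and equals the training loss under the stated data-dependent conditions); and finally union-bound over a geometric grid of values of $t^{(i)},\sigma_{j'\ot j},a^{(i)},b^{(i)},\gamma$ and now also $\gamma^{(i)}$, paying an extra $\tilde{O}(r\sqrt{\log(1/\delta)/n})$.

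The step I expect to be the main obstacle is justifying that the Lipschitz-augmentation machinery of Section~\ref{app:technical_aug}, which is stated under the assumption that $DR_{V_i}$ is globally $\ulip_i$-Lipschitz, still goes through when $DR_{V_{2i}}=D\phi$ is merely piecewise constant: one must verify that on the segment between any two nearby points either $D\phi$ does not change there (so the $\bar\sigma_\phi$-terms vanish) or the segment passes through a point where some preactivation coordinate is $0$ and hence the augmented loss is locally the constant $1$, and one must confirm that the one-sided soft indicator on the preactivations can indeed be defined to be $1/\gamma^{(i)}$-Lipschitz (the lower-threshold analogue of $\one[\le\kappa]$). Once this is settled, the remaining steps are mechanical adaptations of the smooth-activation proofs.
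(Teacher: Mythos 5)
Your proposal is correct and takes essentially the same approach as the paper's proof sketch in Section~\ref{app:relu}: augment the loss with one-sided soft indicators $\one[\ge\gamma^{(i)}]$ on the preactivations so that on the surviving region the relu derivative is locally constant (so the $\bar\sigma_\phi$-terms from Lemma~\ref{lem:lip} vanish), while the preactivation indicators contribute the new $\sigma_{2i'-1\ot 2i}/\gamma^{(i')}$ terms in $\kapparh{i}$, and then rerun the covering, Dudley, and union-bound steps of Theorems~\ref{thm:nnrad}, \ref{thm:gen_no_union_bound}, and~\ref{thm:gen_union_bound}. The obstacle you flag—that $D\phi$ is merely piecewise constant rather than globally Lipschitz—is resolved exactly as you describe, and the paper likewise only sketches this step rather than redoing Lemma~\ref{lem:lip} in full.
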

We note that compared to Theorem~\ref{thm:gen_union_bound}, $\kapparj{i} = \kappaj{i}$, but $\kapparh{i}$ now has a dependence on the preactivations $\gamma^{(i)}$, as in~\citet{nagarajan2018deterministic}.

We provide a proof sketch of Theorem~\ref{thm:relunet} here. We first bound the Rademacher complexity some family of augmented losses, specified precisely in Theorem~\ref{thm:relunnrad}. The rest of the argument then follows the same way as the proof of Theorem~\ref{thm:gen_union_bound}: using Rademacher complexity to argue that the augmented losses generalize, applying the fact that the augmented losses upper-bound the 0-1 loss, and then union bounding over all choices of parameters.

\begin{theorem} \label{thm:relunnrad}
	Following the definitions in Theorem~\ref{thm:nnrad}, let $\mathcal{F}$ denote the class of neural networks, $\sigma_{j' \ot j}$ be parameters intended to bound the spectral norm of the $j$ to $j'$ layerwise Jacobian, and $t^{(i)}$ be parameters bounding the layer norm after applying the $i$-th activation. Define $\gamma^{(i)}$ as parameters intended to lower bound the minimum preactivations after the $i$-th linear layer. Define the class of augmented losses
{\scriptsize
	\begin{align*}
	\cL_{\textup{relu-aug}} \triangleq \left\{(l_{\gamma} - 1)\circ F \prod_{i = 1}^{\depthnn - 1} \one[\le t^{(i)}](\|F_{2i \ot 1}\|) \one[\ge \gamma^{(i)}](\min_j |[F_{2i - 1 \ot 1}]_j|)\prod_{1 \le j < j' \le 2\depthnn - 1} \one[\le \sigma_{j' \ot j}] (\opnorm{Q_{j' \ot j}}) + 1: F \in \cF\right\}
	\end{align*}}
	where $\one[\ge \gamma^{(i)}] \triangleq 1 - \one[\le \gamma^{(i)}/2]$. Define for $1 \le i \le r$, $\kapparj{i}, \kapparh{i}$ meant to bound the influence of the matrix $W^{(i)}$ on the Jacobians and hidden variables, respectively, as in~\eqref{eq:kapparelu}. Then the augmented loss class $\cL_{\textup{relu-aug}}$ has empirical Rademacher complexity upper bound 
	\begin{align*}
		\rad(\cL_{\textup{relu-aug}}) = \tilde{O}\left(\frac{\left(\sum_i (\kapparh{i} a^{(i)} t^{(i- 1)})^{2/3} + (\kapparj{i}b^{(i)})^{2/3}\right)^{3/2}}{\sqrt{n}}\right) 
	\end{align*}
\end{theorem}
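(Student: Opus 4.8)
The plan is to run the same pipeline as in the proof of Theorem~\ref{thm:nnrad}: express the augmented class $\cL_{\textup{relu-aug}}$ as the output set $O_{\augG}$ of a Lipschitz-augmented family of sequential computational graphs, establish a release-Lipschitzness guarantee for $\augG$ with release parameters $\kapparh{i}$ at the matrix nodes and $\kapparj{i}$ at the Jacobian nodes, invoke Theorem~\ref{thm:covering_graph} to reduce the covering number of $O_{\augG}$ to the covering numbers of the individual composition rules, and finish by optimizing over per-node resolutions and applying Dudley's entropy integral. Concretely, following Theorem~\ref{thm:nnrad}, associate $l_\gamma\circ\cF$ with the sequential graph on internal nodes $V_1,\dots,V_{2\depthnn-1}$ ($V_{2i-1}$ the matrix rule $h\mapsto W^{(i)}h$, $V_{2i}$ the relu $\phi$), apply Definition~\ref{def:aug} to add Jacobian nodes $J_1,\dots,J_{2\depthnn-1}$ ($J_{2i-1}=W^{(i)}$ constant, $J_{2i}=D\phi[\cdot]$), and take $\fR_O$ to be the singleton output rule that multiplies $l_\gamma-1$ by the hidden-layer indicators $\one[\le t^{(i)}]$, the new preactivation indicators $\one[\ge\gamma^{(i)}](\min_j|[\cdot]_j|)$, and the interlayer-Jacobian indicators $\one[\le\sigma_{j'\ot j}]$, then adds $1$. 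This reproduces $\cL_{\textup{relu-aug}}$, $O_{\augG}$ upper bounds $l_{\textup{0-1}}$, every $\fR_{V_{2i}},\fR_{J_{2i}},\fR_{J_{2i-1}},\fR_O$ is a singleton (log cover $0$), and $\fR_{V_{2i-1}}$, $D\fR_{V_{2i-1}}$ are covered by Lemma~\ref{lem:two_one_norm} and by $\ell_1$-ball covering exactly as in Theorem~\ref{thm:nnrad}.

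\textbf{The crux: release-Lipschitzness for relu.} The single obstruction is that $D\phi$ is not Lipschitz, so Claim~\ref{claim:all_lipschitz} and the $\ulip_{i'}$-dependent terms of Lemma~\ref{lem:lip} / Theorem~\ref{thm:lip} have no literal analogue. The observation that rescues the argument is that \emph{on the support of the preactivation indicators} $\one[\ge\gamma^{(i)}]$, i.e. wherever every coordinate of the $i$-th preactivation vector has absolute value $>\gamma^{(i)}/2$, the pattern $D\phi[F_{2i-1\ot 1}]$ is locally constant: any perturbation of a released node value that moves the preactivation by less than $\gamma^{(i)}/2$ in $\ell_\infty$ flips no sign, hence leaves every $Q_{j'\ot j}$ (a product of such matrices) unchanged. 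Thus in the telescoping decomposition underlying Lemma~\ref{lem:lip}, every $\Delta$-term arising from a Jacobian indicator vanishes for small perturbations on the support, so that argument can be run with all $\ulip_{2i}$ effectively set to $0$, which kills the $\bar\sigma_\phi$ contributions present in $\kappah{i}$. In exchange, the preactivation indicators feed into the Lipschitzness of the output rule: $v\mapsto\one[\ge\gamma^{(i)}](\min_j|[v]_j|)$ is $(2/\gamma^{(i)})$-Lipschitz (the $\min$-of-absolute-values map is $1$-Lipschitz in $\ell_2$), so in the notation of Theorem~\ref{thm:lip} one takes $c_{2i-1}=2/\gamma^{(i)}$ for $i<\depthnn$ (and $c_{2\depthnn-1}=1/\gamma$, $c_{2i}=1/t^{(i)}$); propagating these through the layer-to-layer Jacobians via the first sum of $\laug_{V_i}$ produces precisely the $\sigma_{2i'-1\ot 2i}/\gamma^{(i')}$ terms appearing in $\kapparh{i}$ in \eqref{eq:kapparelu}. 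Globally the output is Lipschitz in each released node value because along any segment the activation pattern can change only where some preactivation passes through $0$, and there the matching $\one[\ge\gamma^{(i)}]$ equals $0$ and collapses the output to the constant $1$, shielding the discontinuity with one-sided slopes bounded by $2\sigma_{\cdot\ot\cdot}/\gamma^{(i)}$. Carrying out this case analysis inside the proof of Lemma~\ref{lem:lip}, and noting condition~2 of Theorem~\ref{thm:covering_graph} still holds via the $\one[\le t^{(i)}]$ and $\one[\le\sigma_{j'\ot j}]$ collapses, gives that $\augG$ is release-Lipschitz with $\laug_{V_{2i-1}}=\kapparh{i}$ and $\laug_{J_{2i-1}}=\kapparj{i}$ (the lower-order $1/\poly(\depthnn)$ slack being absorbed exactly as in the smooth case).

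\textbf{Finishing.} With release-Lipschitzness established, Theorem~\ref{thm:covering_graph} gives $\log\cover(\sum_i(\kapparh{i}\epsilon_{V_{2i-1}}+\kapparj{i}\epsilon_{J_{2i-1}}),O_{\augG},s_\cI)\le\sum_i\tilde O\!\big((a^{(i)}t^{(i-1)})^2\epsilon_{V_{2i-1}}^{-2}+(b^{(i)})^2\epsilon_{J_{2i-1}}^{-2}\big)$. Choosing $\epsilon_{V_{2i-1}},\epsilon_{J_{2i-1}}$ proportionally (as in Theorem~\ref{thm:nnrad}) to balance the terms at fixed total resolution $\epsilon$, and setting $\beta^\star\triangleq\big(\sum_i(\kapparh{i}a^{(i)}t^{(i-1)})^{2/3}+(\kapparj{i}b^{(i)})^{2/3}\big)^{3/2}$, yields $\log\cover(\epsilon,O_{\augG},s_\cI)\le\tilde O(\epsilon^{-2}(\beta^\star)^2)$, and Dudley's entropy theorem then gives $\rad(\cL_{\textup{relu-aug}})=\tilde O(\beta^\star/\sqrt n)$, which is the claimed bound.

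\textbf{Main obstacle.} The delicate step is entirely the one in the second paragraph: making rigorous the claim that the preactivation indicators ``shield'' the discontinuities of the relu Jacobians so the augmented output is genuinely Lipschitz in each released node value --- including across the transition bands of the soft indicators $\one[\le\sigma_{j'\ot j}]$ and $\one[\ge\gamma^{(i)}]$ where several nonsmooth effects overlap. This requires a partition-into-segments argument that \emph{replaces}, rather than reuses, Claim~\ref{claim:all_lipschitz} and the $\ulip$-dependent steps of Lemma~\ref{lem:lip}; once it is in place, the remainder is a routine transcription of the smooth-activation proof.
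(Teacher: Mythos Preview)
Your proposal is correct and follows essentially the same route as the paper's proof sketch: set up the same augmented computational graph, observe that the preactivation indicators $\one[\ge\gamma^{(i)}]$ make the Jacobian-indicator terms locally constant (so the $\ulip$-contributions vanish) while contributing $O(1/\gamma^{(i)})$-Lipschitzness to the output rule, yielding release-Lipschitz parameters $\kapparh{i},\kapparj{i}$, and then finish via Theorem~\ref{thm:covering_graph} and Dudley exactly as in Theorem~\ref{thm:nnrad}. One small slip: you list $\fR_{J_{2i-1}}$ among the singleton classes, but it is not---it is $D\fR_{V_{2i-1}}=\{h\mapsto W\}$ parametrized by $W$, and indeed you correctly cover it via the $\ell_1$-ball argument a few lines later (this is where the $b^{(i)}$ and $\kapparj{i}$ terms come from).
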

Note the differences with Theorem~\ref{thm:nnrad}: the augmented loss class $\cL_{\textup{relu-aug}}$ now includes the additional indicators $\one[\ge \gamma^{(i)}](\min_j |[F_{2i - 1 \ot 1}]_j|)$, and we use the Lipschitz constants $\kapparh{i}, \kapparj{i}$ defined in Theorem~\ref{thm:relunet}.
\begin{proof}[Proof sketch]
	As in the proof of Theorem~\ref{thm:nnrad}, associate the loss class $\cL_{\textup{relu-aug}}$ with a family $\augG$ of computation graphs on internal nodes $V_1, \ldots, V_{2r - 1}, J_1, \ldots, J_{2r - 1}$ as follows: define the graph structure to be identical to the Lipschitz augmentation of a sequential computation graph family (Figure~\figaugmentation) and define the composition rules
	\begin{align*}
	\fR_{V_{2i}} &= \{\phi\}\\
	\fR_{V_{2i - 1}} &= \{h \mapsto Wh : \|W^\top - {A^{(i)}}^\top\|_{2, 1} \le a^{(i)}, \|W - {B^{(i)}}\|_{1, 1} \le b^{(i)}, \opnorm{W} \le \sigma^{(i)}\}
	\end{align*}
	Assign to the $J_i$ nodes composition rule $R_{J_i} = DR_{V_i}$, and finally, assign to the output node $O$ the composition rule
	\begin{align*}
	R_O(x, v_1, \ldots, v_{2r - 1}, D_1, \ldots, D_{2r - 1}) \triangleq \\(l_\gamma(v_{2r - 1}) - 1) \prod_{i = 1}^{r - 1}\one[\le t^{(i)}] (\|v_{2i}\|) \one[\ge \gamma^{(i)}] (\min_j |[v_{2i - 1}]_j|) \prod_{1 \le j \le j' \le 2r - 1} \one[\le \sigma_{j' \ot j}](\opnorm{D_{j'} \cdots D_j}) + 1
	\end{align*}
	The resulting family of computation graphs will compute $\cL_{\textup{relu-aug}}$. Now we claim that $\augG$ is $\kapparh{i}$-release-Lipschitz in nodes $V_{2i - 1}$ and $\kapparj{i}$-release-Lipschitz in nodes $J_{2i - 1}$. (Note that the Lipschitzness of nodes $V_{2i}, J_{2i}$ will not matter because the associated function classes and singletons and therefore have a log covering number of 0 anyways). 
	
	The argument for the $\kapparj{i}$-release-Lipschitzness of $J_{2i - 1}$ follows analogously to the argument of Lemma~\ref{lem:QJ_i_lipschitz} and Theorem~\ref{thm:nnrad}. 
	
	To see the $\kapparh{i}$-release-Lipschitzness of $V_{2i - 1}$, we first note that we can account for the instantaneous change in the graph output given a change to $V_{2i - 1}$ as a sum of the following: 1) the change in $l_\gamma(V_{2r - 1}) - 1$ multiplied by the other indicators, 2) the change in the term $\one[\le t^{(i')}](\|V_{2i}\|) \one[\ge \gamma^{(i)}](\min_j |[V_{2i - 1}]_j|)$ multiplied by the other indicators, and 3) the change in $\one[\le \sigma_{j' \ot j}](\opnorm{J_{j'} \cdots J_j})$ multiplied by the other indicators. The term 1) can be computed as $\frac{\sigma_{2r - 1 \ot 2i}}{\gamma}$, term 2) can be accounted for by $\frac{\sigma_{2i' \ot 2i}}{t^{(i')}} + \frac{\sigma_{2i' - 1 \ot 2i}}{\gamma^{(i')}}$, and finally the term 3) is 0 because as relu is piecewise-linear, the instantaneous change in the Jacobian is 0 if all preactivations are bounded away from 0, and in the case that the preactivations are not bounded away from 0, the indicator $\one[\ge \gamma^{(i)}](\min_j |[V_{2i - 1}]_j|)$ takes value 0. The same steps as Lemma~\ref{lem:lip} can be used to formalize this argument.
	
	Finally, to conclude the desired Rademacher complexity bounds given the release-Lipschitzness, we apply the same reasoning as in Theorem~\ref{thm:nnrad}.
	\end{proof}

\section{Additional Experimental Details}
\subsection{Implementation Details for Jacobian Regularizer}
\label{sec:exp_details}
For all settings, we train for 200 epochs with learning rate decay by a factor of 0.2 at epochs 60, 120, and 150. We additionally tuned the value of $\lambda$ from values $\{0.1, 0.05, 0.01\}$ for each setting: for the experiments displayed in Figure~\ref{fig:jreg}, we used the following values: 
\begin{enumerate}
	\item Low learning rate: $\lambda = 0.1$
	\item No data augmentation: $\lambda = 0.1$
	\item No BatchNorm: $\lambda = 0.05$
\end{enumerate}

For all other hyperparameters, we use the defaults in the PyTorch WideResNet implementation: \url{https://github.com/xternalz/WideResNet-pytorch}, and we base our code off of this implementation. We report results from a single run as the improvement with Jacobian regularization is statistically significant. We train on a single NVIDIA TitanXp GPU.

\subsection{Empirical Scaling of our Complexity Measure with Depth}
\label{sec:empirical_comparison}
In this section, we empirically demonstrate that the leading term of our bounds can exhibit better scaling in depth than prior work. 

\begin{figure}[h]
	\centering
	\includegraphics[width=0.4\textwidth]{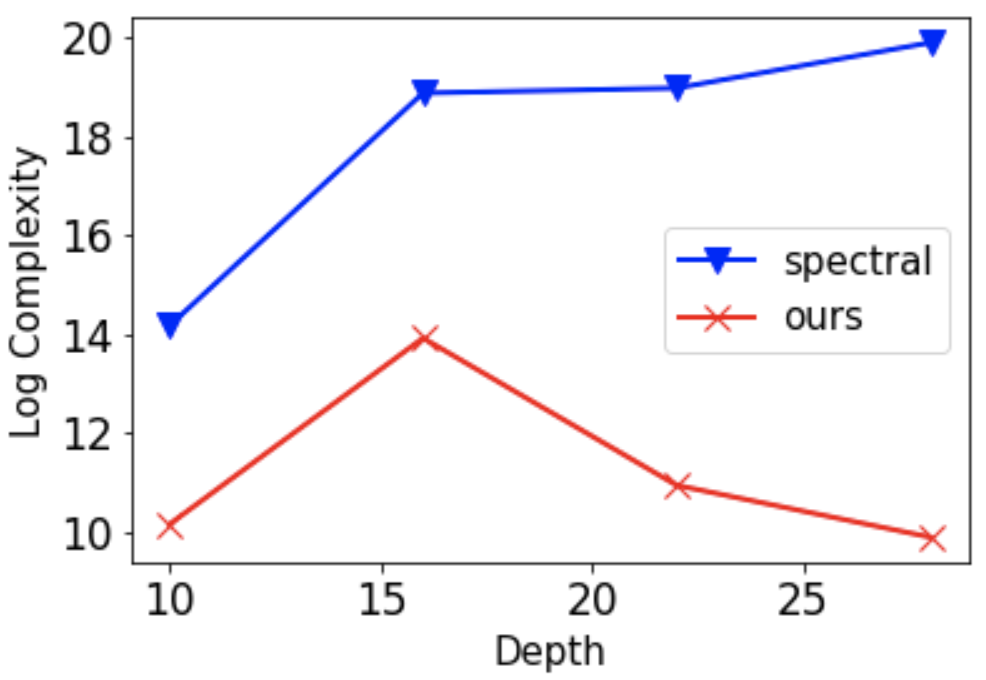}	
	\caption{Log leading terms for spectral vs. our bound on WideResNet trained on CIFAR10 using different depths.} \label{fig:bounds}
\end{figure}

We compute leading terms of our bound: $\frac{\sum_{i} \max_{x \in P_n} \|h^{(i)}(x)\|_2 \max_{x \in P_n} \opnorm{J^{(i)}(x)\|}}{\gamma}$, where $i$ ranges over the layers, $h^{(i)}$, $J^{(i)}$ denote the $i$-th hidden layer and Jacobian of the output with respect to the $i$-th hidden layer, respectively, and $\gamma$ denotes the smallest positive margin on the training dataset. We compare this quantity with that of the bound of~\citep{bartlett2017spectrally}: $\prod_i \opnorm{W^{(i)}}/\gamma$. In Figure~\ref{fig:bounds}, we plot this comparison for WideResNet\footnote{Our bound as stated in the paper technically does not apply to ResNet because the skip connections complicate the Lipschitz augmentation step. This can be remedied with a slight modification to our augmentation step, which we omit for simplicity.} models of depths 10, 16, 22, 28 trained on CIFAR10. For all models, we remove data augmentation to ensure that our models fit the training data perfectly. We train each model for 50 epochs, which is sufficient for perfectly fitting the training data, and start from an initial learning rate of 0.1 which we decrease by a factor of 10 at epoch 30. All other parameters are set to the same as their defaults in the PyTorch WideResNet implementation: \url{https://github.com/xternalz/WideResNet-pytorch}. We plot the final complexity measures computed on a single model. We note that our models are trained with Batchnorm. At test time, these Batchnorm layers compute affine transformations, so we compute the bound by merging these transformations with the adjacent linear layer. 

Figure~\ref{fig:bounds} demonstrates that our complexity measure can be much lower than the spectral complexity. Furthermore, in Figure~\ref{fig:bounds}, our complexity measure appears to scale well with depth for WideResNet models. 

\section{Toolbox}

\begin{claim}
	Consider the function $u : [0, 1] \times [0, 1] \mapsto \R$ defined as follows: $u(x_1, x_2) = (x_1 - 1)x_2 + 1$. Then the following statements hold:
	\begin{enumerate}
		\item The function $u$ outputs values in $[0, 1]$. 
		\item $u(x_1, x_2) \ge x_1$. 
		\item $u(u(x_1, x_2), x_3) = u(x_1, x_2x_3)$. 
	\end{enumerate}
	\label{claim:indicator_stack}
\end{claim}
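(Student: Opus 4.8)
This claim is a routine algebraic verification; the plan is simply to rewrite $u$ in a convenient factored form and check each of the three statements directly. The key observation is that $u(x_1,x_2) = (x_1-1)x_2 + 1 = 1 - x_2(1-x_1)$, and that both $x_2$ and $1-x_1$ are nonnegative and at most $1$ under the hypothesis $x_1,x_2 \in [0,1]$.

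For statement (1), I would use the identity $u(x_1,x_2) = 1 - x_2(1-x_1)$. Since $x_2 \ge 0$ and $1-x_1 \ge 0$, the product $x_2(1-x_1)$ is nonnegative, giving $u(x_1,x_2) \le 1$; since $x_2 \le 1$ and $1-x_1 \le 1$, the product is at most $1$, giving $u(x_1,x_2) \ge 0$. For statement (2), I would compute the difference directly: $u(x_1,x_2) - x_1 = (x_1-1)x_2 + 1 - x_1 = (x_1-1)(x_2-1)$, which is a product of two nonpositive numbers and hence nonnegative. For statement (3), expand both sides: the left side is $u(u(x_1,x_2),x_3) = (u(x_1,x_2)-1)x_3 + 1 = (x_1-1)x_2x_3 + 1$ since $u(x_1,x_2)-1 = (x_1-1)x_2$, and the right side is $u(x_1,x_2x_3) = (x_1-1)x_2x_3 + 1$; the two agree.

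There is no real obstacle here — each part is a one-line computation. The only thing to be mildly careful about is consistently using the factorization $u(x_1,x_2)-1 = (x_1-1)x_2$, which makes the associativity-type identity in (3) transparent and is also what underlies the telescoping / nested-indicator arguments this claim is invoked for elsewhere in the paper (e.g. in the proofs of Theorems~\ref{thm:lip} and~\ref{thm:sequential_covering}).
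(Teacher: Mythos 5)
Your proposal is correct and takes essentially the same approach as the paper's proof: a direct algebraic verification, with part (3) following from the factorization $u(x_1,x_2)-1=(x_1-1)x_2$. The only cosmetic difference is that the paper establishes the lower bound in (1) by first proving (2) (i.e., $u(x_1,x_2)\ge x_1\ge 0$), whereas you verify (1) and (2) independently via the factored forms $1-x_2(1-x_1)$ and $(x_1-1)(x_2-1)$; both are equivalent one-line computations.
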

\begin{proof}
	First, we note that $u(x_1, x_2) = x_1 x_2 + 1 - x_2 \le x_2 + 1 - x_2 = 1$. Furthermore, $u(x_1, x_2) \ge x_1 x_2 + x_1(1 - x_2) = x_1$, which completes the proof of statements 1 and 2. To prove the third statement, we note that $u(u(x_1, x_2), x_3) = (x_1 x_2 + 1 - x_2)x_3 + 1 - x_3 = x_1 x_2 x_3 + 1 - x_2 x_3 = u(x_1, x_2x_3)$. 
\end{proof}

\begin{claim}[Chain rule{~\cite{wiki:chain_rule}}]\label{claim:chain_rule}
	The Jacobian of a composition of a sequence of  functions $f_1,\dots, f_k$ satisfies 
	\begin{align}
	D f_{k\ot 1}(x) = Df_k(f_{(k-1)\ot 1}(x)) \cdot Df_{k-1}(f_{(k-2)\ot 1}(x)) \cdots Df_{2}(f_1(x)) \cdot Df_1(x)
	\end{align}
	where the $\cdot$ notations are standard matrix multiplication. For simplicity, we also write in the function form:
	\begin{align}
	D f_{k\ot 1} = (Df_k \circ f_{(k-1)\ot 1})\cdot (Df_{k-1} \circ f_{(k-2)\ot 1}) \cdots (Df_{2}\circ f_1) \cdot Df_1
	\end{align}
\end{claim}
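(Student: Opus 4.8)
The plan is to prove the pointwise identity by induction on the number $k$ of functions in the composition, bootstrapping from the classical chain rule for two differentiable maps, $D(g\circ f)(x) = Dg(f(x))\cdot Df(x)$, which I take as given (this is the content of the cited reference). The function-form statement will then follow as a purely notational rewrite of the pointwise one.

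For the base case $k=1$ there is nothing to do: $f_{1\ot 1}=f_1$, so $Df_{1\ot 1}(x)=Df_1(x)$, which matches the right-hand side (an empty product of the remaining factors). For the inductive step I would assume the identity for compositions of $k-1$ functions and use associativity of composition to write $f_{k\ot 1}=f_k\circ f_{(k-1)\ot 1}$. Applying the two-function chain rule at the point $x$ gives
\begin{align*}
Df_{k\ot 1}(x) = Df_k\big(f_{(k-1)\ot 1}(x)\big)\cdot Df_{(k-1)\ot 1}(x),
\end{align*}
and then expanding $Df_{(k-1)\ot 1}(x)$ via the inductive hypothesis produces exactly the claimed product $Df_k(f_{(k-1)\ot 1}(x))\cdot Df_{k-1}(f_{(k-2)\ot 1}(x))\cdots Df_2(f_1(x))\cdot Df_1(x)$, completing the induction.

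The only point requiring care is that each intermediate map must be differentiable so that the symbols $Df_{j\ot 1}$ are well-defined; this is immediate in the settings where the claim is invoked, since each $f_i$ is continuously differentiable on the relevant (compact) domain — compare Claim~\ref{claim:all_lipschitz} — and a composition of $C^1$ maps is $C^1$. Finally, the function-form identity is obtained by reading the pointwise equality as an equality of matrix-valued functions of $x$, noting that $Df_i(f_{(i-1)\ot 1}(x))$ is by definition the value at $x$ of the function $Df_i\circ f_{(i-1)\ot 1}$, so the product of pointwise Jacobians is the pointwise product of these functions. I do not expect any genuine obstacle here: the statement is a textbook fact, and the only ``work'' is the routine induction together with the bookkeeping of the composition indices $f_{j\ot i}$.
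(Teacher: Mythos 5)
Your proof is correct, and it is the standard argument: induction on $k$ starting from the two-map chain rule $D(g\circ f)(x) = Dg(f(x))\,Df(x)$, peeling off the outermost map via $f_{k\ot 1} = f_k\circ f_{(k-1)\ot 1}$. The paper itself does not prove this claim — it is stated as a known textbook fact with a citation — so there is no internal proof to compare against, but what you wrote is exactly the routine induction one would give, and the remark about differentiability of the intermediate maps being what makes the symbols well-defined is the right caveat.
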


\begin{claim}[Telescoping sum]\label{claim:telescope_product}
	Let $p_1,\dots, p_m$ and $q_1\dots q_m$ be two sequence of  functions from $\R^d$ to $\R$. Then, 
	{\small
	\begin{align}
	p_1p_2\cdot p_m - 			q_1q_2\cdot q_m = (p_1-q_1)p_2\cdots p_m + q_1(p_2-q_2)p_3\cdots p_m + \cdots+ q_1\cdots q_{m-1}(p_m-q_m)
	\end{align}}
\end{claim}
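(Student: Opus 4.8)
The plan is to prove this elementary identity by a telescoping argument (equivalently, by induction on $m$), all manipulations being pointwise in the argument $x \in \R^d$. First I would introduce, for $i = 1, \dots, m+1$, the ``hybrid'' products
\[
R_i \triangleq q_1 \cdots q_{i-1}\, p_i\, p_{i+1} \cdots p_m,
\]
with the convention that an empty product is the constant function $1$, so that $R_1 = p_1 p_2 \cdots p_m$ and $R_{m+1} = q_1 q_2 \cdots q_m$. The key observation is the pointwise identity
\[
R_i - R_{i+1} = q_1 \cdots q_{i-1}\,(p_i - q_i)\, p_{i+1} \cdots p_m,
\]
which holds because $R_i$ and $R_{i+1}$ share the common prefix factor $q_1 \cdots q_{i-1}$ and the common suffix factor $p_{i+1} \cdots p_m$, differing only in the middle factor ($p_i$ versus $q_i$).

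Summing over $i = 1, \dots, m$ then makes the left side telescope: $\sum_{i=1}^m (R_i - R_{i+1}) = R_1 - R_{m+1} = p_1 \cdots p_m - q_1 \cdots q_m$. The right side of this sum is precisely $(p_1 - q_1) p_2 \cdots p_m + q_1 (p_2 - q_2) p_3 \cdots p_m + \cdots + q_1 \cdots q_{m-1}(p_m - q_m)$, which is the claimed expression, so the identity follows.

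Alternatively, one could induct on $m$: the base case $m = 1$ reads $p_1 - q_1 = p_1 - q_1$, and the inductive step follows from the decomposition $p_1 \cdots p_{m+1} - q_1 \cdots q_{m+1} = p_{m+1}\,(p_1 \cdots p_m - q_1 \cdots q_m) + q_1 \cdots q_m\,(p_{m+1} - q_{m+1})$ combined with the inductive hypothesis applied to the first summand. There is no genuine obstacle here — the statement is a purely algebraic identity of functions, and the only point requiring care is the bookkeeping of empty products at the two ends of the telescoping sum.
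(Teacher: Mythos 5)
Your proof is correct; the telescoping argument via the hybrid products $R_i$ is the standard one, and the paper itself states this claim in its toolbox without supplying a proof, so there is nothing to compare against.
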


\begin{claim}[Bounding function differences]
	\label{claim:finite_change}
	Let $f : \cD \rightarrow \cD'$, and consider the total derivative $Df$ operator mapping $\cD$ to a linear operator between normed spaces $\cD$ to $\cD'$. Suppose that $Df[x]$ is $\kappa$-Lipschitz in $x$, in the sense that $\opnorm{Df[x] - Df[x + \nu]} \le \kappa \|\nu\|$, where $\opnorm{\cdot}$ is the operator norm induced by $\cD$ and $\cD'$. Then 
	\begin{align}
	\|f(x) - f(x + \nu)\| \le (\opnorm{Df[x]} + \frac{\kappa}{2} \|\nu \|)\|\nu \| \label{eq:finite_change-1}
	\end{align}
	Furthermore, 
	\begin{align}
	\|f(x) - f(x + \nu)\| \one[\le \tau_f](\opnorm{Df[x]}) \le (2\tau_f + \frac{\bar{\tau}}{2}\|\nu\|)\|\nu\| \label{eq:finite_change-2}
	\end{align}
\end{claim}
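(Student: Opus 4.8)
The plan is to prove~\eqref{eq:finite_change-1} by the fundamental theorem of calculus along the segment from $x$ to $x+\nu$, and then obtain~\eqref{eq:finite_change-2} from~\eqref{eq:finite_change-1} by a trivial case analysis on the softened indicator.

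For the first bound, I would write $f(x+\nu) - f(x) = \int_0^1 Df[x + t\nu](\nu)\,dt$, which is valid since the Lipschitzness hypothesis on $Df$ makes $Df$ continuous and hence $f$ continuously differentiable (and I assume, as in the applications, that the segment $\{x + t\nu : t \in [0,1]\}$ lies in the domain $\cD$). Taking norms and moving the norm inside the integral gives $\|f(x+\nu) - f(x)\| \le \int_0^1 \opnorm{Df[x+t\nu]}\,\|\nu\|\,dt$. I then bound $\opnorm{Df[x+t\nu]} \le \opnorm{Df[x]} + \opnorm{Df[x+t\nu] - Df[x]} \le \opnorm{Df[x]} + \kappa t \|\nu\|$ by the triangle inequality and the Lipschitz hypothesis. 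Integrating $t$ from $0$ to $1$ produces the factor $\opnorm{Df[x]} + \tfrac{\kappa}{2}\|\nu\|$, which is exactly~\eqref{eq:finite_change-1} (the symbol $\bar{\tau}$ appearing in the statement denotes the same Lipschitz constant $\kappa$ of $Df$).

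For the second bound, I would split into two cases according to the value of $\one[\le \tau_f](\opnorm{Df[x]})$. If $\opnorm{Df[x]} \ge 2\tau_f$, then by the definition of the softened indicator $\one[\le \tau_f](\opnorm{Df[x]}) = 0$, so the left-hand side of~\eqref{eq:finite_change-2} is zero and the inequality holds trivially. Otherwise $\opnorm{Df[x]} < 2\tau_f$ and $\one[\le \tau_f](\opnorm{Df[x]}) \le 1$; applying~\eqref{eq:finite_change-1} and replacing $\opnorm{Df[x]}$ by the upper bound $2\tau_f$ yields $\|f(x) - f(x+\nu)\|\,\one[\le \tau_f](\opnorm{Df[x]}) \le \|f(x) - f(x+\nu)\| \le (2\tau_f + \tfrac{\bar{\tau}}{2}\|\nu\|)\|\nu\|$, which is~\eqref{eq:finite_change-2}.

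I do not expect a real obstacle here; the only points needing a word of care are (i) justifying the integral representation of $f(x+\nu)-f(x)$, which follows from $Df$ being Lipschitz (hence continuous) so that $f \in C^1$, and (ii) ensuring the line segment between $x$ and $x+\nu$ stays inside the domain, which holds when the domain is convex, as in the applications where $\cD_0$ is a compact (convex) set, or can be reduced to by subdivision exactly as in the proof of Lemma~\ref{lem:lip}.
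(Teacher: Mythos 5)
Your proof is correct and follows essentially the same route as the paper's: the fundamental theorem of calculus to write $f(x+\nu)-f(x)$ as an integral of $Df$ along the segment, the triangle inequality plus Lipschitzness of $Df$ to bound $\opnorm{Df[x+t\nu]}$ by $\opnorm{Df[x]} + \kappa t\|\nu\|$, integration to get the $\tfrac{\kappa}{2}$ factor, and a two-case split on the softened indicator for~\eqref{eq:finite_change-2}. You also correctly note that the $\bar{\tau}$ in~\eqref{eq:finite_change-2} should be read as the Lipschitz constant $\kappa$, a small inconsistency in the paper's statement.
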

\begin{proof}
	We write $f(x + \nu) - f(x) = \left(\int_{t = 0}^1 Df[x + t\nu]dt \right) \nu$. Now we note that 
	\begin{align}
	\opnorm{\int_{t = 0}^1Df[x + t\nu]dt} &\le \int_{t = 0}^1 \opnorm{Df[x + t\nu]}dt \tag{by triangle inequality}\\ 
	&\le \int_{t = 0}^1 (\opnorm{Df[x]} + t \kappa \|\nu \|)dt \tag {by Lipschitzness of $Df$}\\
	&\le \opnorm{Df[x]} + \frac{\kappa}{2} \|\nu\| \label{eq:intnormbound}
	\end{align}
	Thus, 
	\begin{align}
	\|f(x + \nu) - f(x)\| &\le \opnorm{\int_{t = 0}^1Df[x + t\nu]dt} \|\nu\| \nonumber \\
	&\le \left(\opnorm{Df[x]} + \frac{\kappa}{2} \|\nu\|\right) \|\nu\| \tag{by~\eqref{eq:intnormbound}}
	\end{align}
	which proves~\eqref{eq:finite_change-1}. 
	
	To prove~\eqref{eq:finite_change-2}, we consider two cases.first, if $\opnorm{Df[x]} > 2\tau_f$, then $\one[\le \tau_f](\opnorm{Df[x]}) = 0$ so~\eqref{eq:finite_change-2} immediately holds. Otherwise, if $\opnorm{Df[x]} \le 2\tau_f$, we can plug this into \eqref{eq:finite_change-1} to obtain~\eqref{eq:finite_change-2}, as desired.
\end{proof}

\end{document}